\DeclareMathAlphabet{\mathcal}{OMS}{cmsy}{m}{n}
\newtheorem{definition} {Definition}
\newtheorem{lemma}      {Lemma}
\newtheorem{theorem}    {Theorem}
\newtheorem{example}{Example} 
\title[Reasoning on Multi-Relational Contextual Hierarchies via ASP with Algebraic Measures]{Reasoning 
on Multi-Relational Contextual Hierarchies via Answer Set Programming with Algebraic Measures}
  \author[L. Bozzato, T. Eiter, R. Kiesel]
         {LORIS BOZZATO\\
         Fondazione Bruno Kessler, Via Sommarive 18, 38123 Trento, Italy\\
         \email{bozzato@fbk.eu}
				 \and 
				 THOMAS EITER \qquad\quad RAFAEL KIESEL\\
				 Technische Universit\"{a}t Wien, Favoritenstra\ss e 9-11, A-1040 Vienna, Austria\\          				 \email{(thomas.eiter|rafael.kiesel)@tuwien.ac.at}
				}
\def\isa{\sqsubseteq}
\def\CKB{\mathfrak{K}}
\def\I{\mathcal{I}}
\newcommand{\Pair}[2]{\left\langle#1,#2\right\rangle}
\newcommand{\Triple}[3]{\left\langle#1,#2,#3\right\rangle}
\newcommand{\vc}[1]{\mathbf{#1}}
\newcommand{\ee}{{\vc{e}}}
\newcommand{\ff}{{\vc f}}
\newcommand{\SROIQ}{\mathcal{SROIQ}}
\newcommand{\NI}{\mathrm{NI}}
\newcommand{\NR}{\mathrm{NR}}
\newcommand{\NC}{\mathrm{NC}}
\newcommand{\IC}{\mathfrak{I}}
\newcommand{\K}{\mathcal{K}}
\newcommand{\T}{\mathcal{T}}
\newcommand{\ov}[1]{\overline{#1}}
\newcommand{\stru}[1]{\langle #1 \rangle}
\newcommand{\non}{\neg}
\newcommand{\subs}{\sqsubseteq}
\newcommand{\Acal}{{\cal A}}
\newcommand{\Ecal}{{\cal E}}
\newcommand{\Ical}{{\cal I}}
\newcommand{\Lcal}{{\cal L}}
\newcommand{\Rcal}{{\cal R}}
\newcommand{\cov}[1]{\preceq}
\newcommand{\CAS}{\mathit{CAS}}
\newcommand{\OVR}{\mathit{OVR}}
\newcommand{\casmap}{\chi}
\newcommand{\mi}[1]{\mathit{#1}}
\newcommand{\nop}[1]{}
\newcommand{\SROIQrl}{\mathcal{SROIQ}\text{-RL}}
\newcommand{\KB}{\mathrm{K}} 
\newcommand{\mlc}{\ml{c}}
\newcommand{\smlc}{\sml{c}}
\newcommand{\subClass}{{\tt subClass}}
\newcommand{\ptop}{{\tt top}}
\newcommand{\subEx}{{\tt subEx}}
\newcommand{\subRole}{{\tt subRole}}
\newcommand{\supEx}{{\tt supEx}}
\newcommand{\pbot}{{\tt bot}}
\newcommand{\subConj}{{\tt subConj}}
\newcommand{\subRChain}{{\tt subRChain}}
\newcommand{\pDis}{{\tt dis}}
\newcommand{\pInv}{{\tt inv}}
\newcommand{\pIrr}{{\tt irr}}
\newcommand{\nom}{{\tt nom}}
\newcommand{\cls}{{\tt cls}}
\newcommand{\rol}{{\tt rol}}
\newcommand{\peq}{{\tt eq}}
\newcommand{\subEval}{{\tt subEval}}
\newcommand{\subEvalR}{{\tt subEvalR}}
\newcommand{\supForall}{{\tt supForall}}
\newcommand{\supLeqOne}{{\tt supLeqOne}}
\newcommand{\eval}{\textsl{eval}}
\newcommand{\N}{\boldsymbol{\mathsf{N}}}
\newcommand{\ml}[1]{\mathsf{#1}} 
\newcommand{\sml}[1]{{\mbox{\small\ensuremath{\ml{#1}}}}} 
\newcommand{\default}{{\mathrm D}}
\newcommand{\pcontext}{{\tt context}}
\newcommand{\prelation}{{\tt relation}}
\newcommand{\pprec}{{\tt prec}}
\newcommand{\ppreceq}{{\tt preceq}}
\newcommand{\ovr}{{\tt ovr}}
\newcommand{\naf}{\mathop{\tt not}}
\newcommand{\grd}{\mathit{grnd}}
\newcommand{\Head}{\mathit{H}}
\newcommand{\Body}{\mathit{B}}
\newcommand{\rif}{\leftarrow}
\newcommand{\insta}{{\tt insta}}
\newcommand{\instd}{{\tt instd}}
\newcommand{\triplea}{{\tt triplea}}
\newcommand{\tripled}{{\tt tripled}}
\newcommand{\unsat}{{\tt unsat}}
\newcommand{\test}{{\tt test}}
\newcommand{\testf}{{\tt test\_fails}}
\newcommand{\nlit}{{\tt nlit}}
\newcommand{\nrel}{{\tt nrel}}
\newcommand{\main}{\ml{main}}
\newcommand{\smlmain}{\sml{main}}
\newcommand{\definst}{{\tt def\_insta}}
\newcommand{\defsubs}{{\tt def\_subclass}}
\newcommand{\defsubcnj}{{\tt def\_subcnj}}
\newcommand{\defsubex}{{\tt def\_subex}}
\newcommand{\defsupex}{{\tt def\_supex}}
\newcommand{\defsupforall}{{\tt def\_supforall}}
\newcommand{\defsupleqone}{{\tt def\_supleqone}}
\newcommand{\defsubr}{{\tt def\_subr}}
\newcommand{\defsubrc}{{\tt def\_subrc}}
\newcommand{\defdis}{{\tt def\_dis}}
\newcommand{\definv}{{\tt def\_inv}}
\newcommand{\defirr}{{\tt def\_irr}}
\newcommand{\preceqex}{{\tt preceq\_except}}
\newcommand{\possovr}{{\tt p\_ovr}}
\newcommand{\indivi}{{\tt ind}}
\newcommand{\SROIQrld}{\mathcal{SROIQ}\text{-RLD}}
\def\qed{$\Box$}
\def\endproof{\ifhmode\nobreak\qed\par\fi\medskip}
\newcommand{\EndEx}{\mbox{}~\hfill$\Diamond$} 
\newcommand{\comment}[1]{{#1}}
\newcommand{\local}{local}
\newcommand{\branch}{br}
\newcommand{\lf}[1]{\mbox{\texttt{#1}}}
\newcommand{\Cl}{\mathfrak{C}}
\newcommand{\prect}{\prec_t}
\newcommand{\precc}{\prec_c}
\def\Lcals{\Lcal_{\Sigma}}
\def\Lcalsn{\Lcal_{\Sigma,\N}}
\def\Ic{\I(\mlc)}
\def\Icp{\I(\mlc')}
\def\ICAS{\IC_{\CAS}}
\newcommand{\CKRew}{CKR\emph{ew}\xspace}
\newcommand{\DLlite}{\textsl{DL-Lite}}
\newcommand{\zero}{\ensuremath{\bm{0}}} 
\newcommand{\one}{\ensuremath{\bm{1}}}
\newcommand{\splus}{\ensuremath{\bm{+}}}
\newcommand{\stimes}{\ensuremath{*}}
\newcommand{\bplus}{\ensuremath{\textstyle\Sigma}}
\newcommand{\btimes}{\ensuremath{\textstyle\Pi}}
\newcommand{\srzero}{\ensuremath{e_{\oplus}}}
\newcommand{\srone}{\ensuremath{e_{\otimes}}}
\newcommand{\srsplus}{\ensuremath{\oplus}}
\newcommand{\srstimes}{\ensuremath{\otimes}}
\newcommand{\srbplus}{\ensuremath{{\textstyle\bigoplus}}}
\newcounter{myenumctr}
\newenvironment{myenumerate}{\begin{list}{ {\bf(\arabic{myenumctr})}\ }{\usecounter{myenumctr}
\setlength{\topsep}{0pt}
\setlength{\leftmargin}{0pt}
\setlength{\itemsep}{0pt}
\setlength{\parsep}{0.15\baselineskip}
\setlength{\itemindent}{1.35\labelwidth}}}
{\end{list}}
\begin{document}

\label{firstpage}

\maketitle

  \begin{abstract}
       Dealing with context dependent knowledge has
	led to different formalizations of the notion of context.
        Among them is the Contextualized Knowledge Repository (CKR) framework,
	which is 
        rooted in description logics but links on the reasoning side
        strongly to logic programs and Answer Set Programming (ASP) in
        particular.
	%
        The CKR framework caters for reasoning with defeasible axioms and exceptions
        in contexts, which was extended to knowledge inheritance across contexts
	in a coverage (specificity) hierarchy. However, the approach
        supports only this single type of contextual relation and the
        reasoning procedures work only for restricted hierarchies,
        due to non-trivial issues with model preference under
        exceptions. In this paper, we overcome these limitations
	and present a generalization of CKR hierarchies 
        to multiple contextual relations, 
	along with their interpretation of defeasible axioms and
        preference. To support reasoning, we use ASP with algebraic measures,
        which is a recent extension of ASP with weighted formulas over
        semirings that allows one to associate quantities with
        interpretations depending on the truth values of propositional
        atoms. Notably, we show that for a relevant fragment  of  CKR
        hierarchies with multiple contextual relations, 
        query answering can be realized with the popular asprin
        framework. The algebraic measures approach is more
        powerful and enables e.g.\ reasoning with epistemic queries
        over CKRs, which opens  interesting perspectives for
        the use of quantitative ASP extensions in other applications.
				
				Under consideration for acceptance in Theory and Practice of Logic Programming (TPLP).%
  \end{abstract}

  \begin{keywords}
		Defeasible Knowledge, Description Logics, 
                ASP,
		Algebraic Measures, Justifiable Exceptions
  \end{keywords}

\section{Introduction}

Representing and reasoning with context dependent knowledge is a
fundamental theme in AI, with proposals dating back to the works of
\citeN{mccarthy-notes-on-formalizing-context-1993}
and
\citeN{giunchiglia-serafini-mlsystems-ai2004}.
It has
gained increasing attention for the Semantic Web as knowledge
resources must be interpreted with contextual information from
their metadata. Several approaches for contextual reasoning, most
based on description logics, were developed~\cite{
stra-etal-2010,klarman:13,serafini-homola-ckr-jws-2012}.

A rich framework among them are \emph{Contextualized Knowledge
  Repositories (CKR)}
\cite{serafini-homola-ckr-jws-2012}:
CKR knowledge bases (KBs) are 2-layered structures with a
\emph{global context}, which contains context-independent
\emph{global} knowledge and \emph{meta-knowledge} about the structure
of the KB, and \emph{local contexts} containing knowledge about
specific situations (e.g., a region in space, a site of an
organization).  Notably, the global knowledge is propagated to local
contexts, where inherited axioms may be \emph{defeasible}, meaning
that instances can be ``overridden'' on an exceptional basis
\cite{BozzatoES:18}. Reasoning from CKRs strongly links to logic
programming, as the KBs are over
a Horn-description logic and the working of defeasible axioms was
inspired by conflict handling in \emph{inheritance logic
programs}~\cite{BuccafurriFL:99}. Furthermore, answering instance and conjunctive
queries over a CKR is possible via a uniform ASP program that employs
a materialization calculus akin to the one by \citeN{Krotzsch:10}.

For modeling and analyzing complex scenarios where global regulations
(e.g.\ laws, environmental regulations, access control rules)
can be refined by more specific situations (e.g.\ time-bounded events,
geographical areas, groups of users),
the CKR model
was extended \cite{DBLP:conf/kr/BozzatoSE18}
to cater for defeasible axioms in local contexts and knowledge
inheritance across hierarchies, based on a \emph{coverage} contextual
relation~\cite{serafini-homola-ckr-jws-2012}.

This approach, however, is limited to reason only on hierarchies based
on this single type of contextual relation.  In practice, defeasible
inheritance may be necessary under different contextual
relations. For example, along a location hierarchy, we may prefer
axioms encoding regional laws overriding state-level regulations,
while preferring newer rules over older laws along a temporal
dimension.  
%
A further limitation is that even for a single coverage relation, it is
challenging to encode the induced preference relation over CKR interpretation
using ASP because the relation may not be transitive and
thus not a strict partial order, as assumed e.g.\ in the popular asprin
framework for preferences in ASP \cite{DBLP:conf/aaai/BrewkaD0S15}.
Instead, a specialized implementation for preferential reasoning was
introduced~\cite{BozzatoES:CONTEXT19}, which however
needs to consider all answer sets of a program to single out a
preferred CKR model.


In this paper, we overcome these limitations and make the following
contributions:

\begin{myenumerate}
\item We generalize single-relational CKRs to multi-relational CKRs, where
axioms are not defeasible in general but merely with regard to individual
relations of hierarchies. By a combination of preferences over the distinct individual relations, 
we obtain an overall preference over the models of a CKR. While
intuitive, the technical condition has pitfalls and needs care. 

\item We show how to model 
multi-relation CKRs in ASP. Specifically, we use to this end 
ASP with \emph{algebraic measures} \cite{EiterK:20}, which is  a
foundation to express many quantitative reasoning problems. Here, \emph{weighted
logic} formulas~\cite{DBLP:conf/icalp/DrosteG05}
\emph{measure} values associated with an
interpretation $\mathcal{I}$ by performing a
computation over a semiring, whose outcome depends on
the truth of the propositional variables in
$\mathcal{I}$. Such measures can be used for e.g.\ weighted model
counting, probabilistic reasoning and, as in our case,
preferential reasoning.

\item While asprin is a powerful tool for modeling preferences in ASP,
  it appears to be ill-suited for expressing multi-relational CKR. The
  reason are \eval-expressions in CKRs, which propagate predicate
  extensions from one local context to another. We show, however, that
  under a well-behaved use of such expressions according to a
  syntactic disconnectedness condition, multi-relational CKRs can be
  expressed in asprin. This enables us to 
	\comment{use the asprin solver} to evaluate preferences for CKRs, which is showcased in a prototype
  implementation.

\item Furthermore, ASP with algebraic measures opens the possibility
  of reasoning tasks for CKRs beyond asprin's capability, even in
  absence of \eval-expression. As examples we consider obtaining
  preferred CKR models by overall weight queries and epistemic
  reasoning, which for description logics is specifically needed in 
	aggregate queries \cite{calvanese2008aggregate}.
\end{myenumerate}
In conclusion, ASP extended with preferences or algebraic
computations is a valuable tool to express CKR extensions and 
reasoning on them, with a promising perspective for further research.


\section{Preliminaries}
\label{sec:prelim}

\smallskip\noindent
\textbf{Description Logics and $\SROIQrl$.} 
%
We follow the common 
presentation of \emph{description logics (DLs)}~\cite{dlhb}
and the definition of the logic $\SROIQ$~\cite{horrocks:2006}.

A \emph{DL vocabulary} 
$\Sigma$ consists of the mutually disjoint countably infinite
sets $\NC$ of \emph{atomic concepts},
$\NR$ of \emph{atomic roles}, and 
$\NI$ of \emph{individual constants}.
%
Complex \emph{concepts} are recursively defined as the smallest
sets containing all concepts that can be inductively constructed using
the operators of the considered DL language $\Lcals$.
\comment{A DL \emph{knowledge base} $\K=\Triple{\T}{\Rcal}{\Acal}$ consists of: a TBox $\T$ which
can contain general concept inclusion axioms $C \subs D$, 
where $C$ and $D$ are concepts; 
an RBox $\Rcal$ which contains role inclusion axioms $S \subs R$, 
where $S$ and $R$ are roles, and role properties axioms; 
and an ABox $\Acal$ which contains assertions of the forms 
$D(a)$, $R(a,b)$, 
where $a$ and $b$ are any individual constants.}

A \emph{DL interpretation} is a pair $\I=\stru{\Delta^\I,\cdot^\I}$ where $\Delta^\I$
is a non-empty set called \emph{domain} and $\cdot^\I$ is the \emph{interpretation
function} which provides the interpretation for language elements:
$a^\I \in \Delta^\I$, for $a \in \NI$;
$A^\I \subseteq \Delta^\I$, for $A \in \NC$; 
$R^\I \subseteq \Delta^\I\times\Delta^\I$, for $R \in \NR$. 
The interpretation of complex concepts and roles is defined by the evaluation 
of their DL operators (see the paper by~\citeN{horrocks:2006} for $\SROIQ$).
%
An interpretation $\I$ \emph{satisfies} an axiom 
$\phi$, denoted
$\I\models\phi$, if it verifies the respective semantic condition, in particular: 
for $\phi = D(a)$, $a^\I \in D^\I$;
for $\phi = R(a,b)$, $\stru{a^\I,b^\I} \in R^\I$;
for $\phi = C \subs D$, $C^\I \subseteq D^\I$ (resp. for role inclusions).
$\I$ is a \emph{model} of $\K$, denoted
$\I\models\K$, if it satisfies all axioms of $\K$.
We adopt w.l.o.g.\ the {\em standard name assumption (SNA)} 
in the DL setting, i.e., every element in $\I$ is reachable via a 
distinct constant.%
\comment{We denote by $\NI_S \subseteq \NI$ the set of all such
constants, called standard names, which are uniform for all
interpretations; see the papers by \citeN{DBLP:journals/ai/EiterILST08} and \citeN{BruijnET:08}
for more details.}

Most of the following definitions for 
simple CKR are independent from the DL used
as representation language inside contexts: however,
as in the paper by~\citeN{BozzatoES:18}, we take as reference language
a restriction of the $\SROIQ$ syntax called $\SROIQrl$
which corresponds to OWL-RL. 
We restrict 
as follows \emph{left-side concepts} $C$ and \emph{right-side concepts}\/ $D$: 
\begin{center}
  $C := A \;|\; 
         \{a\} \;|\;
         C \sqcap C  \;|\; 
         C \sqcup C \;|\;
         \exists R.C \;|\; 
         \exists R.\top         
\qquad
  D := A \;|\; 
         \non C \;|\;
          D \sqcap D  \;|\; 
         \exists R.\{a\} \;|\; 
         \forall R.D \;|\, 
         \leqslant n R.\top$
\end{center}
where $A\in \NC$, $R\in \NR$ and $n \in \{0,1\}$.  
$\SROIQrl$ TBox axioms can only take the form $C \subs D$, where
$C$ is a left-side and $D$ is a right-side  
or $E \equiv F$, where $E$ and $F$ are 
both left- and right-side concepts.
A $\SROIQrl$ RBox can contain 
role inclusions $R \isa S$ (with possibly left role composition),
role disjointness, irreflexivity, symmetry, asymmetry and transitivity.
$\SROIQrl$ ABox concept assertions can only be of form $D(a)$, where
$D$ is a right-side concept.
We remark that $\SROIQrl$ basically defines a restriction
of $\SROIQ$ to axioms that are expressible as Horn rules 
(cf.\ FO translation provided by~\citeN{BozzatoES:18}). 

\smallskip\noindent
\textbf{Normal Programs and Answer Sets.}
We 
use \emph{function-free normal (datalog) rules} with \emph{(default) negation} 
under answer sets semantics~\cite{gelf-lifs-91} and 
gather them in \emph{ASP programs}.
%
A normal (datalog) rule $r$ is an expression of the form:
\begin{equation}
\label{rule}
a \leftarrow b_1, \dots, b_k, \naf b_{k+1}, \dots, \naf b_{m},\quad
0\leq k \leq m,
\end{equation}
also written $\Head(r)\leftarrow \Body(r)$ where $a, b_{1}, \dots,
b_{m}$ are function-free FO-atoms and $\naf$ is negation as failure
(NAF).
We allow that $a$ is missing ({\em constraint}),
viewing $a$ as logical constant for falsity.
A (datalog) \emph{program} $P$ is a finite set of rules.
An atom (rule etc.) is \emph{ground}, if 
no variables occur in it. 
A \emph{fact} $H$ is a ground rule $r$ with $m=0$. 
The \emph{grounding}\/ of a rule $r$, $\grd(r)$, is the set of all
ground instances of $r$, and the \emph{grounding}\/ of a program $P$
is $\grd(P) = \bigcup_{r\in P} \grd(r)$.

For any program $P$, we denote by $U_P$ its Herbrand universe and by
$B_P$ its Herbrand base; an \emph{(Herbrand) interpretation} is any 
subset  $I \subseteq B_P$ of
$B_P$.
An atom $a$ is \emph{true} in $I$, denoted $I\models a$,
if $a \in I$.
Given a rule $r \in \grd(P)$,
we say that $\Body(r)$ is true in $I$, denoted $I\models \Body(r)$, if
(i) $I\models b$ for each $b$ in $\Body(r)$ 
and (ii) $I\not\models b$ for each $\naf b$ in $\Body(r)$.
A rule $r$ is \emph{satisfied} in $I$, denoted $I\models r$, if either 
$I\models \Head(r)$ or $I\not\models \Body(r)$.
An interpretation $I$ 
is a {\em model}\/ of $P$, denoted $I \models P$,
if $I\models r$ for each $r\in \grd(P)$;
moreover, $I$ is 
\emph{minimal}, 
if $I'\not\models P$ for each subset $I'\subset I$.
Furthermore, $I$ is an \emph{answer set} of $P$, if $I$ is a minimal
model of the  (Gelfond-Lifschitz)
\emph{reduct} $G_I(P)$ of $P$ w.r.t.\ $I$, which
results from $\grd(P)$ by removing
(i) every rule $r$ such that 
$I\models l$ for some $\naf l\in\Body(r)$, and
(ii) all formulas $\naf b$ from the remaining rules.
The set of answer sets of $P$ is denoted $\mathcal{AS}(P)$.
%
%

\smallskip\noindent
\textbf{Semirings and Weighted Logic.}
A \emph{semiring} $\mathcal{R} = (R, \oplus, \otimes, e_{\oplus},
e_{\otimes})$ is a set $R\neq\emptyset$ equipped with binary operations $\oplus$ and $\otimes$, called addition and multiplication, such that (i) $(R, \oplus)$ is a commutative monoid with identity element $e_{\oplus}$, (ii) $(R, \otimes)$ is a monoid with identity element $e_{\otimes}$, (iii) multiplication left and right distributes over addition, and (iv) multiplication by $e_{\oplus}$ annihilates $R$, i.e. $\forall r \in R: r\otimes e_{\oplus} = e_{\oplus} = e_{\oplus}\otimes r$.
Examples are the natural number semiring $\mathbb{N} = (\mathbb{N}, +, \cdot, 0, 1)$ with addition and multiplication, the powerset semiring $\mathcal{P}(A) = (2^{A}, \cup, \cap, \emptyset, A)$, with union and intersection, the Boolean semiring $\mathbb{B} = (\{\mathbf{t}, \mathbf{f}\},\vee, \wedge, \mathbf{f}, \mathbf{t})$, with disjunction and conjunction, and
the tropical semiring $\mathcal{R}_{{\rm trop}} = (\mathbb{Q}\cup\{\infty\}, \min, +, \infty, 0)$, with minimum and addition.

\emph{Weighted formulas} over a semiring $\mathcal{R}$ 
and \comment{an Herbrand base} $B$ allow us to assign an interpretation $I$ a semiring value, depending on the truth of propositional variables w.r.t. $I$. Their syntax is: 
\begin{align*}
    \alpha ::= k &\mid v \mid \neg v \mid \alpha \splus \alpha \mid \alpha \stimes \alpha
\end{align*}
where $k \in R$ and $v \in B$. 
The semantics $\llbracket \alpha \rrbracket_{\mathcal{R}}(I)$ of $\alpha$ over $\mathcal{R}$ w.r.t.\ $I$ is:
{\small%
\begin{align*}
    \llbracket k \rrbracket_{\mathcal{R}} (I) &= k \text{ for } k \in R &
    \llbracket \ell \rrbracket_{\mathcal{R}} (I) &= \left\{\begin{array}{cc}
        \srone & I \models \ell\\
        \srzero & \text{ otherwise.}
    \end{array} \right. \text{for } \ell \in \{v, \neg v\}\\
    \llbracket \alpha_1 \splus \alpha_2 \rrbracket_{\mathcal{R}} (I) &= \llbracket \alpha_1\rrbracket_{\mathcal{R}} (I) \srsplus \llbracket\alpha_2 \rrbracket_{\mathcal{R}} (I) &
    \llbracket \alpha_1 \stimes \alpha_2 \rrbracket_{\mathcal{R}} (I) &= \llbracket \alpha_1\rrbracket_{\mathcal{R}} (I) \srstimes \llbracket\alpha_2 \rrbracket_{\mathcal{R}} (I)
\end{align*}}%


\section{Multi-relational simple CKR}
\label{sec:CKR}

We generalize the definition of \emph{simple CKR (sCKR)}
introduced by~\citeANP{DBLP:conf/kr/BozzatoSE18}~\shortcite{DBLP:conf/kr/BozzatoSE18,BozzatoES:CONTEXT19}  
from single- to multi-relational contextual hierarchies.
As in the original formulation of CKR by~\citeANP{BozzatoES:18}~\shortcite{BozzatoES:18,BozzatoSerafini:13}
a simple CKR is still a two layered structure, but the upper layer is simply
a poset 
with multiple orderings, corresponding to 
different contextual relations.
Simple CKRs define a core fragment of CKR allowing us
to provide lean definitions on contextual hierarchies: the presented
results, however, can be easily generalized to the full CKR.

We provide definitions for multi-relational simple CKRs
with a general set of context relations and consider the case 
for 2-relational sCKR based on \emph{temporal} and \emph{coverage} relations.

\smallskip\noindent
\textbf{Syntax.}
Consider a nonempty set $\N \subseteq \NI$  of \emph{context names}.
A \emph{contextual relation} is any strict order 
$\prec_i \subseteq \N \times \N$
over contexts. 
We may use the non-strict relation $\mlc_1\preceq_i \mlc_2$ to indicate that 
either $\mlc_1 \prec_i \mlc_2$ 
or $\mlc_1$ and $\mlc_2$ are the same context.
%
We consider two contextual relations, namely \emph{coverage} $\precc$ and \emph{temporal precedence} $\prect$. Here, $\mlc_1 \precc \mlc_2$ (resp. $\mlc_1 \prect \mlc_2$) means that $\mlc_1$ is more specific (resp. newer) than $\mlc_2$. More specific means that $\mlc_1$ represents a portion of the world covered by the 
one referred to by $\mlc_2$, as in the paper by~\citeN{serafini-homola-ckr-jws-2012}. 
%
%
%
%
We generalize the definition of defeasible axiom w.r.t. contextual relations:

\begin{definition}[r-defeasible axiom]
Given a set $\Rcal$ of contextual relations over $\N$ and a description language $\Lcals$,
an \emph{r-defeasible axiom} is any expression of the form
$\default_r(\alpha)$, where $\alpha$ is an axiom of $\Lcals$
and $\prec_r \in \Rcal$.
%
\end{definition}
Thus, we identify coverage-defeasible axioms as $\default_c(\alpha)$ and
temporal-defeasible axioms as $\default_t(\alpha)$.
%
We allow for the use of r-defeasible axioms in the local language
of contexts:

\begin{definition}[contextual language]
Given a set of context names $\N$,
for every description language $\Lcals$
we define $\Lcalsn$ as the extension of
$\Lcals$ where:
(i) $\Lcalsn$ contains the set of r-defeasible axioms in $\Lcals$;
(ii) $\eval(X,\mlc)$ is a concept (resp.\ role) of $\Lcalsn$ if
$X$ is a concept (resp.\ role) of $\Lcals$ and $\mlc \in \N$. 
\end{definition}
%
%
Using these definitions, multi-relational simple CKRs 
are defined as follows:	

\begin{definition}[multi-relational simple CKR]
A \emph{multi-relational simple CKR (sCKR)} over
$\Sigma$ and $\N$ is a structure $\CKB = \stru{\Cl, \KB_{\N}}$ where:

\begin{itemize}
  \item 
     $\Cl$ is a structure $(\N, \prec_1, \dots, \prec_m)$
		 where each $\prec_i$ is a contextual relation over
                 $\N$, and
  \item
    $\KB_{\N} = \{\KB_\mlc\}_{\mlc \in \N}$ for each context name $\sml{c} \in \N$,
    $\KB_\mlc$ is a DL knowledge base over $\Lcalsn$. 
  \end{itemize}  
\end{definition}
%
A sCKR that combines temporal and coverage orderings can be defined by 
$\Cl = (\N, \prec_t, \prec_c)$.
For simplicity, we assume that the priority for the combination of orderings is defined by the 
linear order in which they appear in 
$\Cl$: 
in the case above, we prioritize 
$\prec_t$ over 
$\prec_c$. 

\begin{example}
\label{ex:multi-relational}
  We consider the following example to explain the expected behavior
	of defeasible axioms in the case of the combination of 
	coverage and temporal relations.	
	Let us consider 
	$\CKB_{org} = \stru{\Cl, \KB_{\N}}$
	with $\Cl = (\N, \prec_t, \prec_c)$ describing the organization of a corporation.
 	The corporation has different policies with respect to 
	its local branches, represented by  
	coverage, and updates them along the time precedence. The structure of $\Cl$, 
	together with the axioms at each context,
	is shown in Figure~\ref{fig:mr-ckr-example}.
	We have a chain of three contexts (representing world, branch and
	local rules) in the direction of the coverage
	and three ``time-slices'' (2019, 2020 and 2021) along the time relation:
	\comment{thus, for example, we have 
	$\mlc_\mi{local\_2021} \prec_c \mlc_\mi{branch\_2021}$
	and
	$\mlc_\mi{branch\_2020} \prect \mlc_\mi{branch\_2019}$.}
	The corporation is active in the fields of
	Electronics ($\mi{E}$) and Robotics ($\mi{R}$) and employs supervisors ($\mi{S}$).
  \begin{figure}[t]%
  \centering
    \includegraphics[trim=182 165 195 143, clip, width=\textwidth]{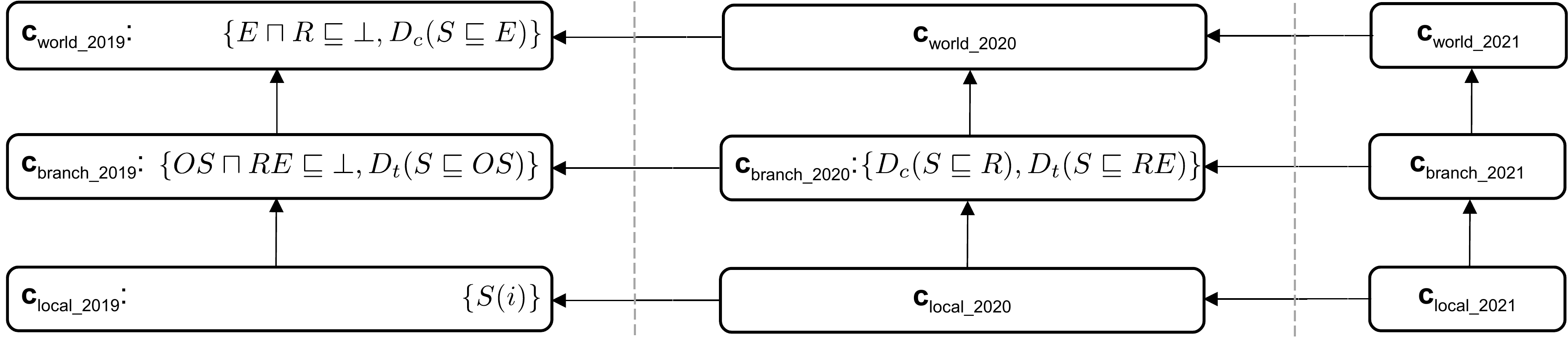}%
    \caption{Context hierarchy of multi-relational example sCKR, with axioms per context.}%
    \label{fig:mr-ckr-example}%
  \end{figure}	
  %
%
%
	%
	In $\mlc_\mi{world\_2019}$, we state that, with respect to coverage,
	every Supervisor has to be applied by default to Electronics
	and that Electronics and Robotics are disjoint.
	In the lower context $\mlc_\mi{branch\_2019}$,
	we further specify that, with respect to time,
	Supervisors have to work by default \emph{OnSite} ($OS$)
	(where working \emph{OnSite} and \emph{Remote} ($RE$) are disjoint).
	In 2019's local context
	$\mlc_\mi{local\_2019}$ we assert that $i$
	is a Supervisor.
	The previous defeasible statements are, however, 
	contradicted by the ones in $\mlc_\mi{branch\_2020}$,
	where 
  Supervisors are applied to Robotics
	and work on Remote.

	The interpretation of defeasible propagation and preferences,
	then, should define the interpretation of what is derivable in the 
	local context in the three time-slices.
	In $\mlc_\mi{local\_2019}$ no overriding takes place; then
	we should derive $E(i), OS(i)$.
	In $\mlc_\mi{local\_2020}$ the 
	more coverage-specific axiom in $\mlc_\mi{branch\_2020}$
	is preferred,
	thus we derive $R(i)$; 
	the time-related defeasible axiom $\default_t(\mi{S} \subs \mi{RE})$
	is applied locally 
	to the 2020 time-slice,
	thus we derive $RE(i)$.
	In the 2021 time-slice no new information is provided, thus
	the overriding preferences should enforce that the more specific and 
	recent information is used:
	in $\mlc_\mi{local\_2021}$ we expect to derive
	$R(i), RE(i)$. \EndEx
\end{example}



\smallskip\noindent
\textbf{Semantics.}
%
A sCKR interpretation gathers interpretations for the local contexts
as follows.
\begin{definition}[sCKR interpretation]
\label{def:ckr-int}
  An interpretation for $\Lcalsn$
  is a family $\IC = \{\Ic\}_{\mlc\in\N}$ of $\Lcals$ interpretations, such that
  $\Delta^{\Ic}\,{=}\, \Delta^{\Icp}$ and $a^{\Ic} \,{=}\, a^{\Icp}$, for
    every $a \,{\in}\, \NI$ and 
    $\mlc,\mlc' \,{\in}\, \N$.
\end{definition}
The interpretation of concepts and role expressions in $\Lcalsn$
is obtained by extending the standard interpretation 
to $\eval$ expressions:
for every $\mlc \in \N$, $\eval(X,\mlc')^{\Ic} = X^{\Icp}$.
%
We consider the definition of axiom instantiation 
provided by~\citeN{BozzatoES:18}:
%
given an axiom $\alpha \in \Lcal_\Sigma$ with FO-translation
$\forall\vc{x}.\phi_\alpha(\vc{x})$, the \emph{instantiation} of $\alpha$
with a tuple $\ee$ of individuals in $\NI$, written $\alpha(\ee)$, is the
specialization of $\alpha$ to $\ee$, i.e., $\phi_\alpha(\ee)$,
depending on the type of $\alpha$.

For a structure $\Cl = (\N, \prec_1, \dots, \prec_m)$ and $1 \leq i \leq m$, we denote by $\preceq_{-i}$ the order obtained as the reflexive and transitive closure of $\bigcup_{j \neq i} \prec_{j}$, i.e., the union of all orders $\prec_{j}$ except for $\prec_{i}$. 
We denote by $\preceq_{*}$ the order obtained as the reflexive and transitive closure of the union of all 
$\prec_{j}$.

\begin{definition}[clashing assumptions and sets]
A \emph{clashing assumption} for a context $\mlc$ and contextual relation $r$
is a pair $\stru{\alpha, \ee}$
such that $\alpha(\ee)$ is an axiom instantiation of $\alpha$, and
$\default_r(\alpha) \in \KB_{\mlc'}$ is a defeasible axiom of some 
\mbox{$\mlc' \succeq_{-r} \mlc'' \succ_r \mlc$.}
%
A \emph{clashing set} for 
$\stru{\alpha,\ee}$
is a satisfiable set $S$ of ABox assertions s.t.
 $S \cup \{\alpha(\ee)\}$ is unsatisfiable.
\end{definition}
A clashing assumption $\stru{\alpha, \ee}$
represents that
$\alpha(\ee)$ is not satisfiable in context $\mlc$, 
and a clashing set $S$ provides a ``justification'' for the
local assumption of overriding of $\alpha$ on $\ee$. 
%
CAS-interpretations 
include a set of clashing assumptions for each context and contextual relation:

\begin{definition}[CAS-interpretation]
  A \emph{CAS-interpretation} is a structure 
	$\ICAS=\stru{\IC,\ov{\casmap}}$ where
  $\IC$ is an interpretation and
	$\ov{\casmap} = \{\casmap_1, \dots, \casmap_m\}$ such that each 
	$\casmap_i$, for $i \in \{1, \dots, m\}$, maps every $\mlc \in \N$
  to a set $\casmap_i(\mlc)$ of clashing assumptions for context $\mlc$
	and context relation $\prec_i$. 
\end{definition}
%
Satisfaction of a sCKR $\CKB$ needs to consider the 
effect of the different relations:
%
%
\begin{definition}[CAS-model]
\label{def:cas-model}
Given a multi-relation sCKR $\CKB$,
a CAS-interpretation 
$\ICAS = \stru{\IC, \ov{\casmap}}$
is a \emph{CAS-model} for $\CKB$ (denoted $\ICAS \models \CKB$), 
if the following holds\footnote{Here, it is important to ensure that (defeasible) axioms are correctly propagated w.r.t.\ \emph{any} context relation $\prec_{i}$.}: 
\begin{enumerate}[label=(\roman*)]
 \item
   for every $\alpha \in \KB_\mlc$ (strict axiom), 
   and $\mlc' \preceq_{*}\mlc$, $\Icp \models \alpha$;
 \item
   for every $\default_i(\alpha) \in \KB_\mlc$ 
	 and $\mlc' \preceq_{-i} \mlc$, $\I(\mlc') \models \alpha$;
  \item
    for every $\default_i(\alpha) \in \KB_\mlc$ 
    and $\mlc'' \prec_i \mlc' \preceq_{-i} \mlc$,    
		if $\stru{\alpha,\vc{d}} \notin \casmap_i (\mlc'')$,
    then $\I(\mlc'') \models \phi_\alpha(\vc{d})$. 
\end{enumerate}
\end{definition}
Intuitively: (i) strict axioms are propagated
across the hierarchy structures over $\preceq_{*}$ from higher to lower contexts;
(ii) considering contexts that are related by relations other than $\prec_i$
(including the context in which axioms are declared), 
defeasible axioms
$\default_i(\alpha)$ are interpreted as strict axioms; 
(iii) over relation $\prec_i$,
axioms 
$\default_i(\alpha)$
are verified in context $\mlc''$ only if
applied to instances $\vc{d}$
that are not in the clashing assumptions for $\mlc''$ and relation $\prec_i$.
Note that 
these propagation rules are applied for 
every contextual relation: however, the definition can be easily
extended to assign different conditions for propagation and overriding
for each of the orderings.

We provide a \emph{local preference}\/ on clashing assumption sets for each of the relations:

\begin{list}{\emph{(LP).}}{\setlength{\topsep}{2pt}
\setlength{\leftmargin}{14pt}
\setlength{\itemsep}{0pt}
\setlength{\itemindent}{10pt}}
\item[\emph{(LP).}]  $\chi_i^1(\mlc) > \chi_i^2(\mlc)$, if for every $\stru{\alpha_1,\ee} \in \chi_i^1(\mlc) \setminus \chi_i^2(\mlc)$ with 
  $\default_i(\alpha_1)$ at a context $\mlc_1 \succeq_{-i} \mlc_{1b} \succ_{i}  \mlc$, 
  some $\stru{\alpha_2,\ff} \in \chi_i^2(\mlc) \setminus
  \chi_i^1(\mlc)$ exists with 
	$\default_i(\alpha_2)$ at context $\mlc_2 \succeq_{-i} \mlc_{2b} \succ_i \mlc$ 
	s.t.\ $\mlc_{1b} \succ_i \mlc_{2b}$.
\end{list}
Intuitively, $\chi^1_i(\mlc)$ is preferred to $\chi^2_i(\mlc)$ if 
$\chi^1_i(\mlc)$ exchanges the ``more costly'' exceptions of $\chi^2_i(\mlc)$
at more specialized contexts with ``cheaper'' ones at more general contexts. 
%
As above, 
multiple options for 
local preference can be adopted, 
cf.~\cite{DBLP:conf/kr/BozzatoSE18} 
for ranked hierachies.


Two DL interpretations $\I_1$ and $\I_2$ are
\emph{$\NI$-congruent}, if
$c^{\I_1} = c^{\I_2}$ holds for every $c\in \NI$.
This extends to CAS interpretations $\IC_{\CAS} = \stru{\IC, \ov{\casmap}}$ by considering all
context interpretations $\I(\mlc) \in \IC$.

\begin{definition}[justification]
	We say that $\stru{\alpha, \ee} \in \chi_i(\mlc)$ is
	\emph{justified} for a $\CAS$ model $\IC_{\CAS}$,   
	if some clashing set
	$S_{\stru{\alpha,\ee},\mlc}$ exists
	such that, for every 
	$\IC_{\CAS}' = \stru{\IC',\ov{\casmap}}$ of $\CKB$ that is $\NI$-congruent with $\IC_{\CAS}$, 
	it holds that $\I'(\mlc) \models S_{\stru{\alpha,\ee},\mlc}$.
	A $\CAS$ model $\ICAS$ of 
  a sCKR $\CKB$ is \emph{justified}, 
  if every $\stru{\alpha, \ee} \in \ov{\casmap}$ is justified in $\CKB$.
\end{definition}
%
%
We define a \emph{model preference} 
by combining the preferences of the relations:
it is a global lexicographical ordering on models where
each $\prec_i$ defines the ordering at the $i$-th position.

%
\smallskip\noindent
\emph{(MP).}\
	$\IC^1_{\CAS} = \stru{\IC^1, \casmap^1_1, \dots, \casmap^1_m}$ 
	is preferred to $\IC^2_{\CAS} = \stru{\IC^2, \casmap^2_1, \dots, \casmap^2_m}$ if
  \begin{enumerate}[label=(\roman*)]
	\item 
	   there exists $i \in \{1, \dots, m\}$ and some $\mlc \in \N$ s.t. 
	   $\casmap^1_i(\mlc) > \casmap^2_i(\mlc)$ and not $\casmap^2_i(\mlc) > \casmap^1_i(\mlc)$, and
     for no context $\mlc' \neq \mlc \in \N$ it holds that 
	   $\casmap^1_i(\mlc') < \casmap^2_i(\mlc')$ and not $\casmap^2_i(\mlc') < \casmap^1_i(\mlc')$.
  \item
	   for every $j < i \in \{1, \dots, m\}$, 
		 it holds $\casmap^1_j \approx \casmap^2_j$
		 (i.e.\ 
		 (i) or its converse 
		 do not hold
		 for $\prec_j$).
	\end{enumerate}


\noindent
Then, CKR models are defined by taking into
account justification and model preference.

\begin{definition}[CKR model]
\label{def:ckr-model}
  An interpretation $\IC$ is a \emph{CKR model} of a sCKR $\CKB$ 
	(in symbols, $\IC\models\CKB$) if:
 (i) $\CKB$ has some justified CAS model $\ICAS = \stru{\IC, \ov{\casmap}}$;	  
(ii) there exists no justified 
     $\ICAS' = \stru{\IC', \ov{\casmap}'}$
     that is preferred to $\ICAS$.
\end{definition}

\begin{example}
   By considering the sCKR of Example~\ref{ex:multi-relational},
   we can show how the preference
	 for different relations influences the global 
	 model preference.
	 In the case of $\CKB_{org}$, 
	 we have 8 justified interpretations that are based
	 on combinations of the following clashing assumption sets (for both relations)
	 on contexts $\mlc_\mi{local\_2020}$ and $\mlc_\mi{local\_2021}$.
	 \iftrue
	 For any CAS model $\chi_{t}(\mlc_\mi{local\_2020}) = \chi_{t}^{0}(\mlc_\mi{local\_2020}) = \{\stru{S \subs OS, i}\}$
	 and $\chi_{c}(\mlc_\mi{local\_2020})$ is either $\chi_{c}^{0}(\mlc_\mi{local\_2020}) = \{\stru{S \subs E, i}\}$ or
	 $\chi_{c}^{1}(\mlc_\mi{local\_2020}) = \{\stru{S \subs R, i}\}$.
	 
	 For $\mlc_\mi{local\_2021}$ we have that $\chi_{t}(\mlc_\mi{local\_2021})$ is either 
	 $\chi_{t}^{1}(\mlc_\mi{\local\_2021}) = \{\stru{S \subs OS, i}\}$ or\linebreak
	 $\chi_{t}^{2}(\mlc_\mi{\local\_2021}) = \{\stru{S \subs RE, i}\}$. For $\chi_{c}(\mlc_\mi{local\_2021})$ we have the same choices as for $\chi_{c}(\mlc_\mi{local\_2020})$.
	 
	 According to the (LP) 
	 definition,
	 $\chi_{c}^{0}(\mlc_\mi{local\_2020}) > \chi_{c}^{1}(\mlc_\mi{local\_2020})$
	 since $D_{c}(S \subs E)$ occurs at a less specific context 
	 w.r.t.\ $\precc$ than $D_{c}(S \subs R)$. 
	 Similarly, \comment{$\chi_{t}^{1}(\mlc_\mi{\local\_2021}) > \chi_{t}^{2}(\mlc_\mi{\local\_2021})$}.
	 
	 Since we can choose the clashing assumptions per context independently,
	 the clashing assumption map of CKR models is uniquely determined by (MP) as
	 $\ov{\chi} = \stru{\chi_t,\chi_c}$
	 where $\chi_t = \chi_{t}^{0} \cup \chi_{t}^{1}$ and 
	 $\chi_c = \chi_{c}^{0} \cup \chi_{c}^{2}$.
	 \fi
	 Indeed, this corresponds to the intuitive
	 model where overridings over temporal relation
	 occur on defeasible axioms in the ``older'' contexts
	 and in the ``higher'' contexts with respect to the
	 coverage relation.\EndEx
\end{example}

%

\smallskip\noindent
\textbf{Reasoning and Complexity.}
%
We consider the following \emph{reasoning tasks} for sCKR:
%
\begin{itemize}
\item
  \hspace*{-1pt}\mbox{{\em $\smlc$-entailment} $\CKB\models \smlc\,{:}\,\alpha$, denoting that axiom $\alpha$ 
        is entailed in each CKR-model of $\CKB$ at context
	$\smlc$.}
\item 
\hspace*{-1pt}%
	{\em Boolean conjunctive query (BCQ) answering} 
	$\CKB\models \exists\vc{y}\gamma(\vc{y})$, where $\gamma(\vc{y}) = \gamma_1\land\cdots\land\gamma_m$ is 
	an \mbox{existentially closed conjunction 
        of atoms $\gamma_i\,{=}\, \smlc_i{:}\alpha_i(\vc{t}_i)$
	with context name $\smlc_i$ and assertion $\alpha_i(\vc{t}_i)$.} 
\end{itemize}

\noindent
%
The complexity of reasoning with contextual hierarchies in sCKR was
studied by~\citeANP{DBLP:conf/kr/BozzatoSE18} \shortcite{DBLP:conf/kr/BozzatoSE18,BozzatoES:CONTEXT19}
in particular, CKR satisfiability is NP-complete 
while CKR model checking is coNP-complete already for ranked
hierarchies. 
This causes the complexity of $\smlc$-entailment
to increase 
in presence of hierarchies: 
for polynomial-time 
local preferences 
on overridings, $\smlc$-entailment is $\Pi^p_2$-complete. 
In contrast, BCQ answering remains $\Pi^p_2$-complete
as verifying a guess for a countermodel to the query remains in coNP.
%
These results 
would carry over 
to multi-relational hierarchies:
for combinations of polynomial-time 
preferences 
(like the global preference we considered),
$\smlc$-entailment and similarly BCQ answering would still be $\Pi^p_2$-complete. 

\section{Preferences with Algebraic Measures}
\label{sec:al-measures}

The question rises how the reasoning problems 
above can be expressed and solved. Previously, in the case of sCKRs
with a single relation the strategy was to encode the problem in ASP
using a program whose stable models correspond to the least justified 
models of the sCKR. The preferred models, i.e.\ sCKR models, were then
selected using weight constraints in the restricted case of 
ranked hierarchies \cite{BozzatoES:18} or by using a dedicated
algorithm for general hierarchies \cite{BozzatoES:CONTEXT19}. The
preference over models for multi-relational sCKRs is more complicated
and thus not easily expressed with weight constraints: 
we can leverage the power of quantitative extensions of ASP to express
model preferences induced by multi-relational sCKRs.

The recently introduced algebraic measures for ASP, which connect ASP
with weighted formulas, were shown to be a general framework for specifying
quantitative reasoning problems \cite{EiterK:20}. Also preferential
reasoning falls into this category, thus allowing us to
use algebraic measures to specify a preference on the answer sets in
such a way that the preferred answer sets correspond to the preferred
least justified models. The concept is as follows.


\begin{definition}[Algebraic Measure]
An \emph{algebraic measure} $\mu = \langle \Pi, \alpha, \mathcal{R}\rangle$ consists of an answer set program $\Pi$, a weighted formula $\alpha$, and a semiring $\mathcal{R}$. The weight of an answer set $S \in \mathcal{AS}(\Pi)$ is
$
\mu(S) =  \llbracket \alpha \rrbracket_{\mathcal{R}}(S).
$
And the \emph{overall weight} of $\mu$ is defined as 
$
\mu(\Pi) = \srbplus_{S \in \mathcal{AS}(\Pi)} \mu(S).
$
\end{definition}
Intuitively, given $\mu = \langle \Pi, \alpha, \mathcal{R}\rangle$ the
program $\Pi$ specifies which interpretations are accepted and the
weighted formula $\alpha$ measures some value associated with
them. Using algebraic measures, we can not only assign answer sets a
weight but also obtain some information from all answer sets by
considering the overall weight $\mu(\Pi)$.
\begin{example}
\label{ex:measure}
Let $\Pi$ be some answer set program. Then, for example,
	for $\mu_1 = \langle \Pi, 1, \mathbb{N}\rangle$ the overall weight $\mu(\Pi)$ is the number of answer sets of $\Pi$.
	For $\mu_2 = \langle \Pi, (a_1 \stimes 1 \splus \neg a_1)
        \stimes \dots \stimes (a_n \stimes 1 \splus \neg a_n),
        \mathcal{R}_{\max}\rangle$\comment{, where $\mathcal{R}_{\max} = \langle \mathbb{R}\cup\{-\infty\}, \max, +, -\infty, 0\rangle$,} the weight $\mu(S)$ of an answer
        set $S$ is the number of atoms $a_1, \dots, a_n$ it satisfies. \comment{We need the additional term $\neg a_i$, since $a_i \stimes 1$ evaluates to $\srzero = -\infty$ when $a_i$ is false and not to the desired value $\srone = 0$. Due to the usage of the semiring $\mathcal{R}_{\max}$ the overall weight $\mu(\Pi)$ is the maximum number of atoms from $a_1, \dots, a_n$ that are satisfied in any answer set of $\Pi$.}
\end{example}
A natural use case of algebraic measures is preferential reasoning.
In the sequel, a \emph{preference relation} is any
asymmetric relation. 

\begin{definition}[Preferred Answer Set]
Given a measure $\mu = \langle \Pi, \alpha, \mathcal{R}\rangle$ and
a preference relation $>$ on $R$, an answer set $S \in \mathcal{AS}(\Pi)$ is \emph{preferred}\/
w.r.t.\ $\mu$ and $>$\, if no $S' \in \mathcal{AS}(\Pi)$ exists such
that $\mu(S') > \mu(S)$.
\end{definition}
Intuitively, we use $\mu$ as an optimization function and take the
preferred answer sets as those that achieve an optimal value.
\begin{example}
Reconsider the measure $\mu_2$ from Example~\ref{ex:measure}. If
$a_1, \dots, a_n$ are desired to be true, then we only want to
consider those answer sets for which a maximal number of them is
true. These are exactly the preferred answer sets with respect to the
measure $\mu_2$ and the usual order over the reals.
\end{example}
We assume a program $PK(\CKB)$ (see Section~\ref{sec:asprin_enc}), 
which intuitively guesses a set of atoms $\ovr(\phi, \ee, \mlc, i)$, 
each corresponding to a clashing assumption $\stru{\phi, \ee}$ in
$\casmap_i(\mlc)$, and checks whether there is an CAS model 
$\IC_{CAS} = \stru{\IC, \overline{\casmap}}$. The
answer sets $I$ corresponds to the least CAS models with that property.
Then we can
introduce a measure $\mu_{opt}$ and order $>_{opt}$ to obtain those
answer sets of $PK(\CKB)$ as preferred answer sets w.r.t.\ $\mu_{opt}$
and $>_{opt}$ that correspond to the preferred least justified models
of $\CKB$. Here, we do not require any restrictions on the $\CKB$ at
all.

We use the powerset semiring $\mathcal{P}(CA)$ over the set $CA$, which contains the tuple $\stru{\phi, \ee, \mlc, i}$ for each possible clashing assumption $\stru{\phi, \ee}$ that can occur at context $\mlc$ w.r.t.\ relation $i$. The weighted formula of $\mu_{opt} = \stru{PK(\CKB), \alpha, \mathcal{P}(CA)}$ is given by
$
\alpha = \bplus_{\stru{\phi, \ee, \mlc, i} \in CA} \ovr(\phi, \ee, \mlc, i) \stimes \{\stru{\phi, \ee, \mlc, i}\}.
$
It is easy to see that for each answer set $I$ of $PK(\CKB)$ it holds that $\stru{\phi, \ee, \mlc, i} \in \mu_{opt}(I)$ iff $\ovr(\phi, \ee, \mlc, i)  \in I$. Thus, we only need to define the order $>_{opt}$ on the semiring values $S \subseteq CA$ that correctly captures the ordering on the justified models. For this we let $S \subseteq CA$ and define $(\chi^{(S)}_{i})_{i \in [m]}$, the clashing assumption maps corresponding to $S$, by setting 

\smallskip

\centerline{$
\casmap^{(S)}_{i}(\mlc) = \{\stru{\phi, \ee} \mid \stru{\phi, \ee, \mlc, i} \in S\}.
$}

\smallskip
\noindent Then for $S, S' \subseteq CA$, we define $S >_{opt} S'$ iff
\begin{enumerate}[label=(\roman*)]
	\item 
	   there exists $i \in \{1, \dots, m\}$ and some $\mlc \in \N$ s.t. 
	   $\casmap^{(S)}_i(\mlc) > \casmap^{(S')}_i(\mlc)$ and not $\casmap^{(S')}_i(\mlc) > \casmap^{(S)}_i(\mlc)$, and
     for no context $\mlc' \neq \mlc \in \N$ it holds that 
	   $\casmap^{(S)}_i(\mlc') < \casmap^{(S')}_i(\mlc')$ and not $\casmap^{(S')}_i(\mlc') < \casmap^{(S)}_i(\mlc')$.
  \item
	   for every $1\leq j < i \leq m$, 
	   we have $\casmap^{(S)}_j\,{\approx}\, \casmap^{(S')}_j$
		 (i.e.\ 
                 (i) or its converse is unprovable for $\prec_j$).
\end{enumerate}
\begin{theorem}
Let $\CKB$ be an sCKR and $PK(\CKB)$ as described above. 
Then the preferred answer sets w.r.t.\ $\mu_{opt}$ and $>_{opt}$ correspond
to the least CKR models $\stru{\IC, \overline{\casmap}}$ of $\CKB$, i.e. those where $\IC$ is the $\subseteq$-minimal interpretation such that $\stru{\IC, \overline{\casmap}}$ is a CKR model.
\end{theorem}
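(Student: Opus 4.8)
The plan is to route everything through the clashing assumption maps: I would show that the semiring value $\mu_{opt}(I)$ literally \emph{is} the map guessed by an answer set $I$, that the order $>_{opt}$ on such values coincides with the model preference (MP), and that both justification and (MP) depend on the map alone; the stated correspondence between answer sets of $PK(\CKB)$ and least justified CAS models then lifts to the desired bijection. First I would unfold the weighted formula over $\mathcal{P}(CA)$, where $\splus = \cup$, $\stimes = \cap$, $\srone = CA$ and $\srzero = \emptyset$. Each summand $\ovr(\phi, \ee, \mlc, i) \stimes \{\stru{\phi, \ee, \mlc, i}\}$ evaluates to $CA \cap \{\stru{\phi, \ee, \mlc, i}\} = \{\stru{\phi, \ee, \mlc, i}\}$ when $\ovr(\phi, \ee, \mlc, i) \in I$ and to $\emptyset \cap \{\stru{\phi, \ee, \mlc, i}\} = \emptyset$ otherwise, so that $\mu_{opt}(I) = \{\stru{\phi, \ee, \mlc, i} \in CA \mid \ovr(\phi, \ee, \mlc, i) \in I\}$. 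Writing $S = \mu_{opt}(I)$, the maps $\casmap^{(S)}_i$ built from $S$ coincide exactly with the clashing assumption maps $\casmap_i$ of the CAS model $\stru{\IC, \ov{\casmap}}$ that $PK(\CKB)$ associates with $I$.

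Next I would observe that $>_{opt}$ and (MP) are, after substituting $\casmap^{(S)}, \casmap^{(S')}$ for $\casmap^1, \casmap^2$, the same definition; combined with the previous step this gives $\mu_{opt}(I) >_{opt} \mu_{opt}(I')$ iff the CAS model of $I$ is preferred to that of $I'$ under (MP). The decisive structural fact is that both relevant predicates factor through the map: (MP) inspects only the $\casmap_i$ and never the interpretations, and the justification of $\stru{\alpha, \ee}$ quantifies over all $\NI$-congruent CAS models sharing the \emph{same} $\ov{\casmap}$. Under the SNA every CAS model with a fixed map is $\NI$-congruent with every other, so ``justified'' and ``preferred'' are properties of $\ov{\casmap}$ alone. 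Consequently, whenever a map admits \emph{some} justified CAS model, its $\subseteq$-least interpretation $\IC_{\min}$ also yields a justified CAS model, and by the assumed behaviour of $PK(\CKB)$ this least justified CAS model corresponds to an answer set.

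With these pieces I would close both directions. For the forward direction, let $I$ be preferred and $\stru{\IC, \ov{\casmap}}$ its associated least justified CAS model. If some justified CAS model $\stru{\IC', \ov{\casmap}'}$ were preferred to it, then—justification being a map property—the least interpretation for $\ov{\casmap}'$ gives an answer set $I'$ which, since (MP) is a map property, satisfies $\mu_{opt}(I') >_{opt} \mu_{opt}(I)$, contradicting preference of $I$. Hence condition (ii) of Definition~\ref{def:ckr-model} holds while (i) holds because $\stru{\IC, \ov{\casmap}}$ is itself a justified CAS model, so $\IC$ is a CKR model; being the least interpretation for $\ov{\casmap}$, it is $\subseteq$-minimal, i.e.\ $\stru{\IC, \ov{\casmap}}$ is a least CKR model. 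The converse is symmetric: a least CKR model $\stru{\IC, \ov{\casmap}}$ has $\IC$ equal to $\IC_{\min}$ for $\ov{\casmap}$, hence corresponds to an answer set $I$, and any $I'$ with $\mu_{opt}(I') >_{opt} \mu_{opt}(I)$ would, again via the map, produce a justified CAS model preferred to $\stru{\IC, \ov{\casmap}}$, contradicting that $\IC$ is a CKR model.

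I expect the main obstacle to be exactly the replacement, in condition (ii) of Definition~\ref{def:ckr-model}, of the quantification over \emph{all} justified CAS models by the quantification over only those CAS models that correspond to answer sets of $PK(\CKB)$. This step is sound only because justification and (MP) both factor through the clashing assumption map and because, for every justified map, the $\subseteq$-least interpretation is realized by an answer set; pinning down these three facts carefully—especially the interaction of the $\NI$-congruence clause with the SNA—is where the real work lies, whereas the semiring computation and the identity of the two orders are routine.
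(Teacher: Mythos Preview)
Your proposal is correct and follows essentially the same route as the paper, which treats the theorem as almost immediate from the construction: the paper simply notes that $\stru{\phi,\ee,\mlc,i}\in\mu_{opt}(I)$ iff $\ovr(\phi,\ee,\mlc,i)\in I$, defines $>_{opt}$ to coincide verbatim with (MP), and relies on the assumed correspondence between answer sets of $PK(\CKB)$ and least justified CAS models. Your write-up is in fact more careful than the paper's, since you make explicit the point the paper glosses over---that justification and (MP) factor through $\ov{\casmap}$ alone under SNA, so the quantification over all justified CAS models in Definition~\ref{def:ckr-model}(ii) can be replaced by quantification over answer sets without loss.
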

In the following we outline how such an ASP program $PK(\CKB)$ can be constructed.
Furthermore, we show that for suitably restricted $\CKB$, 
we can also express 
algebraic
measures and preferential answer sets using
asprin. 

\section{ASP Encoding of Reasoning Problems}

%
%
%
\noindent\textbf{ASP translation process.}
The ASP translation by \citeN{BozzatoES:18} 
for instance checking (w.r.t.\ $\mlc$-entailment, under UNA) in a 
$\SROIQrl$ CKR can be extended to 
multi-relational sCKRs
$\CKB = \stru{\Cl, \KB_{\N}}$, such that 
(1) a set of \emph{input rules $I$} encode the contextual structure
and local contents of contexts in $\CKB$ as facts; (2) uniform \emph{deduction rules $P$}
encode the interpretation of 
axioms; and (3)
the instance query
is encoded by \emph{output rules $O$} as ground facts.
%

Formally, the \emph{CKR program} $PK(\CKB)  = PG(\Cl) \cup
\bigcup_{\mlc \in \N} PC(\mlc, \CKB)$ encodes the whole sCKR, 
where  	  $PG(\Cl) = I_{glob}(\Cl) \cup P_{glob}$
is the \emph{global program} for $\Cl$ and 
$PC(\mlc, \CKB) = I_{loc}(\KB_\mlc, \mlc) \cup P_{loc}$ is the 
 \emph{local program} for $\mlc \in \N$.
Query answering $\CKB \models \mlc: \alpha$ is then achieved
by testing whether the instance query, translated to $O(\alpha, \mlc)$, is a consequence of 
the \emph{preferred} models of $PK(\CKB)$, 
i.e., whether $PK(\CKB) \cup P_{pref} \models O(\alpha, \mlc)$ holds,  
where $P_{pref}$ are the newly added rules for selection of preferred models.
Analogously, this can be extended to conjunctive queries as shown by~\citeN{BozzatoES:18}.
The details of the translation rules are in the Appendix; 
in the following, we further discuss $P_{pref}$.

\smallskip\noindent
\textbf{asprin-based model selection.}
\label{sec:asprin_enc}
From the translation $PK(\CKB)$ we obtain the least justified models of $\CKB$ 
as answer sets of an ASP program. In Section~\ref{sec:al-measures}, we showed
how to use algebraic measures for describing which answer sets correspond to preferred models.
By suitably 
restricting the input CKR $\CKB$,
we show that we can implement the preference already 
in the \emph{asprin}\/ framework~\cite{DBLP:conf/aaai/BrewkaD0S15}.
The latter can not express sCKR preference relations in general as 
\eval-expressions 
may cause non-transitive and even cyclic preference relations. 
We thus restrict the use of \eval-expressions such that we can define
an asprin preference relation $>$ that has the same preferred answer
sets as $\mu_{opt}$ but is a strict partial order.
%
For this, we consider a dependency graph.
\begin{definition}[Dependency Graph]
The \emph{dependency graph}\/ of an sCKR $\CKB$ is the directed graph $DEP(\CKB) = (V,E)$ is $\CKB$, where:%
\begin{itemize}
	\item $V = \{X_{\mlc} \mid X \text{ is a concept or role that
          occurs in } \KB_\mlc\}$, i.e., we have a vertex
          $X_{\mlc}$ for every combination of a concept or role $X$ that occurs in $\CKB$ and context $\mlc \in \N$.
	\item $(X_{\mlc}, X'_{\mlc'}) \in E$ if either: 
		  (i) $\mlc = \mlc'$, $X$ is a complex concept or role and $X'$ is a subexpression of $X$;
		  (ii) $\mlc = \mlc'$ and $X,X'$ co-occur in some
        (possibly defeasible) axiom; or
		  (iii) $X = \eval(X', \mlc')$.
\end{itemize} 
\end{definition}
Intuitively, a path connects two concepts/roles $X_{\mlc}, X'_{\mlc'}$
in $DEP(\CKB)$ if the interpretations of $X, X'$ at contexts $\mlc,
\mlc'$,  respectively, 
may depend on each other. 
If there are no \eval-expressions, then clearly there is no path between $X_{\mlc}, X'_{\mlc'}$
when $\mlc \neq \mlc'$. 
In this case,  we can choose the interpretations
per context
independently,
which simplifies the choosing of preferred interpretations significantly.
However,
as the preference only refers to clashing assumptions caused by
defaults, we can also use 
a weaker condition to a similar effect:
\begin{definition}[\eval-Disconnectedness]
Let $\CKB$ be an sCKR and $X, X'$ two concepts or roles that occur in
        default axioms. Then $X, X'$ are \emph{\eval-disconnected} if
        there is no path between $X_{\mlc}, X'_{\mlc'}$ in $DEP(\CKB)$
        for every $\mlc \neq \mlc'$. Furthermore, $\CKB$ is
        \eval-disconnected if
every such $X, X'$ are \eval-disconnected.
\end{definition}

\noindent
In the following, we 
confine to \eval-disconnected sCKR's
and define the preference in asprin as follows. 
We use so called ``poset'' preferences, which are specified using statements of the form:
\begin{lstlisting}[mathescape=true]
#preference($p$,poset){  $F_1$ >> $F_2$;   $F_3$ >> $F_4$;   $\dots$  ;   $F_{2n-1}$ >> $F_{2n}$ }.
\end{lstlisting}
Here each $F_i$ is a Boolean formula,  
and a partial order $>$ on such formulas is defined
by the transitive closure of \lstinline*>>*. An interpretation $X$ is preferred over interpretation $Y$ w.r.t.\ $p$ (written $X >_p Y$) if 
(i) for some $i$, $X \models F_i$ and $Y \not\models F_i$, and 
(ii) for every $i$ s.t.\ $Y \models F_i$ and $X \not\models F_i$,
some $j$ exists s.t.\ $F_j > F_i$ and $X \models F_j$ and $Y \not\models F_j$.
%

We then define the local preference w.r.t.\ context $\mlc$ and
relation $i$ by
\begin{lstlisting}[mathescape=true]
#preference(LocPref($\mlc$,$i$),poset){ 
    $\neg \ovr(\alpha, X, \mlc, i)$  >> $\ovr(\alpha, X, \mlc, i)$;
    $\neg \ovr(\alpha_2, Y, \mlc, i)$ >> $\neg \ovr(\alpha_1, X, \mlc, i)$;  for $\mlc_1 \succeq_{-i} \mlc_{1b} \succ_{i}  \mlc$ and $\mlc_2 \succeq_{-i} \mlc_{2b} \succ_i \mlc$  and $\mlc_{1b} \succ_i \mlc_{2b}$ and $D_{i}(\alpha_i)$ in $\KB_{\mlc_i}$. $\hspace{5.2cm}$}.
\end{lstlisting}
This encodes that, whenever possible, we prefer not to override a
defeasible axiom $D_{i}(\alpha)$ (line 2); further, if we have to
override some defeasible axiom, then we prefer to override the least
specific one possible (line 3). 
%
%
Next, 
we emulate the preference definition (MP), where item (i)
combines the local preferences into a preference per defeasibility relation and item (ii) states that the global ordering is the lexicographical combination of the preferences per relation. 

Using asprin, we can combine existing preference orders into a new one. 
This is where \eval-disconnectedness comes into play. 
While for general sCKRs this is not the case, for \eval-discon\-nected sCKRs, 
the preferred models w.r.t.\ (i) are the \emph{pareto optimal} models
$X$, i.e., 
no model $Y$ exists that is strictly better than $X$ on one of the local preferences 
\lstinline*LocPref($\mlc,i$)* and at least as good on all the others.
Thus, we use the \lstinline*pareto* 
type to define the preference per relation $i$:
\begin{lstlisting}
#preference(RelPref($i$),pareto){**LocPref(C,$i$) : context(C)}.
\end{lstlisting} 
Here, the condition \lstinline*context(C)* enforces that we take the pareto order over the orders \lstinline*LocPref(C,$i$)* for every context \lstinline*C*. 
Finally, for (ii), we use asprin's \emph{lexicographical preference} over orders $(p_i)_{i \in [n]}$
with weights $(w_i)_{i \in [n]}$. 
When $w_i > w_j$ we may worsen $p_j$ to improve $p_i$.
\begin{lstlisting}[mathescape=true]
#preference(GlobPref,lexico){W::**RelPref(I) : rel_w(I,W)}.
\end{lstlisting} 
Similar to above, the condition \lstinline*rel_w(I,W)* ensures that we
obtain the lexicographical order over all preferences
\lstinline*RelPref(I)*, where \lstinline*I* is a relation with weight
\lstinline*W*; in our case,
\lstinline*W* is its index. 

\smallskip\noindent
\textbf{Correctness.}
%
The presented encoding yields
a sound and complete reasoning method
for multi-relational sCKRs in $\SROIQrld$ \emph{normal form}, on time and
coverage relations. $\SROIQrld$ disallows defeasible $\SROIQrl$-axioms 
that introduce disjunctive information. The normal form of $\SROIQrld$ due to~\citeN{BozzatoES:18}
is summarized in the Appendix. Formally, 
\begin{theorem}
\label{thm:encode}
Let  $\CKB$ be a multi-relational sCKR that is \eval-disconnected  and in $\SROIQrld$ normal form. 
Then under the unique name assumption (UNA),
\begin{enumerate}[label=(\roman*)]
\item 
for every $\alpha$ and $\mlc$ such that $O(\alpha, \mlc)$ is defined, 
$\CKB \models \mlc:\alpha$ iff $PK(\CKB) \cup P_{pref} \models O(\alpha, \mlc)$;
\item 
for every BCQ $Q = \exists\vc{y}\gamma(\vc{y})$ on $\CKB$,
$\CKB \models Q$ iff $PK(\CKB) \cup P_{pref} \models O(Q)$.
\end{enumerate}
\end{theorem}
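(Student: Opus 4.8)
The plan is to factor the argument into two largely independent correctness results that are then combined through the (unnamed) theorem of Section~\ref{sec:al-measures}. First, I would establish that the base program $PK(\CKB)$ captures exactly the least justified CAS-models of $\CKB$; second, that adding $P_{pref}$ selects among those answer sets precisely the ones that are $\mu_{opt}$-/$>_{opt}$-preferred. Since the Section~\ref{sec:al-measures} theorem already equates the $\mu_{opt}$-/$>_{opt}$-preferred answer sets with the least CKR models for \emph{arbitrary} $\CKB$, the genuinely new work is (a) showing that the asprin program $P_{pref}$ reproduces $>_{opt}$ on \eval-disconnected inputs, and (b) showing that ground $\mlc$-entailment and BCQ answering over \emph{all} CKR models reduce to checking the least ones.

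For the base correspondence I would extend the materialization-calculus correctness of \citeN{BozzatoES:18} from the single- to the multi-relational setting. The atoms $\ovr(\alpha,\ee,\mlc,i)$ encode a guessed clashing-assumption family $\ov{\casmap}$, and the deduction rules $P_{glob}, P_{loc}$ must be shown to enforce exactly the three conditions of Definition~\ref{def:cas-model}: strict-axiom propagation along $\preceq_{*}$, the strict reading of $\default_i(\alpha)$ along $\preceq_{-i}$, and the overriding-guarded propagation along $\prec_i$. The restriction to $\SROIQrld$ normal form guarantees no disjunctive defeasible information arises, so the calculus is complete and the computed model is the $\subseteq$-least one compatible with the guess; UNA removes equality reasoning. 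Justification is handled by the test-style rules that check each guessed $\ovr$-atom admits a clashing set holding in every $\NI$-congruent interpretation. The outcome is a bijection between answer sets $I$ of $PK(\CKB)$ and least justified CAS-models $\stru{\IC,\ov{\casmap}}$ under which $I \models O(\alpha,\mlc)$ iff $\alpha$ holds at $\mlc$ in the encoded model.

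Next I would verify that the three-layer asprin construction reproduces $>_{opt}$, via three matching lemmas. The poset preference $\mathsf{LocPref}(\mlc,i)$ coincides with the local preference (LP): its two generators encode ``prefer not to override'' and ``prefer to override the $\prec_i$-least specific default'', and one checks that the poset semantics on these formulas is equivalent to (LP) on the sets $\casmap_i(\mlc)$. The pareto combination $\mathsf{RelPref}(i)$ of the $\mathsf{LocPref}(\mlc,i)$ over all contexts then captures item~(i) of (MP)/$>_{opt}$, which is literally pareto dominance across the per-context local preferences for relation $i$; and the lexicographic combination $\mathsf{GlobPref}$ of the $\mathsf{RelPref}(i)$ according to relation priority captures item~(ii). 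The decisive point — where \eval-disconnectedness is used — is that pareto optimization is faithful only when the per-context clashing-assumption choices are independent; \eval-disconnectedness ensures that for default-relevant $X,X'$ there is no cross-context path in $DEP(\CKB)$, so improving the local preference at one context can never force a degradation at another through a propagated \eval-extension. This both makes $\mathsf{GlobPref}$ a strict partial order (ruling out the cyclic preferences that block asprin in general) and makes its maximal answer sets coincide with the $>_{opt}$-preferred ones.

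Finally I would close the loop: by the three lemmas the asprin-preferred answer sets of $PK(\CKB)\cup P_{pref}$ are exactly the $>_{opt}$-preferred ones, which by the Section~\ref{sec:al-measures} theorem are the least CKR models. For part~(i), ground $\mlc$-entailment over all CKR models reduces to entailment over the least ones because the normal form yields Horn-like behaviour, so a ground assertion holds in every CKR model iff it holds in every least CKR model iff $O(\alpha,\mlc)$ lies in every preferred answer set; reading $\models$ as cautious consequence over preferred answer sets gives the stated equivalence, and the same reduction applied to the positive query $Q$ (reusing the BCQ encoding of \citeN{BozzatoES:18}) yields part~(ii). I expect the main obstacle to be the combination of the pareto lemma with the independence argument: because (LP) is not a partial order, one cannot reason by order isomorphism, and must instead argue at the level of maximal (undominated) elements, proving that \eval-disconnectedness is precisely what prevents spurious incomparabilities and preference cycles so that the pareto/lexicographic asprin order and $>_{opt}$ have the identical set of maximal models.
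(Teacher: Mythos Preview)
Your overall decomposition matches the paper's: a base correspondence (answer sets of $PK(\CKB)$ $\leftrightarrow$ least justified CAS-models, extending \citeN{BozzatoES:18}) followed by a preference correspondence (asprin-optimal answer sets $\leftrightarrow$ least CKR models), with the latter broken into the three levels \texttt{LocPref}/\texttt{RelPref}/\texttt{GlobPref}. Your routing through the Section~\ref{sec:al-measures} theorem is a cosmetic variant of the paper's direct Lemma~\ref{lem:pref-correctness}; either way the same intermediate facts must be established.

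There is, however, a genuine mischaracterization at the per-relation step. Item~(i) of (MP) is \emph{not} ``literally pareto dominance across the per-context local preferences''. Pareto dominance ($P_{2,i}$ in the paper's appendix) requires that for \emph{every} context $\mlc'$ either $\casmap^1_i(\mlc')>\casmap^2_i(\mlc')$ or $\casmap^1_i(\mlc')=\casmap^2_i(\mlc')$. Item~(i) of (MP) ($P_{1,i}$) only requires that at no $\mlc'$ the converse strict inequality holds; at contexts where $\casmap^1_i(\mlc')$ and $\casmap^2_i(\mlc')$ are \emph{incomparable} (neither $>$ nor $=$), $P_{1,i}$ may still rank $\IC^1_{\CAS}$ above $\IC^2_{\CAS}$ while $P_{2,i}$ does not. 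Hence $P_{2,i}\Rightarrow P_{1,i}$ is immediate, but the converse fails in general, and the two relations can have different undominated elements. The paper's key technical lemma (Theorem~\ref{thm:pareto_equiv}) is precisely that, under \eval-disconnectedness, $P_{1,i}$ and $P_{2,i}$ have the \emph{same preferred} justified models; its proof is a component-swap construction in $DEP(\CKB)$: given $P_{1,i}(\IC'_{\CAS},\IC_{\CAS})$ one replaces, in $\IC_{\CAS}$, the interpretation of the single dependency-component containing the improved default at $\mlc^{*}$ by that of $\IC'_{\CAS}$, and \eval-disconnectedness guarantees that this touches clashing assumptions only at $\mlc^{*}$, yielding a model that $P_{2,i}$-dominates $\IC_{\CAS}$.

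Your final paragraph does gesture at ``identical set of maximal models'' rather than identical relations, so you may have the right target in mind; but as written the plan skips this lemma and attributes the need for \eval-disconnectedness to making \texttt{GlobPref} a strict partial order. That is not where the hypothesis is consumed: asprin's pareto and lexico combinators already produce strict partial orders regardless of $\CKB$. The hypothesis is consumed exactly in bridging $P_{1,i}$ and $P_{2,i}$ at the level of optima. Once you state and prove that lemma, the remaining matching of \texttt{LocPref} with (LP) and of \texttt{GlobPref} with the lexicographic clause~(ii) of (MP) goes through as you sketch, and parts~(i)--(ii) of the theorem follow from the base correspondence plus the Horn nature of $\SROIQrld$ as you indicate.
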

Similarly to~\citeANP{DBLP:conf/kr/BozzatoSE18}~\shortcite{BozzatoES:CONTEXT19,DBLP:conf/kr/BozzatoSE18},
the result is shown by proving 
a correspondence between the least CAS models of $\CKB$
and the answer sets of $PK(\CKB)$, 
and then between preferred CAS models and answer sets, which are here selected by our asprin preference.
For space reasons, we confine to a proof outline; more details 
are given in the Appendix.

Without loss of generality, we can restrict to \emph{named models}, i.e., models $\Ical$ s.t.\ the interpretation of atomic concepts and
roles belongs to $N^\Ical$ for some $N \subseteq \NI \setminus \NI_S$.
This allows us to concentrate on Herbrand models for $\CKB$; in particular,
w.r.t.\ a clashing assumption $\ov{\casmap} = (\casmap_t, \casmap_c)$, we have a least Herbrand model which we denote as $\hat{\IC}(\ov{\casmap})$.


Suppose $\IC_{\CAS} = \stru{\IC, \ov{\casmap}}$ 
is a justified named CAS-model.
We can build from $\IC_{\CAS}$ a corresponding Herbrand interpretation 
$I(\IC_{\CAS})$ for the program $PK(\CKB)$.
%
%
%
Along the lines of~\citeN[Lemma 6]{BozzatoES:18}, 
we can then show that the answer sets of $PK(\CKB)$ 
coincide with the sets $I(\hat{\IC}(\ov{\casmap}))$ where $\ov{\casmap}$ 
is the clashing assumption of a named CAS model of $\CKB$. With this in place, we show that 
%
in case of a multi-relational hierarchy, 
the answer sets of $PK(\CKB) \cup P_{pref}$ found optimal by the asprin 
preference $\lf{GlobPref}$ (implementing $P_{pref}$)
coincide with the sets $I(\hat{\IC}(\ov{\casmap}))$ where $\ov{\casmap}$ 
is the clashing assumption of a named preferred CAS model (i.e.\ CKR model) of $\CKB$. 
%
%
%

\smallskip\noindent
\textbf{Prototype Implementation.}
%
The ASP translation presented above
is implemented as a proof-of-concept in 
the \CKRew \emph{(CKR datalog rewriter)} 
prototype~\cite{BozzatoES:18}.
%
\CKRew is a Java-based command line application that builds on dlv.
It accepts as input RDF files representing
the contextual structure and local knowledge bases
and produces as output a single {\tt .dlv} text file with
the ASP rewriting for the input CKR.
The latest version of \CKRew 
is available 
at \url{github.com/dkmfbk/ckrew/releases}
and includes sample RDF files for 
$\CKB_{org}$ of Example~\ref{ex:multi-relational}.


\section{Additional Possibilities with Algebraic Measures}

We 
highlight further fruitful usages
of algebraic measures 
for reasoning with sCKRs.

\smallskip\noindent
\textbf{Preferred Model as an Overall Weight.}
%
First, we show another alternative way of obtaining a preferred model as the result of an overall weight query. Formally, we have the following:

\begin{theorem}
Let $\CKB$ be a single-relational, \eval-free sCKR. Then there exist a
semiring $\mathcal{R}_{\rm one}(\CKB)$ and weighted formula
$\alpha_{\rm one}$ such that the overall weight of $\mu_{\rm one} =
\langle PK(\CKB), \alpha_{\rm one}, \mathcal{R}_{\rm
  one}(\CKB)\rangle$ is either $(I, \casmap)$, where $I$ is the minimum
lexicographical preferred answer set of $PK(\CKB)$ and $\casmap$ is
the corresponding clashing assumption map, or $\zero$ if there is no
preferred answer set.
\end{theorem}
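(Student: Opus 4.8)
The plan is to exploit the freedom granted by an existence statement and build a bespoke semiring that, rather than maintaining a running optimum, accumulates the whole set of answer-set descriptions and reads off the desired model only at the end.

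First I would fix the target. By the correspondence established for $\mu_{opt}$ and $>_{opt}$ above (which applies since an \eval-free sCKR is \eval-disconnected, and single-relational sCKRs make clause (ii) of $>_{opt}$ vacuous), the answer sets of $PK(\CKB)$ are in bijection with the least CAS models, and the preferred ones -- those maximal w.r.t.\ $>_{opt}$ -- correspond to the least CKR models. Each answer set $S$ determines a pair $p_S = (I(S), \casmap(S))$, where $I(S)$ is the named interpretation and $\casmap(S) = \{\langle\phi,\ee\rangle \mid \ovr(\phi,\ee,\mlc,1) \in S\}$ is its single clashing-assumption map, both read directly off the atoms of $S$. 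Fixing a linear order on the Herbrand base induces a lexicographic order on interpretations, hence a well-defined lex-minimum among the (finitely many) preferred answer sets. Writing $\mathcal{U}$ for the finite set of all syntactically possible such pairs over the named domain and over $CA$, the goal reduces to returning $\mathrm{sel}(T)$, the lex-minimal preferred element of the accumulated set $T = \{p_S \mid S \in \mathcal{AS}(PK(\CKB))\}$, or $\zero$ when $T$ has no $>_{opt}$-maximal element.

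Second, the semiring. The naive choice -- elements in $\mathcal{U} \cup \{\zero\}$ with $\srsplus$ returning the $>_{opt}$-better pair and breaking ties by lex-minimum -- fails: because the single-relational preference need not be transitive and may even be cyclic, this operation is not associative (a $3$-cycle $p > q > r > p$ gives $(p \srsplus q) \srsplus r = r$ but $p \srsplus (q \srsplus r) = p$). I would therefore take $\mathcal{R}_{\rm one}(\CKB)$ to be the powerset semiring $\mathcal{P}(\mathcal{U})$ decorated with a derived selection field: its carrier is $\{(T, \mathrm{sel}(T)) \mid T \subseteq \mathcal{U}\}$, with $(T_1,\cdot)\srsplus(T_2,\cdot) = (T_1\cup T_2,\mathrm{sel}(T_1\cup T_2))$ and $(T_1,\cdot)\srstimes(T_2,\cdot) = (T_1\cap T_2,\mathrm{sel}(T_1\cap T_2))$, with $\srzero = (\emptyset,\zero)$ and $\srone = (\mathcal{U},\mathrm{sel}(\mathcal{U}))$. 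Since the second component is a function of the first and $(\mathcal{P}(\mathcal{U}),\cup,\cap,\emptyset,\mathcal{U})$ is a semiring, all axioms (commutative monoid under $\srsplus$, monoid under $\srstimes$, distributivity, and annihilation $x\srstimes\srzero = \srzero$) transfer immediately; crucially $\srsplus$ is associative and commutative because $\cup$ is, and $\mathrm{sel}$ is recomputed from the union rather than maintained incrementally.

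Finally, the weighted formula. For each $p \in \mathcal{U}$ I would use the characteristic product $\beta_p = \btimes_{v \in p} v \stimes \btimes_{v \notin p} \neg v$, which evaluates to $\srone$ exactly on the interpretation equal to $p$ and to $\srzero$ otherwise, and set $\alpha_{\rm one} = \bplus_{p \in \mathcal{U}} \beta_p \stimes (\{p\},\mathrm{sel}(\{p\}))$; thus $\mu_{\rm one}(S) = (\{p_S\},\mathrm{sel}(\{p_S\}))$, the singleton tagged with its own selection. Summing over all answer sets yields overall weight $(T,\mathrm{sel}(T))$, whose selection component is by construction the lex-minimum preferred pair $(I,\casmap)$, and is $\zero$ exactly when $T = \emptyset$ (no answer sets) or $T$ has no maximal element (cyclic preference), matching the statement. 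The main obstacle is precisely the non-transitivity/acyclicity gap flagged in the introduction: it rules out any associative ``running-best'' semiring and forces the full-set construction, after which correctness of $\mathrm{sel}(T)$ is inherited from the answer-set/CKR-model correspondence and the remaining work -- checking the semiring laws and that $\beta_p$ isolates $p$ -- is routine.
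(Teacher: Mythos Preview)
Your approach differs from the paper's, and the divergence stems from a point you misdiagnose. You argue that a running-best semiring fails because $>_{opt}$ need not be transitive, and therefore resort to a powerset-plus-selection construction. The paper instead observes that while the model preference (MP), i.e.\ your $>_{opt}$, may indeed be awkward, for \eval-free (more generally \eval-disconnected) sCKRs the \emph{pareto} order $P_2$ (``strictly better at some context, equal elsewhere'') has the same preferred models as $>_{opt}$, and $P_2$ \emph{is} transitive because the per-context local preference is. The paper proves both facts as lemmas and then builds exactly the ``naive'' semiring you dismiss: pairs $(S,\chi)$ with $S$ a multiset over the Herbrand base and $\chi$ a clashing-assumption multiset-map (the multisets are needed for distributivity), $\srstimes$ is multiset union, and $\srsplus$ returns the $P_2$-better pair with lexicographic tie-breaking. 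Associativity of $\srsplus$ follows from transitivity of $P_2$, and the overall weight is literally a pair $(I,\chi)$, matching the statement.

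Your powerset construction is a legitimate alternative route, but as written it does not quite deliver the theorem. The overall weight in your semiring is a decorated set $(T,\mathrm{sel}(T))$, whose second component is the desired $(I,\chi)$; the theorem, however, asks for the overall weight to \emph{be} $(I,\chi)$, so there is a type mismatch you would need to repair. There is also minor sloppiness: in $\beta_p$ you write ``$v\in p$'' where $p$ is a pair, and setting $\mathrm{sel}(\emptyset)=\zero$ conflates a selection value (an element of $\mathcal{U}$) with a semiring element. The paper's route is shorter and yields the right type by exploiting the transitivity that the \eval-free hypothesis makes available; yours would in principle generalize beyond that hypothesis, but that is not what is asked, and the shape of the result needs fixing.
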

Here, the lexicographical order $>_{lex}$ over answer sets is given by
$I >_{lex} I'$ iff there exists some $b \in B_{PK(\CKB)}$ such that $b \in
I \setminus I'$ and for all $b' <_{var} b$ it holds that $b' \in I$
iff $b' \in I'$, where $<_{var}$ is an arbitrary but fixed total order
on $B_{PK(\CKB)}$.

Intuitively, we define $\mathcal{R}_{\rm one}(\CKB)$ by the
following strategy. The domain $R$ is the set of all pairs
$(I,\chi)$, where $I$ is an interpretation of $PK(\CKB)$ and $\casmap$
a possible clashing assumption map, and two constants
$\mathbf{0}, \mathbf{1}$, which act as the zero and one of the
semiring. The multiplication $\srstimes$ of $\mathcal{R}_{\rm one}(\CKB)$ is (pointwise) union and can thus be used to build a
representation of the interpretation $I$ and its clashing assumption
map $\casmap$.  The addition $\srsplus$ corresponds to taking the
``more preferred'' interpretation or the one which is
lexicographically smaller, in case of a tie.

Note that the restriction to \eval-free sCKRs (or a similar fragment) is necessary: the strategy explained above is only viable if the preference relation over the models is transitive. 

\smallskip\noindent
\textbf{Epistemic Reasoning using Overall Weight Queries.} Using
asprin, we can enumerate preferred models.
For obtaining all of them at once, we can use an overall weight query.
\begin{theorem}
Let $\CKB$ be a single-relational, \eval-free sCKR. Then there exists a semiring $\mathcal{R}_{\rm all}(\CKB)$ and weighted formula $\alpha_{\rm all}$ such that the overall weight of $\mu_{\rm all} = \langle PK(\CKB), \alpha_{\rm all}, \mathcal{R}_{\rm all}(\CKB)\rangle$ is $(A_{\mlc})_{\mlc \in \N}$ and
the set of CKR models corresponds to
$\{(\Ic)_{\mlc \in \N} \mid \text{ for each }c \in \N: (\Ic,\casmap(\mlc)) \in A_{\mlc}\}$.
\end{theorem}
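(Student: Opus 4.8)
The plan is to exploit that, for a single-relational, \eval-free sCKR, the choice of clashing assumptions — and hence of the induced least local interpretation — can be made independently at each context, so that the CKR models are exactly the ``product'' of the locally optimal choices. I would first establish this decomposition as a lemma. Since there is only one relation $\prec_1$, clause (ii) of the model preference (MP) is vacuous and clause (i) reduces to a context-wise comparison, so a justified CAS model $\stru{\IC,\ov{\casmap}}$ is preferred iff, at every context $\mlc$, the set $\casmap(\mlc)$ is maximal under the local preference (LP) among the clashing-assumption sets realizable at $\mlc$. \eval-freeness guarantees that the least Herbrand local interpretation $\Ic$ (in the sense of the correctness section) depends only on $\casmap(\mlc)$ together with the fixed axioms propagated from the contexts above $\mlc$, and not on the clashing assumptions chosen elsewhere; hence these per-context choices do not interfere. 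Writing $A_{\mlc}$ for the set of pairs $(\Ic,\casmap(\mlc))$ arising from LP-maximal local choices, the decomposition lemma then yields that the CKR models are exactly $\{(\Ic)_{\mlc\in\N}\mid (\Ic,\casmap(\mlc))\in A_{\mlc}\text{ for each }\mlc\in\N\}$, matching the target set.

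Next I would design $\mathcal{R}_{\rm all}(\CKB)$ to compute the family $(A_{\mlc})_{\mlc\in\N}$ as an overall weight, lifting the construction of $\mathcal{R}_{\rm one}(\CKB)$ from the preceding theorem. Elements are $\N$-indexed tuples whose components are finite sets of local pairs (local least interpretation, local clashing-assumption set); $\srzero$ is the all-empty tuple and $\srone$ is the tuple whose every component is the singleton of the empty pair. Addition $\srsplus$ acts component-wise by taking the union and then discarding every pair whose clashing-assumption component is strictly LP-dominated (a \emph{max-filter}), so that the infinitary sum $\srbplus$ retains, per context, exactly the LP-maximal realizable pairs; this is the genuinely new ingredient, as it keeps the whole LP-antichain rather than the single representative chosen in $\mathcal{R}_{\rm one}$. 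Multiplication $\srstimes$ is the component-wise combination that unions the interpretation- and assumption-parts of pairs, used only to assemble, inside a single answer set, the tuple of singletons. The weighted formula $\alpha_{\rm all}$ is then built exactly as in the previous theorem and in the $\mu_2$ example: a product of selector terms over the relevant $PK(\CKB)$-atoms (the membership atoms encoding each $\Ic$ and the $\ovr$-atoms encoding each $\casmap(\mlc)$), engineered so that $\mu_{\rm all}(S)=(\{(\Ic,\casmap(\mlc))\})_{\mlc}$ for every answer set $S$ corresponding to a justified CAS model $\stru{\IC,\ov{\casmap}}$.

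With these pieces in place, I would compute the overall weight. By the established correspondence between the answer sets of $PK(\CKB)$ and the justified CAS models of $\CKB$, the sum $\srbplus_{S\in\mathcal{AS}(PK(\CKB))}\mu_{\rm all}(S)$ collapses, context by context, to the LP-maximal pairs among all realizable $(\Ic,\casmap(\mlc))$; this is precisely $(A_{\mlc})_{\mlc\in\N}$. Combining this identification with the decomposition lemma then gives the stated correspondence of CKR models, since every product of LP-maximal local pairs is a CKR model and conversely.

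The main obstacle I expect is verifying that $\mathcal{R}_{\rm all}(\CKB)$ is genuinely a semiring. Commutativity, associativity and idempotency of the max-filter addition require LP, restricted to a single relation, to be a strict partial order, and distributivity $a\srstimes(b\srsplus c)=(a\srstimes b)\srsplus(a\srstimes c)$ requires that combining a fixed pair into both arguments preserves strict LP-domination in both directions — a monotonicity property of LP under adding clashing assumptions. Both hold here precisely because \eval-freeness makes the induced model preference transitive, the very condition already flagged as necessary for the preceding overall-weight theorem; I would therefore isolate these two algebraic properties and prove them directly from the single-relation, \eval-free definition of LP, after which the remaining monoid and annihilation axioms are routine.
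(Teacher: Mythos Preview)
Your plan matches the paper's approach closely: the paper also first isolates a decomposition lemma showing that, in the single-relational \eval-free case, a justified CAS model is a CKR model iff no other CAS model strictly LP-dominates it at any single context, and then builds $\mathcal{R}_{\rm all}(\CKB)$ as the crossproduct over $\mlc\in\N$ of per-context semirings whose addition is union followed by a max-filter and whose multiplication is the ``Cartesian'' union of pairs, with the weighted formula $\alpha_{\rm all}$ assembled exactly as you describe.

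There is, however, one genuine technical gap in your construction. With \emph{sets} of clashing assumptions and set union as the combination in $\srstimes$, the monotonicity you need for distributivity (``combining a fixed $\chi_1$ into both sides of $\chi_2>\chi_3$ preserves strict LP-dominance'') fails. Take $\chi_2=\{x\}$, $\chi_3=\{y\}$ with $\chi_2>\chi_3$ witnessed by the LP clause (so the default behind $x$ sits at a strictly higher context than the one behind $y$), and set $\chi_1=\{y\}$. Then $\chi_1\cup\chi_2=\{x,y\}$ and $\chi_1\cup\chi_3=\{y\}$, but $\{x,y\}\not>\{y\}$ because $x\in\{x,y\}\setminus\{y\}$ has no witness in $\{y\}\setminus\{x,y\}=\emptyset$. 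The paper resolves this by letting the clashing-assumption component be a \emph{multiset}, using multiset sum $+$ in the multiplication, and extending LP to compare multiplicities; then $\chi_1+\chi_2=\{\!\!\{x,y\}\!\!\}$ versus $\chi_1+\chi_3=\{\!\!\{y,y\}\!\!\}$ does satisfy the dominance (the surplus $x$ on the left is matched by the surplus $y$ on the right, with the same context witness as in $\chi_2>\chi_3$). You correctly flagged distributivity and the required monotonicity as the crux, but you will not be able to discharge it without this multiset adjustment; with it in place, the rest of your argument goes through essentially as in the paper (the paper also applies the max-filter after multiplication, to keep the product inside the carrier).
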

The definition of $\mathcal{R}_{\rm all}(\CKB)$ is similar to that of
$\mathcal{R}_{\rm one}(\CKB)$. However, instead of pairs $(I,
\casmap)$ the semiring values here are sets of pairs $(I,
\casmap)$. Given such sets $A,B$, 
addition and multiplication select the preferred pairs in the result
of the union $A\cup B$ and the ``Cartesian'' union $\{(S_1 \cup S_2,
\chi_1 \cup \chi_2) \mid (S_1, \chi_1) \in A, (S_2, \chi_2) \in B\}$,
respectively.

We can use the overall weight $\mu_{\rm all}(PK(\CKB))$ not only to
single out all preferred models but also for further advanced
tasks. 
E.g., the cautious and brave consequences at context
$\mlc$ 
are obtained\,by

\smallskip

\centerline{$
\bigcap \{\Ic \mid \Ic \in \mu_{\rm all}(PK(\CKB))_{\mlc}\} \quad
\text{ respectively }\quad \bigcup \{ \Ic \mid \Ic \in \mu_{\rm all}(PK(\CKB))_{\mlc}\}.  
$}

\smallskip

Apart from this, we can also use the result to evaluate epistemic
aggregate queries, akin to the ones defined by
\citeN{calvanese2008aggregate}, 
of the form
\[
q(\overline{x},\alpha(\overline{y})) \leftarrow \mathbf{K}\; \overline{x}, \overline{y}, \overline{z}. \phi,[ \psi ],
\]
where $\phi$ and $\psi$ are conjunctions of possibly non-ground atoms
and $\overline{x}, \overline{y}, \overline{z}$ are sequences of
variables that occur in $\phi$, such that $\overline{z}$ is distinct
from $\overline{x}$ and $\overline{y}$. Furthermore, $\alpha$ is an
aggregation function. The meaning of this expression given a knowledge
base $\mathcal{KB}$ is intuitively as follows. For each assignment to
$\overline{x}$, we aggregate over all values
$\overline{y}$ using 
$\alpha$, subject to the
constraint that for every model $D$ of $\mathcal{KB}$ the assignment
to $\overline{x}, \overline{y}$ can be completed to an assignment
$\gamma$ to all the variables in $\phi$ and $\psi$ such that (i)
$\phi$ and $\psi$ are satisfied by $D$ w.r.t.\ $\gamma$, and (ii)
for every
model $D'$ of $\mathcal{KB}$ it holds that $\gamma$ restricted to
$\overline{x},\overline{y}, \overline{z}$ is a certain answer for the
query
($*$) $aux_q(\overline{x}, \overline{y}, \overline{z}) \leftarrow \phi, \psi$.
Then, $q(t, z)$ is an answer of the above epistemic aggregate query if
it is the result of the query in every model $D$. For formal details, we
refer to the paper by \citeN{calvanese2008aggregate}.

\citeANP{calvanese2008aggregate} showed that for ``restricted''
queries, the value of the aggregate is obtained by
\begin{align*}
	q_0(\overline{x}, \overline{y}, \overline{z}^{\phi})\leftarrow \text{Cert}(aux_q,\mathbf{K})(\overline{x}, \overline{y}, \overline{z}).\qquad
	q_1(\overline{x}, \alpha(\overline{y}))\leftarrow q_0(\overline{x}, \overline{y}, \overline{z}^{\phi}).
\end{align*}
Here $\overline{z}^{\phi}$ are the variables of $\overline{z}$ that occur in $\phi$ and $ \text{Cert}(aux_q,\mathbf{K})(\overline{x}, \overline{y}, \overline{z})$ refers to the certain answers of the query
($*$).
Unfortunately, we cannot use ASP
alone to compute the certain answers in the presence of defeasible
axioms and preferences in sCKRs. However, the overall weight $\mu^*(PK(\CKB))$
contains the information necessary to conclude what the certain
answers are. These in turn can then be used to evaluate epistemic
aggregates over sCKRs.

\section{Discussion and Conclusions}
\label{sec:conclusions}

We considered the application of 
ASP with algebraic measures for expressing preferences of defeasibility
in multi-relational CKRs.
%
The problem of representing notions of defeasibility in DLs
has led to many proposals 
and is still an active area of research~\cite{GiordanoGOP:11,BonattiFPS:15,DBLP:journals/ijar/PenselT18,DBLP:journals/tocl/BritzCMMSV21}.
A detailed comparison of 
justifiable exceptions
with other definitions of non-monotonicity in DLs and contextual systems can
be found in the papers by~\citeANP{BozzatoES:18} \shortcite{BozzatoES:18,BozzatoES:CONTEXT19}.
\nop{*******
We note that other works that add forms of defeasibility in DLs
take advantage of the non-monotonic features of 
semantics of logic programs: for example, 
ASP with stratified negation was used in~\cite{GiordanoD:18}
for reasoning on the rational closure of 
${\cal SROEL}(\sqcap, \times)$; moreover, the 
addition of non-monotonic inference to DLs was a 
motivation 
for bridging DLs with ASP
like e.g in dl-programs~\cite{EiterKSX:12}.
*********}
Our work on CKRs with multiple contextual relations
was influenced by approaches dealing with exceptions under different relations 
or diverse definitions of normality. One of the latest in this direction 
is the work by \citeN{GiordanoD:20TPLP},
where the notion of typicality in DLs is extended
to a ``concept-aware multi-preference semantics'':
the domain elements are organized in multiple preference orderings 
$\leq_C$ to represent their typicality w.r.t.\ a concept $C$;
models are then ordered by a global preference combining the concept-related preferences.
Similar to our approach, entailment is encoded in ASP 
using a fragment of  
\citeANP{Krotzsch:10}'s \citeyear{Krotzsch:10} materialization calculus and 
representing combination of preferences in asprin.
%
\citeN{Gil:14}
earlier studied the effects of adding multiple preferences to a typicality extension of ${\cal ALC}$.

Concerning semirings for general quantitative specifications,
several works
used semirings to define quantitative generalisations of well-known qualitative problems. 
For example, \emph{Semiring-based Constraint Satisfaction Problems (SCSP)}~\cite{bistarelli1999semiringJOURNAL} 
allow for quantitative semantics of CSP's and capture other quantitative extensions of CSP's (weighted CSP) as special cases for some specific semiring. 
Semiring Provenance~\cite{green2007provenance}, generalizes the bag
semantics and other definitions of provenance for relational algebra
to semirings: this allows one to 
capture existing quantitative semantics, but also to introduce additional novel capabilities 
to obtain the provenance lineage of a query.
%
Moreover, algebraic ProbLog~\cite{kimmig2011algebraic} introduced an
algebraic semantics of logic programs by facilitating semirings. 
Intuitively, their approach
can be seen as a fragment of ASP with algebraic measures
allowing only a restricted use of negation in programs and no arbitrary 
recursive sums and products in the weighted formulas.
%

The parametrization of semantics with a semiring allows for flexible and highly general quantitative frameworks: in particular, algebraic measures 
allow for an intuitive specification of computations depending on the
    truth of propositional variables.
Building on ASP, 
they offer an appealing specification language for quantitative reasoning problems like preferential reasoning.


\smallskip\noindent
\textbf{Outlook.} In the direction of using the capabilities of algebraic measures
for comparing models, we plan to further study the possibilities
for epistemic reasoning on DLs as introduced in previous sections.
With respect to contextual reasoning, a possible continuation
of this work can consider a refinement of the 
definitions of preference and knowledge propagation across different 
contextual relations,
possibly by considering a motivating real-world application.

\bigskip\noindent
\textbf{Acknowledgments.}
This work was partially supported by the European Commission funded
projects ``Humane AI: Toward AI Systems That Augment and Empower Humans by
Understanding Us, our Society and the World Around Us'' (grant \#820437) and
``AI4EU: A European AI on Demand Platform and Ecosystem'' (grant \#825619),
and the Austrian Science Fund (FWF) project W1255-N23. The support is
gratefully acknowledged.


\bibliographystyle{acmtrans}

\newpage
\appendix
\section{Single-relational Example}
We also give an example of a single-relational sCKR.
\begin{example}
  \label{ex:coverage-only}
  We consider a
	single-relation hierarchy on coverage
	by reviewing the example from~\cite{BozzatoES:CONTEXT19,DBLP:conf/kr/BozzatoSE18}.
	%
  Let us consider a sCKR $\CKB_{org1} = \stru{\Cl, \KB_{\N}}$ 
	with with $\Cl = (\N, \prec_c)$ describing the organization of a corporation.
  The corporation wants to define different policies with respect to 
  its local branches, represented by the 
	coverage hierarchy in $\Cl$. 
%
  %
  The corporation is active in the fields of
  Musical instruments ($\mi{M}$), Electronics ($\mi{E}$) and Robotics ($\mi{R}$). 
  A supervisor ($\mi{S}$) can be assigned to manage only one of these fields.
	Defeasible axioms in contexts in $\KB_{\N}$ define the 
	assignment of local supervisors to their field:
	\begin{center}
		$\begin{array}{r@{\;}l}
			\Cl : & \{ \mlc_\mi{\branch1} \precc \mlc_\mi{world},\
			         \mlc_\mi{\branch2} \precc \mlc_\mi{world},\
							 \mlc_\mi{\branch1} \precc \mlc_\mi{\branch2},\
							 \mlc_\mi{\local1} \precc \mlc_\mi{\branch1},\}\\[1ex] 
			\mlc_\mi{world} :& \{ \mi{M} \sqcap \mi{E} \subs \bot,\ 
			                        \mi{M} \sqcap \mi{R} \subs \bot,\
															\mi{E} \sqcap \mi{R} \subs \bot,\
										   	\default(\mi{S} \subs \mi{E}) \}\\[.5ex]
			\mlc_\mi{\branch1}    :& \{ \default_c(\mi{S} \subs \mi{M}) \} \qquad
			\mlc_\mi{\branch2}    : \{ \default_c(\mi{S} \subs \mi{R}) \} \qquad
			\mlc_\mi{\local1}     : \{ \mi{S}(i) \}
		\end{array}$
	\end{center}
  In $\mlc_\mi{world}$ we say that supervisors are assigned to Electronics, 
  while in the sub-context for $\mlc_\mi{\branch2}$ we contradict this by assigning 
  all local supervisors to the Robotics area and in $\mlc_\mi{\branch1}$
	we further specialize this by assigning supervisors to the Musical instruments area. 
  In the context $\mlc_\mi{\local1}$ for a local site we have information about an instance $i$.
	Note that different assignments of areas for $i$ are possible 
	by instantiating the defeasible axioms: 
	intuitively,
	we want to prefer the interpretations that override
  the higher defeasible axioms in $\mlc_\mi{world}$ and $\mlc_\mi{\branch2}$.
  
	Observe that different 
	justified $\CAS$ models are possible, depending on 
  the different assignments of the individual $i$ in $\mlc_\mi{local1}$
	to the alternative areas denoted by defeasible axioms. 
	We have three possible 
	clashing assumptions sets for context $\mlc_\mi{local1}$:
	\begin{center}\small
		$\begin{array}{l}
	    \chi_{c}^{1}(\mlc_\mi{local1}) = \{\stru{S \subs E, i}, \stru{S \subs R, i}\} \quad
	    \chi_{c}^{2}(\mlc_\mi{local1}) = \{\stru{S \subs M, i}, \stru{S \subs R, i}\} \\[.5ex]
			\chi_{c}^{3}(\mlc_\mi{local1}) = \{\stru{S \subs M, i}, \stru{S \subs E, i}\}
    \end{array}$
	\end{center}
	By the ordering on clashing assumption sets,
	in particular 
	$\chi_{c}^{1}(\mlc_\mi{\local1}) > \chi_{c}^{2}(\mlc_\mi{\local1})$,
	$\chi_{c}^{1}(\mlc_\mi{\local1}) > \chi_{c}^{3}(\mlc_\mi{\local1})$ and
	$\chi_{c}^{3}(\mlc_\mi{\local1}) > \chi_{c}^{2}(\mlc_\mi{\local1})$.
	Thus, 
        $\CKB_{org1}$ has one preferred model
	which corresponds to $\chi_{c}^{1}$: 
	it corresponds to the intended 
	interpretation in which the defeasible axiom $\default(S \subs M)$ 
	associated to 
	$\mlc_\mi{\branch1}$ wins 
	over the more general rules asserted in 
	$\mlc_\mi{\branch2}$ and $\mlc_\mi{world}$.\EndEx
\end{example}

\section{ASP Translation and Rule Set Tables}

We provide further details on the ASP encoding introduced in Section~\ref{sec:asprin_enc}.
The ASP translation is defined by adapting the encoding presented in~\cite{BozzatoES:CONTEXT19,DBLP:conf/kr/BozzatoSE18} (which, in turn, is based on the translation introduced in~\cite{BozzatoES:18})
to the manage the interpretation of multiple relations in simple CKRs.

The ASP translation is defined for $\SROIQrld$ multi-relational simple CKRs of the form 
$\CKB = \stru{\Cl, \KB_{\N}}$ with $\Cl = (\N, \prec_t, \prec_c)$, i.e. 
over time and coverage contextual relations.

The language of $\SROIQrld$~\cite{BozzatoES:18} restrict the form of $\SROIQrl$ expressions 
in defeasible axioms:
in defeasible axioms, $D \sqcap D$ can not appear as a right-side concept and each right-side concept
$\forall R.D$ has $D \in \NC$.
%
We consider the $\SROIQrld$ 
normal form transformation proposed in~\cite{BozzatoES:18}  
for the formulation of the rules
(considering axioms that can appear in simple CKRs)
and we assume again the Unique Name Assumption.
For ease of reference, the form of (strict and defeasible) axioms in normal form
is presented in Table~\ref{tab:normalform}.
Note that we further simplified the normalization of defeasible 
class and role assertions and negative assertions 
as they can be easily represented
using defeasible class and role inclusions with auxiliary symbols.
%
\begin{table}[t!]%
\caption{$\SROIQrld$ normal form for axioms in $\Lcal_\Sigma$}
\label{tab:normalform}
\vspace{-1ex}
\centerline{\small
$\begin{array}{c}
\hline\\[-2ex]
\multicolumn{1}{l}{\text{\textbf{Strict axioms:} for $A,B \in \NC$, $R,S,T \in \NR$, 
$a,b \in \NI$, $\mlc \in \N$:}}\\[1ex]
\begin{array}{l}
  A(a) \qquad  R(a,b) \qquad a = b \qquad a \neq b\\[1ex]
	A  \subs B \qquad  \{a\} \subs B \qquad A \sqcap B \subs C \\[1ex]
	\exists R.A \subs B \qquad A \subs \exists R.\{a\} \qquad A \subs \forall R.B \qquad 
	A \subs {\leqslant} 1 R.\top\\[1ex]
  R  \subs T \qquad R \circ S \subs T \qquad \mathrm{Dis}(R,S) \qquad \mathrm{Inv}(R,S) \qquad
  \mathrm{Irr}(R)\\[1ex]
	\eval(A, \mlc) \subs B \qquad \eval(R, \mlc) \subs S\\[.5ex]
\end{array}\\
\hline\\[-2ex]
\multicolumn{1}{l}{\text{\textbf{Defeasible axioms:} for $A,B \in \NC$, $R,S \in \NR$, $a \in \NI$, 
$rel \in \{t,c\}$:}}\\[1ex]
\begin{array}{l}
  \default_{rel}(A \subs B) \qquad \default_{rel}(A \sqcap B \subs C) \qquad
  \default_{rel}(\exists R.A \subs B) \\[1ex]
	\default_{rel}(A \subs \exists R.\{a\}) \qquad
	\default_{rel}(A \subs \forall R.B) \qquad \default_{rel}(A \subs {\leqslant} 1 R.\top)\\[1ex]	
	\default_{rel}(R \subs S) \quad \default_{rel}(R \circ S \subs T) \quad \default_{rel}(\mathrm{Dis}(R,S))
	\quad \default_{rel}(\mathrm{Inv}(R,S)) \quad \default_{rel}(\mathrm{Irr}(R))\\[.5ex]
\end{array}\\ 
\hline
\end{array}$}
\end{table}

As in the original formulation (inspired by the materialization calculus in~\cite{Krotzsch:10}),
the translation includes sets of \emph{input rules $I$} (which encode DL axioms and signature as facts),
\emph{deduction rules $P$} (normal rules providing instance level inference) and \emph{output rules $O$}
(that encode in terms of a fact the ABox assertion to be proved).

The sets of rules for the proposed translation are presented in tables in the following
pages.
The input rules $I_{rl}$ and deduction rules $P_{rl}$ for
$\SROIQrl$ axioms are shown in Table~\ref{tab:rl-rules-tgl}.
Table~\ref{tab:global-local-rules-tgl} shows 
input rules $I_{glob}$ and deduction rules $P_{glob}$
for the translation of the contextual structure in $\Cl$
local input rules $I_{eval}$ and deduction rules $P_{eval}$
for managing $\eval$ expressions, and
output rules $O$ for encoding the output instance query.
Input rules $I_\default$ in Table~\ref{tab:input-default-tgl}
provide the encoding of defeasible axioms.
Deduction rules in $P_\default$ manage the interpretation
of defeasible axioms and knowledge propagation. 
Table~\ref{tab:ovr-rules-tgl} shows
rules defining the overriding of axioms.
Rules for the inheritance of strict axioms are shown in Table~\ref{tab:strict-inheritance-rules-tgl},
while rules in Table~\ref{tab:def-inheritance-rules-tgl}
define defeasible inheritance. Table~\ref{tab:par-inheritance-rules-tgl}
shows rules for the propagation of defeasible axioms on a relation $\mi{rel1}$
over the other relation. Auxiliary test rules in $P_\default$ are shown 
in Table~\ref{tab:test-rules-tgl}.
Finally, rules and directives in $P_{pref}$ define the 
asprin preference: the definition of asprin local and global
preferences is shown in Table~\ref{tab:preference-tbl},
while rules in Table~\ref{tab:prep-rules-tgl}
provide auxiliary rules.

Given a multi-relational sCKR $\CKB = \stru{\Cl, \KB_{\N}}$ in $\SROIQrld$ normal form
with $\Cl = (\N, \prec_t, \prec_c)$, 
a program $PK(\CKB)$ that encodes 
$\CKB$ is obtained as follows:
\begin{enumerate}
\item 
  the \emph{global program} for $\Cl$ is built as: 
	  $PG(\Cl) = I_{glob}(\Cl) \cup P_{glob}$
\item 
  for each $\mlc \in \N$, we define each local program for context $\mlc$ as:
	$PC(\mlc, \CKB) = I_{loc}(\KB_\mlc, \mlc) \cup P_{loc}$, where
	$I_{loc}(\KB_\mlc, \mlc) = 
	 I_{rl}(\KB_\mlc, \mlc) \cup I_{eval}(\KB_\mlc, \mlc) \cup I_{\default}(\KB_\mlc, \mlc)$
	and 
	$P_{loc} = P_{rl} \cup P_{eval} \cup P_{\default}$
\item 
  The \emph{CKR program} $PK(\CKB)$ is defined as:
	  $PK(\CKB) = PG(\Cl) \cup \bigcup_{\mlc \in \N} PC(\mlc, \CKB)$
\end{enumerate}%
Query answering $\CKB \models \mlc: \alpha$ is then obtained 
by testing whether the instance query, translated to
ASP by $O(\alpha, \mlc)$, is a consequence of 
the \emph{preferred} models of $PK(\CKB)$, 
i.e., whether $PK(\CKB) \cup P_{pref} \models O(\alpha, \mlc)$ holds. 
This can be extended to conjunctive queries $Q$ 
by applying the output rules to its atoms and checking if
$PK(\CKB) \cup P_{pref} \models O(Q)$ holds.

\begin{table}[tp!]%
\caption{$\SROIQrl$ input and deduction rules}
\vspace{2ex}
\hrule\mbox{}\\ 
\textbf{$\SROIQrl$ input translation $I_{rl}(S,c)$}\\[.5ex]
\scalebox{.9}{
\small
$\begin{array}[t]{l@{\ \ }l}               
\mbox{(irl-nom)} 
& a \in \NI \mapsto \{\nom(a,c)\}\\
\mbox{(irl-cls)} 
& A \in \NC \mapsto \{\cls(A,c)\}\\
\mbox{(irl-rol)} 
& R \in \NR \mapsto \{\rol(R,c)\}\\[1ex]

\mbox{(irl-inst1)} 
& A(a) \mapsto \{\insta(a,A,c)\} \\
\mbox{(irl-triple)} 
& R(a,b) \mapsto \{\triplea(a,R,b,c)\} \\
\mbox{(irl-eq)} & a = b \mapsto \{\peq(a,b,c,\ml{main})\} \\
\mbox{(irl-neq)} & a \neq b \mapsto \emptyset \\
\mbox{(irl-inst3)} 
& \{a\} \subs B \mapsto \{\insta(a,B,c)\} \\
\mbox{(irl-subc)} 
& A \subs B \mapsto \{\subClass(A,B,c)\} \\
\mbox{(irl-top)} & \top(a) \mapsto \{\insta(a,\ptop,c)\} \\
\mbox{(irl-bot)} & \bot(a) \mapsto \{\insta(a,\pbot,c)\}

\end{array}$
\;
$\begin{array}[t]{l@{\ \ }l}               
\mbox{(irl-subcnj)} 
& A_1 \sqcap A_2 \subs B \mapsto \{\subConj(A_1,A_2,B,c)\} \\
\mbox{(irl-subex)} 
& \exists R.A \subs B \mapsto \{\subEx(R,A,B,c)\} \\[1ex]

\mbox{(irl-supex)} &  A \subs \exists R.\{a\} \mapsto \{\supEx(A,R,a,c)\}\\
\mbox{(irl-forall)} &  A \subs \forall R.B \mapsto \{\supForall(A,R,B,c)\} \\
\mbox{(irl-leqone)} &  A \subs {\leqslant} 1 R.\top \mapsto \{\supLeqOne(A,R,c)\} \\[1ex]
        
\mbox{(irl-subr)} 
& R \subs S \mapsto \{\subRole(R,S,c)\}\\
\mbox{(irl-subrc)} 
& R {\circ} S \subs T \mapsto \{\subRChain(R,S,T,c)\}\\
\mbox{(irl-dis)} & \mathrm{Dis}(R,S)  \mapsto \{\pDis(R,S,c)\}\\
\mbox{(irl-inv)} & \mathrm{Inv}(R,S) \mapsto \{\pInv(R,S,c)\}\\
\mbox{(irl-irr)} & \mathrm{Irr}(R) \mapsto \{\pIrr(R,c)\}\\
  \end{array}$}\\[2ex]
  
\textbf{$\SROIQrl$ deduction rules $P_{rl}$}\\[.5ex]
\scalebox{.9}{
\small
$\begin{array}{l@{\;}r@{\ }r@{\ }l@{}}
	 \mbox{(prl-instd)} & \instd(x,z,c,\main) & \rif & \insta(x,z,c).\\
	 \mbox{(prl-tripled)} & \tripled(x,r,y,c,\main) & \rif & \triplea(x,r,y,c).\\[1ex]


   \mbox{(prl-eq)} & \unsat(t) & \rif & \peq(x,y,c,t).\\
   \mbox{(prl-top)} & \instd(x,\ptop,c,\ml{main}) & \rif & \nom(x,c).\\	
   \mbox{(prl-bot)} & \unsat(t) & \rif & \instd(x,\pbot,c,t).\\[1ex]
		
   \mbox{(prl-subc)} 
   &      \instd(x,z,c,t) & \rif & \subClass(y,z,c), \instd(x,y,c,t). \\
   \mbox{(prl-subcnj}) 
   & \instd(x,z,c,t) & \rif & \subConj(y_1,y_2,z,c), \instd(x,y_1,c,t),\instd(x,y_2,c,t). \\
   \mbox{(prl-subex)} 
   & \instd(x,z,c,t) & \rif & \subEx(v,y,z,c), \tripled(x,v,x',c,t),\instd(x',y,c,t). \\
   \mbox{(prl-supex)} 
   & \tripled(x,r,x',c,t) & \rif & \supEx(y,r,x',c), \instd(x,y,c,t). \\
   \mbox{(prl-supforall)} 
   & \instd(y,z',c,t) & \rif & \supForall(z,r,z',c), \instd(x,z,c,t), \tripled(x,r,y,c,t).\\
   \mbox{(prl-leqone)} & \unsat(t) & \rif & \supLeqOne(z,r,c), \instd(x,z,c,t),\\
           & & & \tripled(x,r,x_1,c,t),\tripled(x,r,x_2,c,t).\\[1ex]					

   \mbox{(prl-subr)} 
   & \tripled(x,w,x',c,t) & \rif & \subRole(v,w,c), \tripled(x,v,x',c,t). \\
   \mbox{(prl-subrc)} 
   & \tripled(x,w,z,c,t) & \rif & \subRChain(u,v,w,c), \tripled(x,u,y,c,t), \tripled(y,v,z,c,t).\\[1ex]

   \mbox{(prl-dis)} & \unsat(t) & \rif & \pDis(u,v,c), \tripled(x,u,y,c,t), \tripled(x,v,y,c,t).\\	
   \mbox{(prl-inv1)} & \tripled(y,v,x,c,t) & \rif & \pInv(u,v,c), \tripled(x,u,y,c,t). \\
   \mbox{(prl-inv2)} & \tripled(y,u,x,c,t) & \rif & \pInv(u,v,c), \tripled(x,v,y,c,t). \\
   \mbox{(prl-irr)} & \unsat(t) & \rif & \pIrr(u,c),\tripled(x,u,x,c,t).\\[1ex]
   \mbox{(prl-sat)} &   & \rif & \unsat(\main).
  \end{array}$}\\[.5ex]
\hrule\mbox{}
\label{tab:rl-rules-tgl}
\end{table}

\begin{table}[tp]%
\caption{Global, local and output rules}
\label{tab:global-local-rules-tgl}

\medskip

\hrule\mbox{}\\
\textbf{Global input rules $I_{glob}(\Cl)$}\\[.5ex]
\scalebox{.9}{
\small
$\begin{array}{l@{\ \ }r@{\;}l}

 \mbox{(igl-ctx)}   & \ml{c} \in \N   &\mapsto \{\pcontext(\mlc)\}\\
 \mbox{(igl-rel-t)} & \prec_t \in \Cl &\mapsto \{\prelation(\ml{time})\}\\
 \mbox{(igl-rel-c)} & \prec_c \in \Cl &\mapsto \{\prelation(\ml{covers})\}\\[1ex]

 \mbox{(igl-covers-t)} & \mlc_1 \prec_t \mlc_2 &\mapsto \{\pprec(\mlc_1, \mlc_2, \ml{time})\}\\
 \mbox{(igl-covers-c)} & \mlc_1 \prec_c \mlc_2 &\mapsto \{\pprec(\mlc_1, \mlc_2, \ml{covers})\}\\[1ex]
\end{array}$}

\textbf{Global deduction rules $P_{glob}$}\\[.5ex]
\scalebox{.9}{
\small
$\begin{array}{l@{\ \ }r@{\;}l}               
\mbox{(pgl-preceq1)} & 
\ppreceq(c_1, c_2, rel) \rif & \pprec(c_1, c_2, rel).\\
\mbox{(pgl-preceq2)} & 
\ppreceq(c_1, c_1, rel) \rif & \pcontext(c_1), \prelation(rel).\\[1ex]

\mbox{(pgl-preceqexc1)} & 
\preceqex(c_1, c_2, rel) \rif & \prelation(rel), \ppreceq(c_1, c_3, rel_1),\\
                             && \ppreceq(c_3, c_2, rel_2), rel \neq rel_1, rel \neq rel_2.\\
\mbox{(pgl-preceqexc2)} & 
\preceqex(c_1, c_2, rel) \rif & \prelation(rel), \ppreceq(c_1, c_2, rel_1), rel \neq rel_1.\\[2ex]

\end{array}$}

\textbf{Local \eval\ input rules $I_{eval}(S, c)$}\\[.5ex]
\scalebox{.9}{
\small
$\begin{array}{l@{\ \ }l}
\mbox{(ilc-subevalat)} & \eval(A, \ml{c}_1) \subs B \mapsto \{\subEval(A, \ml{c}_1, B, \mlc)\} \\
\mbox{(ilc-subevalr)} & \eval(R, \ml{c}_1) \subs T \mapsto \{\subEvalR(R, \ml{c}_1, T, \mlc)\}\\[1ex]
\end{array}$}

\textbf{Local \eval\ deduction rules $P_{eval}$}\\[.5ex]
\scalebox{.9}{
\small
$\begin{array}{l@{\ \ }r@{\;}l}               
\mbox{(plc-subevalat)} & 
\instd(x, b, c, t) \rif & \subEval(a, c_1, b, c), \instd(x, a, c_1, t).\\
\mbox{(plc-subevalr)} & 
\tripled(x, s, y, c, t) \rif & \subEvalR(r, c_1, s, c), \tripled(x, r, y, c_1, t).\\[1ex]

\mbox{(plc-subevalatp)} & 
\instd(x, b, c, t) \rif & \subEval(a, c_1, b, c_2), \instd(x, a, c_1, t),\\ 
&& \pprec(c,c_3, rel1), \ppreceq(c_3,c_2, rel2), rel1 \neq rel2.\\
\mbox{(plc-subevalrp)} & 
\tripled(x, s, y, c, t) \rif & \subEvalR(r, c_1, s, c_2), \tripled(x, r, y, c_1, t),\\ 
&& \pprec(c,c_3, rel1), \ppreceq(c_3,c_2, rel2), rel1 \neq rel2.\\[2ex]
\end{array}$}

\textbf{Output translation $O(\alpha,\mlc)$}\\[.5ex]
\scalebox{.9}{
\small
$\begin{array}{l@{\ \ }l}
\mbox{(o-concept)} & A(a) \mapsto \{\instd(a,A,\mlc,\main)\} \\
\mbox{(o-role)} & R(a,b) \mapsto \{\tripled(a,R,b,\mlc,\main)\} \\[1ex]
\end{array}$}
\hrule
\end{table}

\begin{table}[tp]%
\caption{Input rules $I_{\default}(S,c)$ for defeasible axioms}
\label{tab:input-default-tgl} 

\medskip

\hrule\mbox{}\\[1ex]
{
\small
\scalebox{.9}{
$\begin{array}{@{}l@{~}r@{~}l@{}}

 \mbox{(id-subc)} & \default_{rel}(A \subs B)  & \mapsto \{\, \defsubs(A,B,c,rel).\,\} \\  
 \mbox{(id-subcnj)} & \default_{rel}(A_1 \sqcap A_2 \subs B)& \mapsto\{\,\defsubcnj(A_1, A_2, B,c,rel).\,\}\\  
 \mbox{(id-subex)} & \default_{rel}(\exists R.A \subs B)  & \mapsto \{\, \defsubex(R, A, B,c,rel).\,\} \\    
 \mbox{(id-supex)} & \default_{rel}(A \subs \exists R.\{a\})  & \mapsto \{\, \defsupex(A, R, a,c,rel).\,\} \\   
 \mbox{(id-forall)} & \default_{rel}(A \subs \forall R.B)  & \mapsto \{\, \defsupforall(A, R, B,c,rel).\,\} \\    
 \mbox{(id-leqone)} & \default_{rel}(A \subs {\leqslant} 1 R.\top)  & \mapsto \{\, \defsupleqone(A, R,c,rel).\,\} \\[1ex]
 \mbox{(id-subr)} & \default_{rel}(R \subs S)  & \mapsto \{\, \defsubr(R, S,c,rel).\,\} \\    
 \mbox{(id-subrc)} & \default_{rel}(R \circ S \subs T)  & \mapsto \{\, \defsubrc(A_1, A_2, B,c,rel).\,\} \\    
 \mbox{(id-dis)} & \default_{rel}(\mathrm{Dis}(R,S))  & \mapsto \{\, \defdis(R, S,c,rel).\,\} \\   
 \mbox{(id-inv)} & \default_{rel}(\mathrm{Inv}(R,S))  & \mapsto \{\, \definv(R,S,c,rel).\,\} \\   
 \mbox{(id-irr)} & \default_{rel}(\mathrm{Irr}(R))  & \mapsto \{\, \defirr(R,c,rel).\,\} \\    
\end{array}$}}\\[1ex]
\hrule
\end{table}

\begin{table}[tp]%
\caption{Deduction rules $P_{\default}$ for defeasible axioms: overriding rules}
\label{tab:ovr-rules-tgl}

\medskip

\hrule\mbox{}\\[1ex]
\scalebox{.9}{
$\begin{array}{l@{\ \ }r@{\ \ }l}
 \mbox{(ovr-subc)} & 
 \ovr(\subClass,x,y,z,c_1,c,rel1) \rif &
 \defsubs(y,z,c_1,rel1),\\
  && \pprec(c,c_2,rel1), \ppreceq(c_2,c_1,rel2), rel1 \neq rel2,\\
	&& \instd(x,y,c,\ml{main}), \naf \testf(\nlit(x,z,c)). \\[0.5ex]
  \mbox{(ovr-cnj)} & 
  \ovr(\subConj,x,y_1,y_2,z,c_1,c,rel1) \rif &  
	\defsubcnj(y_1,y_2,z,c_1,rel1),\\ 
	&& \pprec(c,c_2,rel1), \ppreceq(c_2,c_1,rel2), rel1 \neq rel2,\\  
	&& \instd(x,y_1,c,\ml{main}), \instd(x,y_2,c,\ml{main}),\\ 
	&& \naf \testf(\nlit(x,z,c)).\\[0.5ex]
  \mbox{(ovr-subex)} & 
  \ovr(\subEx,x,r,y,z,c_1,c,rel1) \rif &  
	\defsubex(r,y,z,c_1,rel1), \\
	&& \pprec(c,c_2,rel1), \ppreceq(c_2,c_1,rel2), rel1 \neq rel2,\\  
	&& \tripled(x,r,w,c,\ml{main}), \instd(w,y,c,\ml{main}),\\ 
	&& \naf \testf(\nlit(x,z,c)).\\[0.5ex]
  \mbox{(ovr-supex)} & 
  \ovr(\supEx,x,y,r,w,c_1,c,rel1) \rif &  
	\defsupex(y,r,w,c_1,rel1),\\ 
	&& \pprec(c,c_2,rel1), \ppreceq(c_2,c_1,rel2), rel1 \neq rel2,\\ 
  && \instd(x,y,c,\ml{main}), \naf \testf(\nrel(x,r,w,c)).\\[0.5ex]
  \mbox{(ovr-forall)} & 
  \ovr(\supForall,x,y,z,r,w,c_1,c,rel1) \rif & 
	\defsupforall(z,r,w,c_1,rel1),\\
  && \pprec(c,c_2,rel1), \ppreceq(c_2,c_1,rel2), rel1 \neq rel2,\\ 
	&& \instd(x,z,c,\ml{main}), \tripled(x,r,y,c,\ml{main}),\\
	&& \naf \testf(\nlit(y,w,c)). \\[0.5ex]
  \mbox{(ovr-leqone)} & 
  \ovr(\supLeqOne,x,x_1,x_2,z,r,c_1,c,rel1) \rif &
  \defsupleqone(z,r,c_1,rel1),\\ 
	&& \pprec(c,c_2,rel1), \ppreceq(c_2,c_1,rel2), rel1 \neq rel2,\\ 
	&& \instd(x,z,c,\ml{main}), \tripled(x,r,x_1,c,\ml{main}),\\ 
	&& \tripled(x,r,x_2,c,\ml{main}),\\[0.5ex]
  \mbox{(ovr-subr)}  & 
  \ovr(\subRole,x,y,r,s,c_1,c,rel1) \rif &
  \defsubr(r,s,c_1,rel1), \\
	&& \pprec(c,c_2,rel1), \ppreceq(c_2,c_1,rel2), rel1 \neq rel2,\\ 
	&& \tripled(x,r,y,c,\ml{main}), \naf \testf(\nrel(x,s,y,c)).\\[0.5ex]                     
  \mbox{(ovr-subrc)}  & 
  \ovr(\subRChain,x,y,z,r,s,t,c_1,c,rel1) \rif &   
	\defsubrc(r,s,t,c_1,rel1),\\
  && \pprec(c,c_2,rel1), \ppreceq(c_2,c_1,rel2), rel1 \neq rel2,\\ 
	&& \tripled(x,r,y,c,\ml{main}), \tripled(y,s,z,c,\ml{main}),\\  
	&& \naf \testf(\nrel(x,t,z,c)).\\[0.5ex]                     
  \mbox{(ovr-dis)}  & 
  \ovr(\pDis,x,y,r,s,c_1,c,rel1) \rif &
  \defdis(r,s,c_1,rel1),\\ 
	&& \pprec(c,c_2,rel1), \ppreceq(c_2,c_1,rel2), rel1 \neq rel2,\\ 
	&& \tripled(x,r,y,c,\ml{main}), \tripled(x,s,y,c,\ml{main}).\\
  \mbox{(ovr-inv1)}  & 
  \ovr(\pInv,x,y,r,s,c_1,c,rel1) \rif &
  \definv(r,s,c_1,rel1),\\ 
	&& \pprec(c,c_2,rel1), \ppreceq(c_2,c_1,rel2), rel1 \neq rel2,\\ 
	&& \tripled(x,r,y,c,\ml{main}), \naf \testf(\nrel(x,s,y,c)).\\[0.5ex]
  \mbox{(ovr-inv2)} & 
	\ovr(\pInv,x,y,r,s,c_1,c,rel1) \rif &
  \definv(r,s,c_1,rel1),\\ 
	&& \pprec(c,c_2,rel1), \ppreceq(c_2,c_1,rel2), rel1 \neq rel2,\\ 
	&& \tripled(y,s,x,c,\ml{main}), \naf \testf(\nrel(x,r,y,c)).\\[0.5ex]
  \mbox{(ovr-irr)}  & 
  \ovr(\pIrr,x,r,c_1,c,rel1) \rif &
  \defirr(r,c_1,rel1),\\ 
	&& \pprec(c,c_2,rel1), \ppreceq(c_2,c_1,rel2), rel1 \neq rel2,\\ 
	&& \tripled(x,r,x,c,\ml{main}).\\[1.5ex]
\end{array}$}
\hrule
\end{table}

\begin{table}[tp]%

\bigskip

\caption{Deduction rules $P_{\default}$ for defeasible axioms: strict inheritance rules}
\label{tab:strict-inheritance-rules-tgl}

\medskip

\hrule\mbox{}\\[1ex]
\scalebox{.9}{
\small
$\begin{array}{l@{\;}r@{\ }r@{\ }l@{}}
   \mbox{(props-inst)} 
   &   \instd(x,z,c,\main) & \rif & 
	 \insta(x,z,c_1),\\ 
	 &&& \pprec(c,c_2, rel1), \ppreceq(c_2,c_1, rel2), rel1 \neq rel2.\\[0.5ex]
   \mbox{(props-triple)} 
   &   \tripled(x,r,y,c,\main) & \rif & 
	 \triplea(x,r,y,c_1),\\ 
   &&& \pprec(c,c_2, rel1), \ppreceq(c_2,c_1, rel2), rel1 \neq rel2.\\[0.5ex]
   \mbox{(props-subc)} 
   &    \instd(x,z,c,t) & \rif & 
	 \subClass(y,z,c_1), \instd(x,y,c,t),\\
   &&& \pprec(c,c_2, rel1), \ppreceq(c_2,c_1, rel2), rel1 \neq rel2.\\[0.5ex]
   \mbox{(props-cnj}) 
   & \instd(x,z,c,t) & \rif & 
	 \subConj(y_1,y_2,z,c_1), \instd(x,y_1,c,t), \instd(x,y_2,c,t),\\
   &&& \pprec(c,c_2, rel1), \ppreceq(c_2,c_1, rel2), rel1 \neq rel2.\\[0.5ex]
   \mbox{(props-subex)} 
   & \instd(x,z,c,t) & \rif & 
	 \subEx(v,y,z,c_1), \tripled(x,v,x',c,t),\instd(x',y,c,t),\\
   &&& \pprec(c,c_2, rel1), \ppreceq(c_2,c_1, rel2), rel1 \neq rel2.\\[0.5ex]
   \mbox{(props-supex)} 
   & \tripled(x,r,x',c,t) & \rif & 
	 \supEx(y,r,x',c_1), \instd(x,y,c,t),\\
	 &&& \pprec(c,c_2, rel1), \ppreceq(c_2,c_1, rel2), rel1 \neq rel2.\\[0.5ex]
\mbox{(props-forall)} 
   & \instd(y,z',c,t) & \rif & 
	 \supForall(z,r,z',c_1), \instd(x,z,c,t), \tripled(x,r,y,c,t),\\
   &&& \pprec(c,c_2, rel1), \ppreceq(c_2,c_1, rel2), rel1 \neq rel2.\\[0.5ex]
\mbox{(props-leqone)} & \unsat(t) & \rif & 
  \supLeqOne(z,r,c_1), \instd(x,z,c,t),\\													
   &&& \tripled(x,r,x_1,c,t), \tripled(x,r,x_2,c,t),\\
   &&& \pprec(c,c_2, rel1), \ppreceq(c_2,c_1, rel2), rel1 \neq rel2.\\[0.5ex]
   \mbox{(props-subr)} 
   & \tripled(x,w,x',c,t) & \rif & 
	 \subRole(v,w,c_1), \tripled(x,v,x',c,t), \\
   &&& \pprec(c,c_2, rel1), \ppreceq(c_2,c_1, rel2), rel1 \neq rel2.\\[0.5ex]
\mbox{(props-subrc)} 
   & \tripled(x,w,z,c,t) & \rif & 
	\subRChain(u,v,w,c_1), \tripled(x,u,y,c,t), \tripled(y,v,z,c,t),\\
	&&& \pprec(c,c_2, rel1), \ppreceq(c_2,c_1, rel2), rel1 \neq rel2.\\[0.5ex]
\mbox{(props-dis)} 
   & \unsat(t) & \rif & 
	\pDis(u,v,c_1), \tripled(x,u,y,c,t), \tripled(x,v,y,c,t),\\
	&&& \pprec(c,c_2, rel1), \ppreceq(c_2,c_1, rel2), rel1 \neq rel2.\\[0.5ex]
   \mbox{(props-inv1)} 
   & \tripled(y,v,x,c,t) & \rif & 
	\pInv(u,v,c_1), \tripled(x,u,y,c,t), \\
	&&& \pprec(c,c_2, rel1), \ppreceq(c_2,c_1, rel2), rel1 \neq rel2.\\
   \mbox{(props-inv2)} 
   & \tripled(x,u,y,c,t) & \rif & 
	\pInv(u,v,c_1), \tripled(y,v,x,c,t), \\
	&&& \pprec(c,c_2, rel1), \ppreceq(c_2,c_1, rel2), rel1 \neq rel2.\\[0.5ex]
 \mbox{(props-irr)} 
   & \unsat(t) & \rif & 
	\pIrr(u,c_1),\tripled(x,u,x,c,t),\\
	&&& \pprec(c,c_2, rel1), \ppreceq(c_2,c_1, rel2), rel1 \neq rel2.\\[1.5ex]
\end{array}$}
\hrule
\end{table}

\begin{table}[tp]%

\bigskip

\caption{Deduction rules $P_{\default}$ for defeasible axioms: defeasible inheritance rules}
\label{tab:def-inheritance-rules-tgl}

\medskip

\hrule\mbox{}\\[1ex]
\scalebox{.9}{
\small
$\begin{array}{l@{\;}r@{\ }r@{\ }l@{}}
   \mbox{(propd-subc)} 
   &    \instd(x,z,c,t) & \rif & \defsubs(y,z,c_1, rel1), \instd(x,y,c,t),\\
    &&& \pprec(c,c_2, rel1), \ppreceq(c_2,c_1, rel2), rel1 \neq rel2,\\
		&&& \naf \ovr(\subClass,x,y,z,c_1,c, rel1).\\[0.5ex]
   \mbox{(propd-cnj)} 
   & \instd(x,z,c,t) & \rif & 
	 \defsubcnj(y_1,y_2,z,c_1, rel1), \instd(x,y_1,c,t),\instd(x,y_2,c,t),\\
	 &&& \pprec(c,c_2, rel1), \ppreceq(c_2,c_1, rel2), rel1 \neq rel2,\\
   &&& \naf \ovr(\subConj,x,y_1,y_2,z,c_1,c, rel1). \\[0.5ex]
  \mbox{(propd-subex)} 
   & \instd(x,z,c,t) & \rif & 
	\defsubex(v,y,z,c_1, rel1), \tripled(x,v,x',c,t),\instd(x',y,c,t),\\
   &&& \pprec(c,c_2, rel1), \ppreceq(c_2,c_1, rel2), rel1 \neq rel2,\\
	 &&& \naf \ovr(\subEx,x,v,y,z,c_1,c, rel1). \\[0.5ex]
   \mbox{(propd-supex)} 
   & \tripled(x,r,x',c,t) & \rif & 
	 \defsupex(y,r,x',c_1, rel1), \instd(x,y,c,t),\\
	 &&& \pprec(c,c_2, rel1), \ppreceq(c_2,c_1, rel2), rel1 \neq rel2,\\
   &&& \naf \ovr(\supEx,x,y,r,x',c_1,c, rel1). \\[0.5ex]
\mbox{(propd-forall)} 
   & \instd(y,z',c,t) & \rif & 
	 \defsupforall(z,r,z',c_1, rel1), \instd(x,z,c,t), \tripled(x,r,y,c,t),\\
	&&& \pprec(c,c_2, rel1), \ppreceq(c_2,c_1, rel2), rel1 \neq rel2,\\
  &&& \naf \ovr(\supForall,x,y,z,r,z',c_1,c, rel1).\\[0.5ex]      
\mbox{(propd-leqone)} & \unsat(t) & \rif & 
 \defsupleqone(z,r,c_1, rel1), \instd(x,z,c,t),\\													
  &&& \tripled(x,r,x_1,c,t), \tripled(x,r,x_2,c,t),\\
  &&& \pprec(c,c_2, rel1), \ppreceq(c_2,c_1, rel2), rel1 \neq rel2,\\
	&&& \naf \ovr(\supLeqOne,x,x_1,x_2,z,r,c_1,c, rel1).\\[0.5ex]
  \mbox{(propd-subr)} 
  & \tripled(x,w,x',c,t) & \rif & 
	\defsubr(v,w,c_1, rel1), \tripled(x,v,x',c,t), \\
  &&& \pprec(c,c_2, rel1), \ppreceq(c_2,c_1, rel2), rel1 \neq rel2,\\
	&&& \naf \ovr(\subRole,x,y,v,w,c_1,c, rel1). \\[0.5ex]
\mbox{(propd-subrc)} 
   & \tripled(x,w,z,c,t) & \rif & 
	\defsubrc(u,v,w,c_1, rel1), \tripled(x,u,y,c,t), \tripled(y,v,z,c,t),\\
	&&& \pprec(c,c_2, rel1), \ppreceq(c_2,c_1, rel2), rel1 \neq rel2,\\
  &&& \naf \ovr(\subRChain,x,y,z,u,v,w,c_1,c, rel1). \\[0.5ex]
\mbox{(propd-dis)} 
   & \unsat(t) & \rif & 
	\defdis(u,v,c_1, rel1), \tripled(x,u,y,c,t), \tripled(x,v,y,c,t),\\
	&&& \pprec(c,c_2, rel1), \ppreceq(c_2,c_1, rel2), rel1 \neq rel2,\\
  &&& \naf \ovr(\pDis,x,y,u,v,c_1,c, rel1). \\[0.5ex]             
   \mbox{(propd-inv1)} 
   & \tripled(y,v,x,c,t) & \rif & 
	 \definv(u,v,c_1, rel1), \tripled(x,u,y,c,t), \\
	&&& \pprec(c,c_2, rel1), \ppreceq(c_2,c_1, rel2), rel1 \neq rel2,\\
  &&& \naf \ovr(\pInv,x,y,u,v,c_1,c, rel1). \\
   \mbox{(propd-inv2)} 
   & \tripled(x,u,y,c,t) & \rif & 
	 \definv(u,v,c_1, rel1), \tripled(y,v,x,c,t), \\
	&&& \pprec(c,c_2, rel1), \ppreceq(c_2,c_1, rel2), rel1 \neq rel2,\\
  &&& \naf \ovr(\pInv,x,y,u,v,c_1,c, rel1). \\[0.5ex]              
 \mbox{(propd-irr)} 
   & \unsat(t) & \rif & 
	\defirr(u,c_1, rel1), \tripled(x,u,x,c,t),\\
	&&& \pprec(c,c_2, rel1), \ppreceq(c_2,c_1, rel2), rel1 \neq rel2,\\
  &&& \naf \ovr(\pIrr,x,u,c_1,c, rel1). \\[1.5ex]
\end{array}$}
\hrule
\end{table}

\begin{table}[tp]%

\bigskip

\caption{Deduction rules $P_{\default}$ for defeasible axioms: parallel inheritance rules}
\label{tab:par-inheritance-rules-tgl}

\medskip

\hrule\mbox{}\\[1ex]
\scalebox{.9}{
\small
$\begin{array}{l@{\;}r@{\ }r@{\ }l@{}}
	
   \mbox{(propp-subc)} 
   &    \instd(x,z,c,t) & \rif & \defsubs(y,z,c_1, rel1), \instd(x,y,c,t),\\
    &&& \ppreceq(c ,c_1, rel2), rel1 \neq rel2.\\[0.5ex]
		
   \mbox{(propp-cnj)} 
   & \instd(x,z,c,t) & \rif & 
	 \defsubcnj(y_1,y_2,z,c_1, rel1), \instd(x,y_1,c,t),\instd(x,y_2,c,t),\\
	 &&& \ppreceq(c,c_1, rel2), rel1 \neq rel2.\\[0.5ex]
  \mbox{(propp-subex)} 
   & \instd(x,z,c,t) & \rif & 
	\defsubex(v,y,z,c_1, rel1), \tripled(x,v,x',c,t),\instd(x',y,c,t),\\
   &&& \ppreceq(c,c_1, rel2), rel1 \neq rel2.\\[0.5ex]
   \mbox{(propp-supex)} 
   & \tripled(x,r,x',c,t) & \rif & 
	 \defsupex(y,r,x',c_1, rel1), \instd(x,y,c,t),\\
	 &&& \ppreceq(c,c_1, rel2), rel1 \neq rel2.\\[0.5ex]
\mbox{(propp-forall)} 
   & \instd(y,z',c,t) & \rif & 
	 \defsupforall(z,r,z',c_1, rel1), \instd(x,z,c,t), \tripled(x,r,y,c,t),\\
	&&& \ppreceq(c,c_1, rel2), rel1 \neq rel2.\\[0.5ex]      
\mbox{(propp-leqone)} & \unsat(t) & \rif & 
 \defsupleqone(z,r,c_1, rel1), \instd(x,z,c,t),\\													
  &&& \tripled(x,r,x_1,c,t), \tripled(x,r,x_2,c,t),\\
  &&& \ppreceq(c,c_1, rel2), rel1 \neq rel2.\\[0.5ex]

  \mbox{(propp-subr)} 
  & \tripled(x,w,x',c,t) & \rif & 
	\defsubr(v,w,c_1, rel1), \tripled(x,v,x',c,t), \\
  &&& \ppreceq(c,c_1, rel2), rel1 \neq rel2.\\[0.5ex]
\mbox{(propp-subrc)} 
   & \tripled(x,w,z,c,t) & \rif & 
	\defsubrc(u,v,w,c_1, rel1), \tripled(x,u,y,c,t), \tripled(y,v,z,c,t),\\
	&&& \ppreceq(c,c_1, rel2), rel1 \neq rel2.\\[0.5ex]
\mbox{(propp-dis)} 
   & \unsat(t) & \rif & 
	\defdis(u,v,c_1, rel1), \tripled(x,u,y,c,t), \tripled(x,v,y,c,t),\\
	&&& \ppreceq(c,c_1, rel2), rel1 \neq rel2.\\[0.5ex]             
   \mbox{(propp-inv1)} 
   & \tripled(y,v,x,c,t) & \rif & 
	 \definv(u,v,c_1, rel1), \tripled(x,u,y,c,t), \\
	&&& \ppreceq(c,c_1, rel2), rel1 \neq rel2.\\
   \mbox{(propp-inv2)} 
   & \tripled(x,u,y,c,t) & \rif & 
	 \definv(u,v,c_1, rel1), \tripled(y,v,x,c,t), \\
	&&& \ppreceq(c,c_1, rel2), rel1 \neq rel2.\\[0.5ex]              
 \mbox{(propp-irr)} 
   & \unsat(t) & \rif & 
	\defirr(u,c_1, rel1), \tripled(x,u,x,c,t),\\
	&&& \ppreceq(c,c_1, rel2), rel1 \neq rel2.\\[1.5ex]
\end{array}$}
\hrule
\end{table}

\begin{table}[tp]%
\caption{Deduction rules $P_{\default}$ for defeasible axioms: test rules}
\label{tab:test-rules-tgl} 

\bigskip

\hrule\mbox{}\\[1ex]
\scalebox{.85}{
\small
$\begin{array}{l@{\!\!\!\!}r@{\ }r@{\ }l@{}}

	
   \mbox{(test-subc)} 
   &    \test(\nlit(x,z,c)) 
	 & \rif & \defsubs(y,z,c_1, rel1), \instd(x,y,c,\ml{main}),\\ 
	 &&& \pprec(c,c_2, rel1), \ppreceq(c_2,c_1, rel2), rel1 \neq rel2.\\[0.5ex]
   \mbox{(constr-subc)} 
   &  & \rif & \testf(\nlit(x,z,c)), \ovr(\subClass,x,y,z,c_1,c, rel).\\[1ex]
	
   \mbox{(test-subcnj)} 
   &    \test(\nlit(x,z,c)) & \rif & 
	\defsubcnj(y_1,y_2,z,c_1, rel1),\\ 
  &&& \instd(x,y_1,c,\ml{main}), \instd(x,y_2,c,\ml{main}),\\ 
	&&& \pprec(c,c_2, rel1), \ppreceq(c_2,c_1, rel2), rel1 \neq rel2.\\[0.5ex]
   \mbox{(constr-subcnj)} 
   &  & \rif & \testf(\nlit(x,z,c)), \ovr(\subConj,x,y_1,y_2,z,c_1,c, rel).\\[1ex]
   \mbox{(test-subex)} 
   &    \test(\nlit(x,z,c)) & \rif & 
	 \defsubex(r,y,z,c_1, rel1), \\
   &&& \tripled(x,r,w,c,\ml{main}), \instd(w,y,c,\ml{main}),\\
	 &&& \pprec(c,c_2, rel1), \ppreceq(c_2,c_1, rel2), rel1 \neq rel2.\\[0.5ex]
	 \mbox{(constr-subex)} 
   &  & \rif & \testf(\nlit(x,z,c)), \ovr(\subEx,x,r,y,z,c_1,c, rel).\\[1ex]
   \mbox{(test-supex)} 
   &  \test(\nrel(x,r,w,c)) & \rif & 
	 \defsupex(y,r,w,c_1, rel1), \instd(x,y,c,\ml{main}),\\ 
	 &&& \pprec(c,c_2, rel1), \ppreceq(c_2,c_1, rel2), rel1 \neq rel2.\\[0.5ex]
   \mbox{(constr-supex)} 
   &  & \rif & \testf(\nrel(x,r,w,c)), \ovr(\supEx,x,r,y,w,c_1,c, rel).\\[1ex]
   \mbox{(test-supforall)} 
   &    \test(\nlit(y,w,c)) & \rif & 
	 \defsupforall(z,r,w,c_1, rel1),\\ 
	 &&& \instd(x,z,c,\ml{main}), \tripled(x,r,y,c,\ml{main}),\\
   &&& \pprec(c,c_2, rel1), \ppreceq(c_2,c_1, rel2), rel1 \neq rel2.\\[0.5ex]
   \mbox{(constr-supforall)} 
   &  & \rif & \testf(\nlit(y,w,c)), \ovr(\supForall,x,y,z,r,w,c_1,c, rel).\\[1ex]   
   
	 \mbox{(test-subr)} 
   & \test(\nrel(x,s,y,c)) & \rif & 
	 \defsubr(r,s,c_1, rel1), \tripled(x,r,y,c,\ml{main}),\\ 
	 &&& \pprec(c,c_2, rel1), \ppreceq(c_2,c_1, rel2), rel1 \neq rel2.\\[0.5ex]
   \mbox{(constr-subr)} 
   &  & \rif & \testf(\nrel(x,s,y,c)), \ovr(\subRole,x,r,y,s,c_1,c, rel).\\[1ex]
   \mbox{(test-subrc)} 
   &    \test(\nrel(x,t,z,c)) & \rif & 
	 \defsubrc(r,s,t,c_1, rel1),\\ 
	&&& \tripled(x,r,y,c,\main), \tripled(y,s,z,c,\main),\\
	&&& \pprec(c,c_2, rel1), \ppreceq(c_2,c_1, rel2), rel1 \neq rel2.\\[0.5ex]
   \mbox{(constr-subrc)} 
   &  & \rif & \testf(\nrel(x,t,z,c)), \ovr(\subRChain,x,y,z,r,s,t,c_1,c, rel).\\[1ex]
	
   \mbox{(test-inv1)} 
   &    \test(\nrel(x,s,y,c)) & \rif & 
	  \definv(r,s,c_1, rel1), \tripled(x,r,y,c,\main),\\
	 &&& \pprec(c,c_2, rel1), \ppreceq(c_2,c_1, rel2), rel1 \neq rel2.\\
   \mbox{(test-inv2)} 
   &    \test(\nrel(y,r,x,c)) & \rif & 
	 \definv(r,s,c_1, rel), \tripled(x,s,y,c,\main),\\ 
	 &&& \pprec(c,c_2, rel1), \ppreceq(c_2,c_1, rel2), rel1 \neq rel2.\\[0.5ex]
   \mbox{(constr-inv1)} 
   &  & \rif & \naf \testf(\nrel(x,s,y,c)), \ovr(\pInv,x,y,r,s,c_1,c, rel).\\
   \mbox{(constr-inv2)} 
   &  & \rif & \naf \testf(\nrel(y,r,x,c)), \ovr(\pInv,x,y,r,s,c_1,c, rel).\\[2ex]
	
   \mbox{(test-fails1)} 
   &  \testf(\nlit(x,z,c)) & \rif & \instd(x,z,c,\nlit(x,z,c)), \naf \unsat(\nlit(x,z,c)).\\
   \mbox{(test-fails2)} 
   &  \testf(\nrel(x,r,y,c)) & \rif & \tripled(x,r,y,c,\nrel(x,r,y,c)), \naf \unsat(\nrel(x,r,y,c)).\\[1ex]

   \mbox{(test-add1)} 
   &  \instd(x, z, c, \nlit(x,z,c)) & \rif & \test(\nlit(x,z,c)).\\
   \mbox{(test-add2)} 
   &  \tripled(x, r, y, c, \nrel(x,r,y,c)) & \rif & \test(\nrel(x,r,y,c)).\\[1ex]

   \mbox{(test-copy1)} 
   &  \instd(x_1, y_1, c, t) & \rif & \instd(x_1,y_1,c,\ml{main}), \test(t).\\
   \mbox{(test-copy2)} 
   &  \tripled(x_1, r, y_1, c, t) & \rif & \tripled(x_1,r,y_1,c,\ml{main}), \test(t).\\[1ex]
\end{array}$}
\hrule
\end{table}

\begin{table}[tp]%
\caption{Rules in $P_{pref}$ for preference definitions: preparation rules}
\label{tab:prep-rules-tgl} 

\bigskip

\hrule\mbox{}\\[1ex]
\scalebox{.9}{
\small
$\begin{array}{l@{\ }r@{\ }r@{\ }l@{}}

   \mbox{(prep-indiv)} 
   &    \indivi(x) & \rif & \nom(x,c).\\[.5ex]
   \mbox{(prep-ovr-subs)} 
   &    \possovr(\subClass(x,y,z),c,rel) & \rif & \defsubs(y,z,c,rel), \\
   & & &  \indivi(x).\\[.5ex]
   \mbox{(prep-ovr-subc)} 
   &    \possovr(\subConj(x,y1,y2,z),c,rel) & \rif & \defsubcnj(y1,y2,z,c,rel), \\
   & & &  \indivi(x).\\[.5ex]
   \mbox{(prep-ovr-subex)} 
   &    \possovr(\subEx(x,r,y,z),c,rel) & \rif & \defsubex(r,y,z,c,rel), \\
   & & &  \indivi(x).\\[.5ex]
   \mbox{(prep-ovr-supex)} 
   &    \possovr(\supEx(x,y,r,w),c,rel)  & \rif & \defsupex(y,r,w,c,rel), \\
   & & &  \indivi(x).\\[.5ex]
   \mbox{(prep-ovr-supfa)} 
   &    \possovr(\supForall(x,y,z,r,w),c,rel) & \rif & \defsupforall(z,r,w,c,rel), \\
   & & & \indivi(x), \indivi(y).\\[.5ex]
   \mbox{(prep-ovr-suble)} 
   &    \possovr(\supLeqOne(x,x1,x2,z,r),c,rel) & \rif & \defsupleqone(z,r,c,rel), \\
   & & &  \indivi(x), \indivi(x1), \indivi(x2).\\[.5ex]
   \mbox{(prep-ovr-subr)} 
   &    \possovr(\subRole(x,y,r,s),c,rel) & \rif & \defsubr(r,s,c,rel), \\
   & & &  \indivi(x), \indivi(y).\\[.5ex]
   \mbox{(prep-ovr-subrc)} 
   &    \possovr(\subRChain(x,y,z,r,s,t),c,rel) & \rif & \defsubrc(r,s,t,c,rel), \\
   & & &  \indivi(x), \indivi(y), \indivi(z).\\[.5ex]
   \mbox{(prep-ovr-dis)} 
   &    \possovr(\pDis(x,y,r,s),c,rel)  & \rif & \defdis(r,s,c,rel), \\
   & & &  \indivi(x), \indivi(y).\\[.5ex]
   \mbox{(prep-ovr-inv)} 
   &    \possovr(\pInv(x,y,r,s),c,rel)  & \rif & \definv(r,s,c,rel), \\
   & & &  \indivi(x), \indivi(y).\\[.5ex]
   \mbox{(prep-ovr-irr)} 
   &    \possovr(\pIrr(x,r),c,rel) & \rif & \defirr(r,c,rel), \\
   & & &  \indivi(x).\\[2ex]
   
   \mbox{(act-ovr-subs)} 
   &    \ovr(\subClass(x,y,z),c_1,c,rel) & \rif & \ovr(\subClass,x,y,z,c_1,c,rel)\\[.5ex]
   \mbox{(act-ovr-subc)} 
   &    \ovr(\subConj(x,y1,y2,z),c_1,c,rel) & \rif & \ovr(\subConj,x,y1,y2,z,c_1,c,rel).\\[.5ex]
   \mbox{(act-ovr-subex)} 
   &    \ovr(\subEx(x,r,y,z),c_1,c,rel) & \rif & \ovr(\subEx,x,r,y,z,c_1,c,rel).\\[.5ex]
   \mbox{(act-ovr-supex)} 
   &    \ovr(\supEx(x,y,r,w),c_1,c,rel)  & \rif & \ovr(\supEx,x,y,r,w,c_1,c,rel).\\[.5ex]
   \mbox{(act-ovr-supfa)} 
   &    \ovr(\supForall(x,y,z,r,w),c_1,c,rel) & \rif & \ovr(\supForall,x,y,z,r,w,c_1,c,rel).\\[.5ex]
   \mbox{(act-ovr-suble)} 
   &    \ovr(\supLeqOne(x,x1,x2,z,r),c_1,c,rel) & \rif & \ovr(\supLeqOne,x,x1,x2,z,r,c_1,c,rel).\\[.5ex]
   \mbox{(act-ovr-subr)} 
   &    \ovr(\subRole(x,y,r,s),c_1,c,rel) & \rif & \ovr(\subRole,x,y,r,s,c_1,c,rel).\\[.5ex]
   \mbox{(act-ovr-subrc)} 
   &    \ovr(\subRChain(x,y,z,r,s,t),c_1,c,rel) & \rif & \ovr(\subRChain,x,y,z,r,s,t,c_1,c,rel).\\[.5ex]
   \mbox{(act-ovr-dis)} 
   &    \ovr(\pDis(x,y,r,s),c_1,c,rel)  & \rif & \ovr(\pDis,x,y,r,s,c_1,c,rel).\\[.5ex]
   \mbox{(act-ovr-inv)} 
   &    \ovr(\pInv(x,y,r,s),c_1,c,rel)  & \rif & \ovr(\pInv,x,y,r,s,c_1,c,rel).\\[.5ex]
   \mbox{(act-ovr-irr)} 
   &    \ovr(\pIrr(x,r),c_1,c,rel) & \rif & \ovr(\pIrr,x,r,c_1,c,rel).\\[1ex]
\end{array}$}
\hrule
\end{table}

\begin{table}[tp]%
\caption{asprin program $P_{pref}$ for preference definitions}
\label{tab:preference-tbl} 

\bigskip

\hrule\mbox{}\\[1ex]
\begin{tabular}{ll}

\medskip
\mbox{(pref-local)} 
   &  \begin{lstlisting}[mathescape=true]
#preference(LocPref(C, REL), poset){ 
    $\neg \ovr($A,Cp, C,REL)  >> $\ovr($A,Cp,C,REL);
    $\neg \ovr($A1,C1,C,REL) >> $\neg \ovr($A2,C2,C,REL) : 
        $\preceqex$(C1b,C1,REL), $\preceqex$(C2b,C2,REL), 
        $\pprec$(C,C2b,REL), $\pprec$(C2b,C1b,REL), 
        $\possovr$(A1,C1,REL), $\possovr$(A2,C2,REL).
} : context(C), relation(REL).
\end{lstlisting}\\

\medskip
\mbox{(pref-rel\_local)} 
   &  \begin{lstlisting}
#preference(RelPref(REL), pareto){ 
	**LocPref(C, REL) : context(C)
} : relation(REL).
\end{lstlisting}\\

\medskip
\mbox{(pref-global)} 
   &  \begin{lstlisting}
#preference(GlobPref, lexico){ 
    W::**RelPref(REL) : relation_weight(REL, W)
}.
\end{lstlisting} \\

\medskip
\mbox{(pref-optima)} 
   &  \begin{lstlisting}
#optimize(GlobPref).
\end{lstlisting}
\end{tabular}
\\ 

\medskip

\hrule
\end{table}

\newpage

\section{Translation correctness: more details}

Let us consider a CAS-interpretation 
$\IC_{\CAS} = \stru{\IC, \ov{\casmap}}$ 
with $\ov{\casmap} = (\casmap_t, \casmap_c)$.
%
We construct its set of atoms corresponding to its overriding assumptions as:
\begin{center}
$\OVR(\IC_\CAS) = 
\{\, \ovr(p(\ee),\mlc,rel) \;|\; \stru{\alpha, \ee} \in \casmap_{rel}(\mlc),\, 
                                 I_{rl}(\alpha, \mlc_1) = p \,\}.$
\end{center}
We can build\footnote{See similar construction in~\cite[Section~A.5.2]{BozzatoES:18} for 
further details.} from its components a corresponding Herbrand interpretation 
$I(\IC_{\CAS})$ of the program $PK(\CKB)$ as the smallest set of literals containing:

\begin{itemize}
\itemsep=0pt
\item 
all facts of $PK(\CKB)$; 
\item 
 $\instd(a, A, \mlc, \smlmain)$, if $\I(\mlc) \models A(a)$; 
\item 
  $\tripled(a,R,b, \mlc, \smlmain)$, if  $\I(\mlc) \models R(a,b)$;
\item
  each $\ovr$-literal from $\OVR(\IC_{\CAS})$;
\item 
  each literal $l$ with environment $t \neq \smlmain$,
  if $\test(t) \in I(\IC_{\CAS})$ and 
	$l$ is in the head of a rule $r \in \grd(PK(\CKB))$
	with $\Body(r) \subseteq I(\IC_{\CAS})$;
\item
  $\test(t)$, if
  $\testf(t)$ appears in the body of an overriding rule $r$
  in $\grd(PK(\CKB))$ and the head of $r$ is an $\ovr$ literal in
  $\OVR(\IC_{\CAS})$;  
\item
  $\unsat(t) \in I(\IC_{\CAS})$, if
  adding the literal corresponding to $t$ to the local interpretation of its
  context $\mlc$ violates some axiom of the local knowledge $\KB_\mlc$;
\item
  $\testf(t)$, if $\unsat(t) \notin I(\IC_{\CAS})$.
\end{itemize}
Note that $\unsat(\smlmain)$ is not included in $I(\IC_{\CAS})$.
Moreover, as all the facts of $PK(\CKB)$ are included in
the set, also the atoms $\pprec$ and $\ppreceq$
defining the contextual relations of $\CKB$ are included in 
$I(\IC_{\CAS})$.

The correctness result provided by Theorem~\ref{thm:encode} in Section~\ref{sec:asprin_enc}
is a consequence of the following Lemma~\ref{lem:local-correctness}, showing 
the correspondence between the minimal justified CKR-models of $\CKB$
and the answer sets of $PK(\CKB)$, and
Lemma~\ref{lem:pref-correctness}, proving the correspondence
between preferred models and answer sets selected by the asprin preference in $P_{pref}$.

\begin{lemma} 
\label{lem:local-correctness}
Let $\CKB$ be a multi-relational sCKR in $\SROIQrld$ normal form, then:
	\begin{enumerate}[label=(\roman*).]
	\item 
	  for every (named) justified clashing assumption $\ov{\casmap}$, 
		the interpretation $S = I(\hat{\IC}(\ov{\casmap}))$ is an answer set of $PK(\CKB)$;
	\item
	   every answer set $S$ of $PK(\CKB)$ is of the form 
		 $S = I(\hat{\IC}(\ov{\casmap}))$ with $\ov{\casmap}$ 
		 a (named) justified clashing assumption for $\CKB$.
	\end{enumerate}
\end{lemma}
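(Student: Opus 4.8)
The plan is to follow the template of the single-relation correctness proof of \citeN[Lemma~6]{BozzatoES:18} and to isolate the genuinely new obstacles caused by the presence of several contextual relations. The whole argument turns on the bridge construction $I(\IC_{\CAS})$ defined above, which maps a named CAS-interpretation to a Herbrand interpretation of $PK(\CKB)$, together with the least Herbrand CAS-model $\hat{\IC}(\ov{\casmap})$ determined by a clashing assumption. I would prove the two directions separately, in each case decomposing the program along the rule tables: the $\SROIQrl$ core rules $P_{rl}$, the three inheritance families (strict in Table~\ref{tab:strict-inheritance-rules-tgl}, defeasible in Table~\ref{tab:def-inheritance-rules-tgl}, parallel in Table~\ref{tab:par-inheritance-rules-tgl}), the overriding rules (Table~\ref{tab:ovr-rules-tgl}), and the test rules (Table~\ref{tab:test-rules-tgl}). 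The preliminary lemma I would establish first, and reuse throughout, is that the order atoms $\pprec$, $\ppreceq$ and $\preceqex$ computed by $P_{glob}$ faithfully encode $\prec_i$, $\preceq_i$ and $\preceq_{-i}$, and that the body pattern ``$\pprec(c,c_2,rel_1),\ppreceq(c_2,c_1,rel_2),rel_1\neq rel_2$'' shared by the inheritance and overriding rules ranges exactly over the pairs mandated by conditions (i)--(iii) of Definition~\ref{def:cas-model}.

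For direction (i), take a named justified clashing assumption $\ov{\casmap}=(\casmap_t,\casmap_c)$, form its least Herbrand CAS-model $\hat{\IC}(\ov{\casmap})$ and set $S=I(\hat{\IC}(\ov{\casmap}))$. I would first check $S\models PK(\CKB)$ rule by rule: condition (i) (strict propagation along $\preceq_{*}$) is matched by $P_{rl}$ together with the strict-inheritance rules; condition (ii) (defeasible axioms behaving strictly along $\preceq_{-i}$) by the parallel-inheritance rules; and condition (iii) (overriding only along $\prec_i$) by the defeasible-inheritance rules guarded with $\naf\,\ovr(\cdots)$, where the firing of an $\ovr$ atom coincides exactly with membership of the corresponding pair in $\casmap_i$. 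The $\eval$ rules are handled as in the strict case via $\subEval$/$\subEvalR$. It then remains to show $S$ is an \emph{answer set}, i.e.\ minimal for the GL-reduct $G_S(PK(\CKB))$: here the only genuinely non-deterministic atoms are the $\ovr$ literals, and justification of $\ov{\casmap}$ guarantees each is supported by a successful test derivation (so it survives), while all remaining atoms arise by a deterministic least-fixpoint computation in the ``$\main$'' and test environments; minimality then follows as in the single-relation proof.

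For direction (ii), given an answer set $S$ of $PK(\CKB)$ I would read off $\ov{\casmap}$ by setting $\casmap_i(\mlc)=\{\stru{\alpha,\ee}\mid \ovr(p(\ee),\mlc,i)\in S\}$ (inverting $\OVR(\cdot)$). Inspecting the overriding rules shows each such pair is a legitimate clashing assumption for $\mlc$ and $\prec_i$, since an $\ovr$ atom can only fire for a defeasible axiom inherited to $\mlc$ along $\prec_i$. To see that $\ov{\casmap}$ is \emph{justified}, I would use the test machinery: the $\naf\,\testf(\cdot)$ guards and the companion constraints (\texttt{constr-*}) force, for every fired $\ovr$ atom, an environment $t$ in which adding the instantiated conclusion yields $\unsat(t)$; by the \texttt{test-copy} rules this environment contains all main-track consequences, so the derived $\unsat$ exhibits a clashing set, and under UNA on named models this set is stable under all $\NI$-congruent reinterpretations, matching the justification definition. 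Finally, minimality of $S$ forces $S=I(\hat{\IC}(\ov{\casmap}))$, as both are the least model of the reduct relative to the same $\ovr$ choice.

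The main obstacle, where the multi-relational generalization really bites, is twofold. First, I expect the delicate part of the preliminary order lemma to be verifying that the ``$rel_1\neq rel_2$'' pattern, which mixes one strict $\pprec$-step with a possibly reflexive $\ppreceq$-step in a \emph{different} relation, together with per-relation transitivity of $\pprec$, covers precisely $\preceq_{*}$ and $\preceq_{-i}$ on the time/coverage hierarchies considered, and dually that $\preceqex$ computes $\preceq_{-i}$; getting this quantifier alternation right is the crux. Second, and more subtle, is the justification correspondence in the presence of \emph{parallel} inheritance: because a defeasible axiom $\default_i(\alpha)$ is now also propagated strictly across the other relations (Table~\ref{tab:par-inheritance-rules-tgl}), a clashing set witnessing an exception at $\mlc$ may draw on axioms that reach $\mlc$ only along $\preceq_{-i}$. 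I would therefore have to argue that the environments populated by the \texttt{test-*} and \texttt{test-copy} rules indeed include all such parallel-inherited consequences, so that the program's local unsatisfiability test coincides with semantic unsatisfiability of $S\cup\{\alpha(\ee)\}$ over the full inherited theory; this is exactly the step that must be re-examined relative to \citeN{BozzatoES:18}.
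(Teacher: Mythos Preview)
Your proposal is correct and follows essentially the same approach as the paper: both reduce to the template of \citeN[Lemma~6]{BozzatoES:18}, argue direction (i) by a case analysis over the rule families (local $P_{rl}$, strict/defeasible/parallel inheritance, overriding, test) matched against conditions (i)--(iii) of Definition~\ref{def:cas-model}, and argue direction (ii) by reading off $\ov{\casmap}$ from the $\ovr$ atoms, reconstructing the CAS-model, and deferring minimality and justification to the original proof. The paper's sketch handles your first flagged obstacle not by a general order lemma but by explicitly restricting to the two-relation case, so that $\preceq_{-i}$ collapses to $\preceq_j$ for the single other relation $j$ and the ``$\pprec(c,c_2,rel_1),\ppreceq(c_2,c_1,rel_2),rel_1\neq rel_2$'' pattern matches directly; your second obstacle (test environments seeing parallel-inherited consequences) is simply absorbed into the appeal to \citeN{BozzatoES:18}, which your more explicit treatment would sharpen.
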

\begin{proof}[Proof (Sketch)]
	Intuitively, we are interested in computing the correspondence with all
	(not necessarily preferred) answer sets of $PK(\CKB)$:
	we can show that the new form of rules for managing multiple 
	contextual relations do not influence the
	construction of such answer sets, thus 
	the result can be proved similarly to Lemma 6 in~\cite{BozzatoES:18}
	and its extension to hierarchies in~\cite[Lemma 1]{DBLP:conf/kr/BozzatoSE18}.
	
	Let us consider the interpretation $S = I(\hat{\IC}(\ov{\casmap}))$ 
	as defined above and the reduct $G_S(PK(\CKB))$ of $PK(\CKB)$ with respect $S$.
	The lemma can then be proved 
	by showing that the answer sets of $PK(\CKB)$ 
	coincide with the sets $S = I(\hat{\IC}(\ov{\casmap}))$ where $\ov{\casmap} = (\chi_t, \chi_c)$ 
	is composed by justified clashing assumptions of $\CKB$.
	
\smallskip
\noindent\textbf{(i).}\ \ \ Assuming that $\ov{\casmap} = (\chi_t, \chi_c)$ 
 is justified, we show that $S = I(\hat{\IC}(\ov{\casmap}))$ 
 is an answer set of $PK(\CKB)$.
 
 We first prove that $S \models G_{S}(PK(\CKB))$, that is for every
 rule instance $r \in G_{S}(PK(\CKB))$ it holds that $S \models r$.  
 This is proved by examining the possible rule forms that
 occur in $G_{S}(PK(\CKB))$.
 Here we show some representative cases (see also~\cite{BozzatoES:18}):
\begin{itemize}
\item
	 \textbf{(prl-instd):}
    then $\insta(a,A,\mlc) \in I(\hat{\IC}(\ov{\casmap}))$ and, by definition of the
		translation, $A(a) \in \KB_\mlc$. 
		This implies that $\I(\mlc) \models A(a)$
		and thus 
		$\instd(a,A,\mlc,\smlmain)$ is added to $I(\hat{\IC}(\ov{\casmap}))$.
	\item
	   \textbf{(prl-subc):}
    then $\{\subClass(A,B,\mlc), \instd(a,A,\mlc, t)\} \subseteq I(\hat{\IC}(\ov{\casmap}))$.
		By definition of the translation, we have $A \subs B \in \KB_\mlc$.
		For the construction of $I(\hat{\IC}(\ov{\casmap}))$, 
		if $t=\smlmain$ then $\I(\mlc) \models A(a)$. 
		This implies that $\I(\mlc) \models B(a)$ 
		and $\instd(a,B,\mlc,\smlmain)$ is added to $I(\hat{\IC}(\ov{\casmap}))$.
		Otherwise, if $t \neq \smlmain$ then
		$\instd(a,B,\mlc,t)$ is directly added to $I(\hat{\IC}(\casmap))$ by its construction.
	\item
	   \textbf{(ovr-subc):}
			 then $\{\defsubs(A,B,\mlc_1, rel1),$ $\pprec(\mlc,\mlc_2, rel1),$ 
			$\ppreceq(\mlc_2,\mlc_1, rel2),$\linebreak 
			$\instd(a,A,\mlc,\smlmain)\} \subseteq I(\hat{\IC}(\ov{\casmap}))$.
	   Since $r\in G_{S}(PK(\CKB))$,
	   then $\testf(\nlit(a,B,\mlc)) \notin I(\hat{\IC}(\ov{\casmap}))$.		
 	 By construction of $I(\hat{\IC}(\ov{\casmap}))$, this implies that
		 $\unsat(\nlit(a,B,\mlc)) \in I(\hat{\IC}(\ov{\casmap}))$,
		 meaning that $\I(\mlc) \models \non B(a)$.
		 Thus, $\I(\mlc)$ satisfies the clashing set $\{A(a), \non B(a)\}$
		 for the clashing assumption $\stru{A \subs B, a}$ for $rel1$ in context $\mlc$. 
		 Consequently, $\stru{A \subs B, a} \in \casmap_{rel1}(\mlc)$
		 and by construction $\ovr(\subClass,a,A,B,\mlc)$ is added to $I(\hat{\IC}(\ov{\casmap}))$.
	\item
	   \textbf{(props-subc):}
     then $\{\subClass(A,B,\mlc_1),$ $\instd(a,A,\mlc,t),$ 
		$\ppreceq(\mlc_2,\mlc_1, rel2),$\linebreak 
		$\pprec(\mlc,\mlc_2, rel1)\} \subseteq I(\hat{\IC}(\ov{\casmap}))$.
	   By definition, $A \subs B \in \KB_{\mlc_1}$ and, if $t = \smlmain$, $\I(\mlc) \models A(a)$.
	   Thus, for the definition of $\CAS$-model (condition (i) on strict axioms propagation),
		 $\instd(a,B,\mlc,t)$ is added to $I(\hat{\IC}(\ov{\casmap}))$.		
		 If $t \neq \smlmain$, then $\instd(a,B,\mlc,t)$ is added to $I(\hat{\IC}(\casmap))$ by construction.		
	\item
	   \textbf{(propd-subc):}
     then $\{\defsubs(A,B,\mlc_1, rel1), \instd(a,A,\mlc,t), 
		 \pprec(\mlc,\mlc_2, rel1),$\linebreak 
		$\ppreceq(\mlc_2,\mlc_1, rel2)\} \subseteq I(\hat{\IC}(\ov{\casmap}))$.
	   Since $r\in G_{S}(PK(\CKB))$,
		$\ovr(\subClass, a,A,B,\mlc_1,\mlc, rel1) \notin \OVR(\hat{\IC}(\ov{\casmap}))$ 
		 and hence $\stru{A \subs B, a} \notin \casmap_{rel1}(\mlc)$.
	   By definition, $D(A \subs B) \in \KB_{\mlc_1}$ and, if $t = \smlmain$, $\I(\mlc) \models A(a)$.
	   Thus, for the definition of $\CAS$-model (condition (iii) on defeasible axioms propagation),
		 $\instd(a,B,\mlc,t)$ is added to $I(\hat{\IC}(\ov{\casmap}))$.		
		 If $t \neq \smlmain$, then $\instd(a,B,\mlc,t)$ is added to $I(\hat{\IC}(\casmap))$ by construction.
	\item
	   \textbf{(propp-subc):}
     then $\{\defsubs(A,B,\mlc_1, rel1), \instd(a,A,\mlc,t),$  
		 $\ppreceq(\mlc,\mlc_1, rel2)\}$ $\subseteq I(\hat{\IC}(\ov{\casmap}))$.
	   By definition, $D(A \subs B) \in \KB_{\mlc_1}$ and, if $t = \smlmain$, $\I(\mlc) \models A(a)$
		 with $\mlc \prec_{rel2} \mlc_1$.
	   Thus, for the definition of $\CAS$-model (condition (ii) on propagation of defeasible
		 axioms over other relations),
		 $\instd(a,B,\mlc,t)$ is added to $I(\hat{\IC}(\ov{\casmap}))$.		
		 If $t \neq \smlmain$, then $\instd(a,B,\mlc,t)$ is added to $I(\hat{\IC}(\casmap))$ by construction.
	\item
	\textbf{(test-subc):}
		 then $\{\defsubs(A,B, \mlc_1, rel1), \instd(a,A,\mlc,\smlmain),$
		$\pprec(\mlc,\mlc_2, rel1),$\linebreak 
		$\ppreceq(\mlc_2,\mlc_1, rel2)\} \subseteq I(\hat{\IC}(\casmap))$.				 
		Thus $\default(A \subs B) \in \KB_{\mlc_1}$ and $\I(\mlc) \models A(a)$
		with $\mlc \prec_{rel1} \mlc_2 \preceq_{rel2} \mlc_1$.
		By the construction of $I(\hat{\IC}(\ov{\casmap}))$
		we have that $\test(\nlit(a,B,\mlc)) \in I(\hat{\IC}(\casmap))$.
\end{itemize}
Minimality of $S=I(\hat{\IC}(\casmap))$ w.r.t.\ the positive
  deduction rules of $G_{S}(PK(\CKB))$ can then be motivated
	as in the original proof in~\cite{BozzatoES:18}:
	thus, $I(\hat{\IC}(\ov{\casmap}))$ is an answer set of $PK(\CKB)$.

\smallskip
\noindent\textbf{(ii).}\ \ \ 
Let $S$ be an answer set of $PK(\CKB)$.
We show that there is some  justified clashing assumption
$\ov{\casmap}$ for $\CKB$ such that $S = I(\hat{\IC}(\ov{\casmap}))$ holds.

Note that as $S$ is an answer set for the CKR program,
all literals on $\ovr$ and $\testf$ in $S$ are derivable from the
reduct $G_{S}(PK(\CKB))$. 
By the definition of $I(\hat{\IC}(\ov{\casmap}))$ we can easily build 
a model $\IC_S = \stru{\IC_S, \ov{\chi}^S}$ from the answer set $S$ as follows:
 for every $c \in \N$,
  we build the local interpretation $\I_S(c)= \stru{\Delta_c, \cdot^{\I(c)}}$ as follows:  	

  \begin{itemize}
  \itemsep=0pt
  \item 
    $\Delta_c = \{d \;|\; d \in \NI \}$;  
  \item
	  $a^{\I(c)} = a$, for every $a \in \NI$;
	\item
	  $A^{\I(c)} = \{d \in \Delta_c \mid S \models \instd(d, A, c,\smlmain) \}$, for every $A \in \NC$;
	\item
		$R^{\I(c)} = \{(d,d') \in \Delta_c \times \Delta_c \,|\, 
		S \models \tripled(d, R, d', c,\smlmain) \}$ for $R \in \NR$;
  \end{itemize}
	Finally, $\ov{\chi}^S = (\chi^S_t, \chi^S_c)$ where
	$\casmap^S_{rel}(c) = \{\stru{\alpha, \ee} \mid I_{rl}(\alpha, c') = p, \ovr(p(\ee),c,rel) \in S \}$.
  We have to show that 
  $\IC_{S}$ meets the definition of a least justifed CAS-model for 
	a multi-relational $\CKB$, that is:
  \begin{enumerate}[label=(\roman*)]
\itemsep=0pt
 \item
   for every $\alpha \in \KB_\mlc$ (strict axiom), 
   and $\mlc' \preceq_{*}\mlc$, $\I_S(\mlc') \models \alpha$;
 \item
   for every $\default_i(\alpha) \in \KB_\mlc$ 
	 and $\mlc' \preceq_{-i} \mlc$, $\I_S(\mlc') \models \alpha$;
  \item
    for every $\default_i(\alpha) \in \KB_\mlc$ 
    and $\mlc'' \prec_i \mlc' \preceq_{-i} \mlc$,    
		if $\stru{\alpha,\vc{d}} \notin \casmap_i (\mlc'')$,
    then $\I_S(\mlc'') \models \phi_\alpha(\vc{d})$.
\end{enumerate}
Note that, since we are considering multi-relational CKRs 
based only on two relations (time and coverage), 
the relational closure $\mlc' \preceq_{-i} \mlc$ can be read 
as $\mlc' \preceq_{j} \mlc$ with $j \neq i$: 
this corresponds to the conditions 
$\ppreceq(\mlc', \mlc, rel2)$ with $rel1 \neq rel2$
in the formulation of the rules.

Item (i) should be proved in the local case where $\mlc'=\mlc$ and
in the ``strict propagation'' case where $\mlc'\prec_{*} \mlc$. The second case can be shown
similarly to the local case, considering strict propagation rules in 
Table~\ref{tab:strict-inheritance-rules-tgl}.
Thus, considering $\mlc'=\mlc$, we verify the condition 
by showing that, for every $\KB_\mlc$, 
we have $\I_S(\mlc) \models \KB_\mlc$.
This can be shown by cases considering the form of
all of the (strict) axioms $\beta \in \Lcal_{\Sigma,\N}$ that can occur in $\KB_\mlc$.
For example (the other cases are similar):
	\begin{itemize}
	\item 
	  Let $\beta = A(a) \in \KB_\mlc$, then, by rule (prl-instd),
	  $S \models \instd(a,A,\mlc,\smlmain)$. 
	  This directly implies that $a^{\I(c)} \in A^{\I(c)}$.		
	\item 
	  Let $\beta = A \subs B \in \KB_\mlc$, then
	  $S \models \subClass(A,B,\mlc)$. 
		If $d \in A^{\I(\mlc)}$,
	  then by definition\linebreak 
		$S \models \instd(d,A,\mlc,\smlmain)$.
	  By rule (prl-subc) we obtain that $S \models \instd(d,B,\mlc,\smlmain)$
	  and thus $d \in B^{\I(\mlc)}$.		
  \end{itemize}
	Condition (ii) can be proved similarly, considering rules of Table~\ref{tab:par-inheritance-rules-tgl}.
	In particular, assuming 
	that $\default_i(\beta) \in \KB_{\mlc'}$ with $\mlc \preceq_{-i} \mlc'$
	we can proceed by cases on the possible forms of $\beta$
	and consider the (strict) propagation of defeasible axioms to $\mlc$
	along the ``parallel'' relations. For example:
	\begin{itemize}
	\item 
	  Let $\beta = A(a)$. Then, by definition of the translation,
	  we have 
		$S \models \definst(a, A, \mlc',rel1)$.
		Moreover, since $\mlc \preceq_{rel2} \mlc'$,
		we have $S \models \ppreceq(\mlc, \mlc', rel2)$ with $rel1 \neq rel2$.
		By the corresponding instantiation of
		rule (propp-inst), it holds that
		$S \models \instd(a, A, \mlc,\smlmain)$.		
	  By definition, this means that $a^{\I(\mlc)} \in A^{\I(\mlc)}$.
	\item 
	  Let $\beta = A \subs B$. Then, by definition of the translation,
	  $S \models \defsubs(A, B, \mlc', rel1)$.
		Since $\mlc \preceq_{rel2} \mlc'$,
		we have $S \models \ppreceq(\mlc, \mlc', rel2)$ with $rel1 \neq rel2$.
		If $a^{\I(\mlc)} \in A^{\I(\mlc)}$,
	  then by definition $S \models \instd(a,A,\mlc,\smlmain)$:
	  by rule (propp-subc), we obtain that $S \models \instd(a,B,\mlc,\smlmain)$
	  and thus $a^{\I(\mlc)} \in B^{\I(\mlc)}$.
  \end{itemize}	
  To prove condition (iii), let us assume that $\default_i(\beta) \in \KB_{\mlc'}$ with 
	$\mlc \prec_i \mlc'' \preceq_{-i} \mlc'$.
  We proceed again by cases on the possible forms of $\beta$ as in the original proof in~\cite{BozzatoES:18},
	by considering the defeasible propagation to $\mlc$ along the relation $i$. For example:
	\begin{itemize}
	\item 
	  Let $\beta = A(a)$. Then, by definition of the translation,
	  we have that $S \models \definst(a, A, \mlc',rel1)$.
	  Suppose that 
		$\Pair{A(x)}{a} \notin \casmap^{S}_{rel1}(\mlc)$.
	  Then by definition, $\ovr(\insta, a, A, \mlc', \mlc, rel1) \notin \OVR(\hat{\IC}(\ov{\casmap}))$.
		By construction, we have $S \models \pprec(\mlc, \mlc'', rel1)$
		and $S \models \ppreceq(\mlc'', \mlc', rel2)$.		
	  By the definition of the reduction, the corresponding instantiation
	  of rule (propd-inst) has not been removed from $G_{S}(PK(\CKB))$: this implies that
	  $S \models \instd(a, A, \mlc,\smlmain)$.
	  By definition, this means that $a^{\I(\mlc)} \in A^{\I(\mlc)}$.
	\item 
	  Let $\beta = A \subs B$. Then, by definition of the translation,
	  $S \models \defsubs(A, B, \mlc', rel1)$.
		As above, we also have $S \models \pprec(\mlc, \mlc'', rel1)$
		and $S \models \ppreceq(\mlc'', \mlc', rel2)$.
	  Let us suppose that $b^{\I(\mlc)} \in A^{\I(\mlc)}$: then 
	  $S \models \instd(b, A, \mlc, \smlmain)$.
	  Suppose that $\Pair{A \subs B}{b} \notin \casmap_{S}(\mlc)$:
	  by definition,
		$\ovr(\subClass, b, A, B, \mlc', \mlc, rel1) \notin \OVR(\hat{\IC}(\ov{\casmap}))$.
	  By the definition of the reduction, the corresponding instantiation
	  of rule (propd-subc) has not been removed from $G_{S}(PK(\CKB))$: 
	  this implies that $S \models \instd(b, B, \mlc, \smlmain)$.
	  Thus, by definition, this means that $b^{\I(\mlc)} \in B^{\I(\mlc)}$.
 \end{itemize}
We have shown that $\IC_S$ is a CAS-model of $\CKB$:
using the same reasoning in the original proof in~\cite{BozzatoES:18}
we can also prove the $\IC_S$ corresponds to the least model 
and that $\ov{\chi}^S$ is justified, thus proving the result.
\end{proof}

\begin{lemma} 
\label{lem:pref-correctness}
  Let $\CKB$ be a multi-relational sCKR in $\SROIQrld$ normal form.
	Then, $\hat{\IC}$ is a CKR model of $\CKB$ iff
	there exists a (named) justified clashing assumption $\ov{\casmap}$ s.t.
	$I(\hat{\IC}(\ov{\chi}))$ is a preferred answer set of $PK(\CKB) \cup P_{pref}$.
\end{lemma}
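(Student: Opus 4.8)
The plan is to separate the two concerns that the statement couples, namely \emph{which} answer sets are feasible and \emph{which} of them are preferred, handling the first by Lemma~\ref{lem:local-correctness} and the second by peeling off the three layers of the asprin preference in $P_{pref}$. First I would observe that, by Lemma~\ref{lem:local-correctness}, the answer sets of $PK(\CKB)$ are exactly the sets $S = I(\hat{\IC}(\ov{\casmap}))$ for the named justified clashing assumptions $\ov{\casmap}$ of $\CKB$, and that $\ov{\casmap}$ is recoverable from $S$ through its $\ovr$-atoms (via $\OVR(\IC_{\CAS})$). Passing from $PK(\CKB)$ to $PK(\CKB)\cup P_{pref}$ only adds the deterministic preparation rules of Table~\ref{tab:prep-rules-tgl} (which derive the $\possovr$-atoms and the reified $\ovr(\cdot)$-atoms used by the preference) together with the preference directives; neither changes the set of feasible $\ov{\casmap}$-profiles, so the candidate answer sets still stand in the same correspondence with named justified clashing assumptions. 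It then suffices to show that the answer sets declared optimal for \lf{GlobPref} are precisely those whose $\ov{\casmap}$ yields an (MP)-preferred justified CAS model, i.e.\ a CKR model in the sense of Definition~\ref{def:ckr-model}.

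For the preference I would match the three asprin layers against the abstract definition. At the bottom, for a fixed context $\mlc$ and relation $i$, I would show that the poset preference \lf{LocPref}$(\mlc,i)$ and the local preference (LP) single out the same preferred $\casmap_i(\mlc)$: the two ordering lines encode exactly ``avoid overriding when possible'' and ``if forced to override, do so at the least specific context'', which is the exchange condition of (LP). Here I would stress that I only claim agreement on the \emph{preferred} (maximal) elements, since the asprin order is a strict partial order (the transitive closure of \lstinline*>>*) whereas (LP) need not be transitive. Next, I would show that the \lf{pareto} combination \lf{RelPref}$(i)$ over all \lf{LocPref}$(\mlc,i)$ captures item~(i) of (MP) for relation~$i$: being strictly better on one \lf{LocPref}$(\mlc,i)$ and at least as good on all others is literally the requirement that $\casmap^1_i(\mlc) > \casmap^2_i(\mlc)$ at some $\mlc$ while no context is worsened. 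Finally, the \lf{lexico} order \lf{GlobPref}, whose weights are the relation indices, realizes item~(ii) of (MP), i.e.\ the lexicographic combination in which a $\prec_j$ may be sacrificed only to improve a higher-priority $\prec_i$.

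The main obstacle is the \lf{pareto} step, and it is exactly where \eval-disconnectedness (inherited from the hypotheses of Theorem~\ref{thm:encode}) is indispensable. The pareto characterisation of item~(i) of (MP) is sound only if the clashing assumptions at distinct contexts can be varied \emph{independently}: otherwise, improving \lf{LocPref}$(\mlc,i)$ at one context could be infeasible without disturbing another context, so a pareto-incomparable profile need not be (MP)-incomparable. I would therefore prove, using the dependency graph $DEP(\CKB)$, that under \eval-disconnectedness there is no path between $X_{\mlc}$ and $X'_{\mlc'}$ for defeasible concepts or roles at $\mlc\neq\mlc'$; hence the overriding atoms $\ovr(\cdot,\mlc,i)$ at different contexts are mutually unconstrained across the answer sets, which licenses choosing the per-context clashing assumptions independently and makes pareto-optimality coincide with item~(i) of (MP). The failure of this independence for general sCKRs is precisely what forces the restriction, and articulating the independence argument cleanly is the delicate part of the proof.

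Combining the three layers, the \lf{GlobPref}-order and the model preference (MP) select the same clashing assumptions among the feasible ones. Consequently $S = I(\hat{\IC}(\ov{\casmap}))$ is a preferred answer set of $PK(\CKB)\cup P_{pref}$ iff $\stru{\hat{\IC},\ov{\casmap}}$ is a justified CAS model that is (MP)-preferred, which by Definition~\ref{def:ckr-model} holds iff $\hat{\IC}=\hat{\IC}(\ov{\casmap})$ is a CKR model of $\CKB$. This yields both directions of the stated equivalence; as a cross-check one can verify that the induced order agrees with the relation $>_{opt}$ of Section~\ref{sec:al-measures}, whose preferred answer sets were already tied to CKR models.
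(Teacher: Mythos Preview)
Your proposal is correct and follows essentially the same approach as the paper: a three-layer match of \lf{LocPref} against (LP), \lf{RelPref} against item~(i) of (MP) via the pareto combination, and \lf{GlobPref} against item~(ii) via the lexicographic combination, with \eval-disconnectedness entering precisely at the pareto step. The paper factors the pareto equivalence out as a separate result (Theorem~\ref{thm:pareto_equiv}) and proves it by an explicit component-swap construction in $DEP(\CKB)$, which is the concrete form of the independence argument you sketch.
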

For the proof we need the following result:

\begin{theorem}
\label{thm:pareto_equiv}
Let $\CKB$ be an \eval-disconnected sCKR and $\IC_{\CAS} = \stru{\IC, \casmap_1, \dots, \casmap_m}$ a justified model of $\CKB$. Then $\IC_{\CAS}$ is preferred with respect to $P_{1,i}$ defined by 
\begin{quote}
	$P_{1,i}(\stru{\IC^1, \casmap_1^1, \dots, \casmap_m^1}, \stru{\IC^2, \casmap_1^2, \dots, \casmap_m^2})$ iff	
  there exists some $\mlc \in \N$ s.t. 
	$\casmap_i^1(\mlc) > \casmap_i^2(\mlc)$ and not $\casmap_i^2(\mlc) > \casmap_i^1(\mlc)$, and
  for no context $\mlc' \neq \mlc \in \N$ it holds that 
	$\casmap_i^1(\mlc') < \casmap_i^2(\mlc')$ and not $\casmap_i^2(\mlc') < \casmap_i^1(\mlc')$.
\end{quote}
iff it is preferred with respect to $P_{2,i}$ defined by 
\begin{quote}
	$P_{2,i}(\stru{\IC^1, \casmap_1^1, \dots, \casmap_m^1}, \stru{\IC^2, \casmap_1^2, \dots, \casmap_m^2})$ iff	
  there exists some $\mlc \in \N$ s.t. 
	$\casmap_i^1(\mlc) > \casmap_i^2(\mlc)$ and not $\casmap_i^2(\mlc) > \casmap_i^1(\mlc)$, and
  for all contexts $\mlc' \in \N$ it holds that 
	$\casmap_i^1(\mlc') > \casmap_i^2(\mlc')$ or $\casmap_i^1(\mlc') = \casmap_i^2(\mlc')$.
\end{quote}
\end{theorem}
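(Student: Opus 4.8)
The plan is to reduce the statement to a single structural fact enabled by \eval-disconnectedness, namely that the relation-$i$ clashing assumption at one context may be replaced independently of the choices made at the other contexts. Throughout I would keep in mind that by definition (LP) two equal sets $\casmap_i^1(\mlc)=\casmap_i^2(\mlc)$ satisfy $\casmap_i^1(\mlc)>\casmap_i^2(\mlc)$ vacuously (and symmetrically), since the set differences in (LP) are then empty; this is the only place where the non-antisymmetry of $>$ needs care.

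First I would establish the easy inclusion $P_{2,i}\subseteq P_{1,i}$ on justified models, which already yields one direction. The existential clauses of $P_{1,i}$ and $P_{2,i}$ are literally identical, so it suffices to show that the universal clause of $P_{2,i}$ implies that of $P_{1,i}$. Assume $P_{2,i}(\IC_{\CAS}^1,\IC_{\CAS}^2)$ and fix any $\mlc'\in\N$. By hypothesis $\casmap_i^1(\mlc')>\casmap_i^2(\mlc')$ or $\casmap_i^1(\mlc')=\casmap_i^2(\mlc')$; by the vacuity remark, in both cases $\casmap_i^1(\mlc')>\casmap_i^2(\mlc')$ holds, so $\casmap_i^2(\mlc')$ does not strictly dominate $\casmap_i^1(\mlc')$ (i.e.\ it is not the case that $\casmap_i^1(\mlc')<\casmap_i^2(\mlc')$ and not $\casmap_i^2(\mlc')<\casmap_i^1(\mlc')$). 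This is exactly the ``for no context'' requirement of $P_{1,i}$. Hence every $P_{2,i}$-improvement of a model is also a $P_{1,i}$-improvement, and therefore being $P_{1,i}$-preferred implies being $P_{2,i}$-preferred.

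For the converse I would isolate the content of \eval-disconnectedness in an \emph{independence lemma}: if $\IC_{\CAS}=\stru{\IC,\casmap_1,\dots,\casmap_m}$ and $\IC_{\CAS}'=\stru{\IC',\casmap_1',\dots,\casmap_m'}$ are justified CAS-models and $\mlc\in\N$, then the clashing assumption obtained from $\overline{\casmap}$ by overwriting only the entry $\casmap_i(\mlc)$ with $\casmap_i'(\mlc)$ is again justified, and its least Herbrand model is a justified CAS-model. The intuition is that, since no concept or role occurring in a default at $\mlc$ is connected in $DEP(\CKB)$ to one at a different context, whether $\stru{\alpha,\ee}$ is justified at $\mlc$ and whether overriding is forced there depend only on the local choices at $\mlc$; swapping the relation-$i$ assumption at $\mlc$ therefore leaves every other context and all justification conditions intact. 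Proving this recombination property is the crux of the argument, and it is precisely where \eval-disconnectedness is indispensable: without it a swap could propagate across contexts through \eval-expressions, breaking justification and even rendering the induced preference cyclic.

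With the lemma in hand I would finish via the contrapositive of ``$P_{2,i}$-preferred implies $P_{1,i}$-preferred''. Suppose $\IC_{\CAS}$ is not $P_{1,i}$-preferred, witnessed by a justified $\IC_{\CAS}'$ with $P_{1,i}(\IC_{\CAS}',\IC_{\CAS})$ and a context $\mlc$ at which $\casmap_i'(\mlc)$ strictly dominates $\casmap_i(\mlc)$. Applying the independence lemma, I form the justified model $\IC_{\CAS}''$ obtained from $\IC_{\CAS}$ by overwriting only its $(i,\mlc)$-entry with $\casmap_i'(\mlc)$. Then $\casmap_i''(\mlc)$ strictly dominates $\casmap_i(\mlc)$ while $\casmap_i''(\mlc')=\casmap_i(\mlc')$ for all $\mlc'\neq\mlc$, so $\IC_{\CAS}''$ strictly beats $\IC_{\CAS}$ at $\mlc$ and equals it everywhere else, which is exactly $P_{2,i}(\IC_{\CAS}'',\IC_{\CAS})$. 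Hence $\IC_{\CAS}$ is not $P_{2,i}$-preferred, and together with the first implication this gives the equivalence. The only routine point left to verify is that the recombined clashing assumption is indeed \emph{justified} (not merely a CAS-model), which follows from the locality granted by the independence lemma applied to the clashing set witnessing $\casmap_i'(\mlc)$.
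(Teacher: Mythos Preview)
Your proposal is correct and follows essentially the same approach as the paper's proof: both establish the easy direction via $P_{2,i}\Rightarrow P_{1,i}$, and for the converse both argue by contradiction (contrapositive), constructing from a $P_{1,i}$-improvement $\IC_{\CAS}'$ a hybrid model $\IC_{\CAS}''$ that agrees with $\IC_{\CAS}$ everywhere except at the witnessing context $\mlc^{*}$, so that $P_{2,i}(\IC_{\CAS}'',\IC_{\CAS})$ holds. The only presentational difference is that you package the hybrid construction as an abstract ``independence lemma'' (overwrite $\casmap_i(\mlc)$ and pass to the least Herbrand model), whereas the paper builds $\IC_{\CAS}''$ explicitly by swapping the interpretations of all concepts and roles lying in the connected component $C$ of $DEP(\CKB)$ containing the symbols of the overridden default; the paper's explicit component swap is exactly what underwrites your lemma, so the two arguments coincide in substance.
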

\begin{proof}[Proof (sketch) of Theorem~\ref{thm:pareto_equiv}]
$P_{2,i}(\stru{\IC^1, \casmap_1}, \stru{\IC^2, \casmap_2})$ implies $P_{1,i}(\stru{\IC^1, \casmap_1}, \stru{\IC^2, \casmap_2})$. So we consider the other direction.

Let $\IC_{\CAS}$ be preferred with respect to $P_{2,i}$. Assume that
        there exists a justified model $\IC_{\CAS}'$ of $\CKB$ such
        that $P_{1,i}(\IC_{\CAS}', \IC_{\CAS})$ holds. 

Let $\IC_{\CAS} = \stru{\{\mathcal{I}(\mlc)\}_{\mlc\in\N}, \casmap}$
        and $\IC_{\CAS}' = \stru{\{\mathcal{I}'(\mlc)\}_{\mlc\in\N},
        \casmap'}$. We know there exists some $\mlc^{*} \in \N$ such
        that $\casmap'(\mlc^{*}) > \casmap(\mlc^{*})$. This implies
        that some $\default(\alpha) \in \K_{\mlc'}$  and $\ee$ exist
        such that $\stru{\alpha, \ee} \in \casmap(\mlc^{*}) \setminus
        \casmap'(\mlc^{*})$. Let $C$ be the component of $DEP(\CKB)$
        that contains $X_{\mlc^{*}}$, where $X$ is any concept or role
        appearing in $\alpha$. Note that $C$ is independent of the choice of $X$, since any two possible choices $X, X'$ satisfy that $X_{\mlc^{*}}$ and $X'_{\mlc^{*}}$ are reachable from one another. 

We take $\IC_{\CAS}'' = \stru{\{\mathcal{I}''(\mlc)\}_{\mlc\in\N},
        \casmap''}$ such that $X^{\mathcal{I}''(\mlc)} =
        X^{\mathcal{I}(\mlc)}$ for $X_{\mlc} \not\in C$ and
        $X^{\mathcal{I}''(\mlc)} = X^{\mathcal{I}'(\mlc)}$ otherwise,
        and we let $\casmap''(\mlc) = \casmap(\mlc)$ for $\mlc \neq \mlc^{*}$ and $\casmap''(\mlc) = \casmap'(\mlc)$ otherwise. That is, we take the original justified model $\IC_{\CAS} $ and swap the interpretations of all the concepts and roles that were changed in order to satisfy $\alpha(\ee)$ at context $\mlc^{*}$ by their changed interpretation in $\IC_{\CAS}'$. The result, $\IC_{\CAS}''$, is still a model of $\CKB$, as we exchanged the interpretation for the whole component and therefore any relevant axioms stay satisfied, since they were satisfied in $\IC_{\CAS}'$. Furthermore, since $\CKB$ is \eval-disconnected, $\casmap''$ is justified because the default $\default(\alpha)$ does not use any concept/role $X$ such that $X_{\mlc^{*}}$ is connected to $X'_{\mlc}$ such that $\mlc \neq \mlc^{*}$ and $X$ is used in another default $\default(\beta)$. This implies that only the clashing assumptions for $\mlc^{*}$ were changed. 

Now, we however know that $P_{2,i}(\IC_{\CAS}'',\IC_{\CAS})$. This is
        a contradiction to our original assumption. Therefore, there
        cannot exist some $\IC_{\CAS}'$ such that $P_{1,i}(\IC_{\CAS}', \IC_{\CAS})$ and $\IC_{\CAS}$ is preferred with respect to $P_{1,i}$.
\end{proof}
\begin{proof}[Proof (sketch) of Lemma~\ref{lem:pref-correctness}]
Our definition of the preferences in $P_{pref}$ mirrors the definition of preference: 
both go from local preference on the clashing assumptions per context, i.e. $\casmap_i(\mlc)$, to per relation preference and finally to the global preference. We show that the definitions correspond for each step. 

We start with the local preference. So let $X, Y$ be two interpretations of $PK(\CKB)$, $\mlc$ a context and $i$ a relation. Then it holds that $X >_{\text{\lstinline!LocPref($\mlc$, $i$)!}} Y$ iff: 
\begin{itemize}
\item $X$ and $Y$ do not have the same clashing assumptions at $\mlc$ w.r.t.\ relation $i$;
\item for each $\neg \ovr(\alpha_1, e, \mlc_1, \mlc, i)$ s.t.\ $X \not\models \neg \ovr(\alpha_1, e, \mlc_1, \mlc, i)$ and $Y \models \neg \ovr(\alpha_1, e, \mlc_1, \mlc, i)$ there exists $\neg \ovr(\alpha_2, f, \mlc_2, \mlc, i) > \neg \ovr(\alpha_1, e, \mlc_1, \mlc, i)$ s.t.\ $X \models \neg \ovr(\alpha_2, f, \mlc_2, \mlc, i)$ and $Y \not \models \neg \ovr(\alpha_2, f, \mlc_2, \mlc, i)$.
\end{itemize}
or equivalently:
\begin{itemize}
\item $X$ and $Y$ do not have the same clashing assumptions at $\mlc$ w.r.t.\ relation $i$;
\item for each $\alpha_1, e$, where $\alpha_1$ is from context $\mlc_1 \succeq_{-i} \mlc_{1b} \succ_{i}  \mlc$, s.t.\ $X \models \ovr(\alpha_1, e, \mlc_1, \mlc, i)$ and $Y \not\models \ovr(\alpha_1, e, \mlc_1, \mlc, i)$ there exists $\alpha_2, f$, where $\alpha_2$ is from context $\mlc_2 \succeq_{-i} \mlc_{2b} \succ_i \mlc$ , s.t.\ $\mlc_{1b} \succ_i \mlc_{2b}$ and $X \not\models \ovr(\alpha_2, f, \mlc_2, \mlc, i)$ and $Y \models \ovr(\alpha_2, f, \mlc_2, \mlc, i)$.
\end{itemize}
The second item is equivalent to
\begin{quote}
  for every $\eta = \stru{\alpha_1,\ee} \in \chi_i^1(\mlc) \setminus \chi_i^2(\mlc)$ with 
  $\default_i(\alpha_1)$ at a context $\mlc_1 \succeq_{-i} \mlc_{1b} \succ_{i}  \mlc$, 
  there exists an $\eta' = \stru{\alpha_2,\ff} \in \chi_i^2(\mlc) \setminus \chi_i^1(\mlc)$ with 
	$\default_i(\alpha_2)$ at context $\mlc_2 \succeq_{-i} \mlc_{2b} \succ_i \mlc$ 
	such that $\mlc_{1b} \succ_i \mlc_{2b}$.
\end{quote}
So, we see that the only difference between $>_{\text{\lstinline!LocPref($\mlc$, $i$)!}}$ and the order on the context $\mlc$ is the first condition, i.e.\ that the clashing assumptions on $\mlc$ must be different. However, this does not affect us, since the definition of preference for justified interpretations always uses $E =$``$\casmap_i^1(\mlc) < \casmap_i^2(\mlc)$ and not $\casmap_i^2(\mlc) < \casmap_i^1(\mlc)$''. This is equivalent to ``$X >_{\text{\lstinline!LocPref($\mlc$, $i$)!}} Y$ and not $Y >_{\text{\lstinline!LocPref($\mlc$, $i$)!}} X$'', since $E$ can only hold when the clashing assumption sets at $\mlc$ w.r.t.\ relation $i$ are different.

Next, we consider the preference per relation. As we have shown in Theorem~\ref{thm:pareto_equiv} the preferred models with respect to the original preference relation $P_1$ are the same as the preferred models with respect to the preference relation $P_{2,i}$. However, as can be easily seen from the definition, $P_{2,i}$ is the order that has the models that are pareto optimal with respect to the local preference orders \lstinline*LocPref($\mlc$,$i$)* per context as its optimal models. We see that \lstinline*RelPref($i$)* correctly captures this, as it is the pareto combination of the orders \lstinline*LocPref($\mlc$,$i$)* for each context $\mlc$.

Last but not least, we consider the global preference. In our definition, we say that we prioritize the preference on the clashing assumptions with respect to the relations with a lower index. This corresponds to the lexicographical combination of the orders \lstinline*LocPref($i$)* for each relation $i$, when assigning the weight $i$ to relation $i$, when it is the preference relation with index $i$.
\end{proof}

\section{Proofs for Overall Weight Queries}
Before we define the semiring, we ensure that the preference relation \lstinline*LocPref($rel$)* is transitive.
\begin{lemma}
\label{lem:P2i-transitivity}
The preference relation \lstinline*LocPref($rel$)* defined in Section~\ref{sec:asprin_enc} is transitive.
\end{lemma}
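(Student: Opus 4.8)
The plan is to unfold the asprin poset semantics of $\mathtt{LocPref}(\mlc,rel)$ into an explicit combinatorial condition on clashing-assumption sets and then prove transitivity of that condition by a well-founded ``chase''. Fix a context $\mlc$ and a relation $rel$, and write $\succ$ for $>_{\mathtt{LocPref}(\mlc,rel)}$. Every formula occurring in $\mathtt{LocPref}(\mlc,rel)$ is either $\ovr(\eta)$ or $\neg\ovr(\eta)$ for an admissible overriding $\eta$; since exactly one of $\ovr(\eta)$, $\neg\ovr(\eta)$ holds in any interpretation, the set of satisfied formulas $S(X)$ is determined by $\chi(X)=\{\eta \mid \ovr(\eta)\in X\}$, so $\succ$ depends only on $\chi(X),\chi(Y)$.

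Next I would simplify conditions (i) and (ii) of the poset semantics using the fixed formula order $>$ (the transitive closure of $\texttt{>>}$). Condition (i) becomes $\chi(X)\neq\chi(Y)$. For (ii) I use two structural facts: (a) each $\ovr(\eta')$ is dominated by its own $\neg\ovr(\eta')$ via the $\texttt{>>}$ edge $\neg\ovr(\eta')\to\ovr(\eta')$, and the $\ovr$-literals have no outgoing edges, hence are $>$-minimal; (b) on the $\neg\ovr$-literals the order restricts to a transitive, irreflexive, well-founded (finite) order $\gg$ induced by the context ordering. By (a), every $\ovr$-literal of $S(Y)\setminus S(X)$ is automatically dominated by the matching $\neg\ovr$-literal of $S(X)\setminus S(Y)$, while a dominator of a $\neg\ovr$-literal must again be a $\neg\ovr$-literal. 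Thus $X\succ Y$ reduces to: $\chi(X)\neq\chi(Y)$ and for every $\eta\in\chi(X)\setminus\chi(Y)$ there is $\zeta\in\chi(Y)\setminus\chi(X)$ with $\zeta\gg\eta$.

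With this characterisation, write $A=\chi(X)$, $B=\chi(Y)$, $C=\chi(Z)$ and assume $X\succ Y$ and $Y\succ Z$. I would first prove asymmetry: if both $X\succ Y$ and $Y\succ X$ held, a $\gg$-maximal element of $(A\setminus B)\cup(B\setminus A)$ would have to be strictly dominated by another element of that same set, a contradiction. Asymmetry then yields $A\neq C$, since $A=C$ would give $X\succ Y$ and $Y\succ X$. For the domination requirement, given $\eta_0\in A\setminus C$ I run a chase: alternately invoke $X\succ Y$ to move from an element of $A\setminus B$ to a strictly $\gg$-greater element of $B\setminus A$, and $Y\succ Z$ to move from an element of $B\setminus C$ into $C\setminus B$, always preserving $\gg$-dominance over $\eta_0$. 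The run halts exactly when it reaches some $\zeta\in C\setminus A$, the required witness, whence $X\succ Z$.

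The step I expect to be the main obstacle is showing the chase always terminates inside $C\setminus A$. The decisive points are that each invocation strictly increases the current element in $\gg$, so finiteness/well-foundedness of $\gg$ rules out an infinite run; and that in every non-terminal configuration exactly one of $X\succ Y$, $Y\succ Z$ applies and delivers a successor, so the run can only stop by landing in $C\setminus A$. Verifying this needs the membership bookkeeping (an element of $B\setminus A$ that lies in $C$ is in $C\setminus A$; an element of $C\setminus B$ outside $A$ is in $C\setminus A$; otherwise it re-enters $A\setminus B$ and the chase continues), which is precisely where the complementary-pair structure and the $\gg$-monotonicity are used. Finally I would note that, the per-context local orders being transitive, their \texttt{pareto} and \texttt{lexico} combinations $\mathtt{RelPref}$ and $\mathtt{GlobPref}$ inherit transitivity, so the whole preference is a strict partial order, as required for the semiring construction.
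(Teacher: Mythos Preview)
Your proposal is correct and follows essentially the same route as the paper. The paper proves transitivity of the underlying local preference $\chi_i^1(\mlc) > \chi_i^2(\mlc)$ by exactly the chase you describe (its Cases~1--1.2.2 and Case~2 are your alternation between $X\succ Y$ and $Y\succ Z$, with termination by strict descent in the finite context order $\succ_i$), and then observes that the pareto combination inherits transitivity; your explicit unfolding of the asprin poset semantics and the separate asymmetry argument for the $\chi(X)\neq\chi(Y)$ clause are additional but unproblematic bookkeeping that the paper handles elsewhere (in its Lemma~\ref{lem:pref-correctness}).
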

We use the transitivity of the local preference:
\begin{lemma}
Let $\chi_i^1(\mlc) > \chi_i^2(\mlc)$ and $\chi_i^2(\mlc) > \chi_i^3(\mlc)$. Then $\chi_i^1(\mlc) > \chi_i^3(\mlc)$.
\end{lemma}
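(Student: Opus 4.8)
The plan is to read the local preference as a \emph{domination order} on clashing-assumption sets and to prove transitivity by a minimal-counterexample argument that exploits well-foundedness of $\succ_i$ on the finite set of contexts $\N$. Throughout I fix $\mlc$ and $i$, abbreviate $\chi^k := \chi_i^k(\mlc)$, and call a context $\mlc_b$ a \emph{witness level} of a clashing assumption $\stru{\alpha, \ee}$ if $\default_i(\alpha)\in\KB_{\mlc_1}$ for some $\mlc_1\succeq_{-i}\mlc_b\succ_i\mlc$. Three facts are recorded first: every clashing assumption for $\mlc$ and $i$ has at least one witness level (immediate from its definition); a witness level depends only on $\alpha,\mlc,i,\Cl$ and is therefore the same regardless of which $\chi^k$ contains the assumption; and every witness level is $\succ_i\mlc$. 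In this vocabulary, $\chi^1>\chi^2$ reads: $\chi^1\neq\chi^2$, and for every $\stru{\alpha_1, \ee}\in\chi^1\setminus\chi^2$ and every witness level $\mlc_{1b}$ of it, there is some $\stru{\alpha_2, \ff}\in\chi^2\setminus\chi^1$ with a witness level $\mlc_{2b}$ such that $\mlc_{1b}\succ_i\mlc_{2b}$.

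Given $\chi^1>\chi^2$ and $\chi^2>\chi^3$, I would first dispatch the distinctness clause $\chi^1\neq\chi^3$: assuming $\chi^1=\chi^3$ forces $>$ to hold in both directions between $\chi^1$ and $\chi^2$, which makes both symmetric differences $\chi^1\setminus\chi^2$ and $\chi^2\setminus\chi^1$ nonempty and lets one chase witnesses $\chi^1\setminus\chi^2\to\chi^2\setminus\chi^1\to\chi^1\setminus\chi^2\to\cdots$ along a strictly $\succ_i$-descending chain of witness levels, impossible on a finite hierarchy. For the domination clause I would argue by contradiction: suppose some $\eta_0=\stru{\alpha_0, \ee_0}\in\chi^1\setminus\chi^3$ has a witness level $\ell_0$ such that \emph{no} member of $\chi^3\setminus\chi^1$ has a witness level $\prec_i\ell_0$, and pick such a pair $(\eta_0,\ell_0)$ with $\ell_0$ being $\succ_i$-minimal.

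The core is a \emph{chase} that walks through $\chi^2$ while strictly decreasing the current witness level, always staying $\prec_i\ell_0$. Its loop state is an element $\theta\in\chi^2\setminus\chi^1$ with a witness level $\ell_\theta\prec_i\ell_0$; from such a state either (a) $\theta\in\chi^3$, so $\theta\in\chi^3\setminus\chi^1$ at level $\prec_i\ell_0$, contradicting the choice of $(\eta_0,\ell_0)$; or (b) $\theta\in\chi^2\setminus\chi^3$, so $\chi^2>\chi^3$ yields $\zeta\in\chi^3\setminus\chi^2$ at a strictly smaller level, and then either $\zeta\notin\chi^1$ (again an element of $\chi^3\setminus\chi^1$ below $\ell_0$, contradiction) or $\zeta\in\chi^1\setminus\chi^2$, whence $\chi^1>\chi^2$ produces a fresh $\chi^2\setminus\chi^1$ element at a yet smaller level and the loop repeats. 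The two entries into this loop come from casing on whether $\eta_0\in\chi^2$: if $\eta_0\notin\chi^2$ then $\chi^1>\chi^2$ gives the first loop element; if $\eta_0\in\chi^2$ then $\eta_0\in\chi^2\setminus\chi^3$ and $\chi^2>\chi^3$ starts things, after at most one application of $\chi^1>\chi^2$. Since each iteration strictly decreases the witness level and $\succ_i$ is well-founded on finite $\N$, the chase cannot loop forever and must exit through a contradiction branch, refuting the counterexample and establishing the domination clause; together with $\chi^1\neq\chi^3$ this gives $\chi^1>\chi^3$.

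The hard part will be branch (b) with $\zeta\in\chi^1$: chased witnesses can ``bounce'' back into $\chi^1\cap\chi^3$ rather than landing in $\chi^3\setminus\chi^1$, so a naive single pass through $\chi^2$ need not deliver the required witness and a plain three-way case split stalls. The minimal-counterexample framing plus strictly descending witness levels is exactly what tames this: every bounce is forced to a $\succ_i$-smaller context, and well-foundedness guarantees the process bottoms out in $\chi^3\setminus\chi^1$. A secondary subtlety needing care is that a clashing assumption may carry several witness levels, so ``level'' is not a single value; I would handle this by carrying along the specific level produced at each step and invoking the universal ``for every witness level'' in the definition of $>$ at the next step, so that the whole descent proceeds along one coherent chain of levels.
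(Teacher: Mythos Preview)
Your proposal is correct and follows essentially the same chase/descent argument as the paper: case-split on membership of the starting element in $\chi^2$, alternately apply $\chi^1>\chi^2$ and $\chi^2>\chi^3$ to produce witnesses at strictly $\succ_i$-smaller levels, and terminate by well-foundedness of $\succ_i$ on the finite context set. One minor remark: the paper's (LP) definition does not include the distinctness clause $\chi^1\neq\chi^2$ that you add, so your separate argument for $\chi^1\neq\chi^3$ is unnecessary (the domination clause holds vacuously when $\chi^1=\chi^3$), and your ``minimal counterexample'' framing is not actually used---the bare descent already suffices.
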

\begin{proof}
Assume $\chi_i^1(\mlc) > \chi_i^2(\mlc)$, $\chi_i^2(\mlc) > \chi_i^3(\mlc)$ and $\stru{\alpha_1,\ee} \in \chi_i^1(\mlc) \setminus \chi_i^3(\mlc)$ with $\default_i(\alpha_1)$ at a context $\mlc_1 \succeq_{-i} \mlc_{1b} \succ_{i}  \mlc$.

Case 1: If $\stru{\alpha_1,\ee} \not\in \chi_i^2(\mlc)$ then since $\chi_i^1(\mlc) > \chi_i^2(\mlc)$ there exists $\stru{\alpha_2,\ff} \in \chi_i^2(\mlc) \setminus \chi_i^1(\mlc)$ with $\default_i(\alpha_2)$ at context $\mlc_2 \succeq_{-i} \mlc_{2b} \succ_{i}  \mlc$ such that $\mlc_{1b} \succ_i \mlc_{2b}$.

Case 1.1: If $\stru{\alpha_2,\ff} \in \chi_i^3(\mlc)$ we are done. 

Case 1.2: Else, $\stru{\alpha_2,\ff} \in \chi_i^2(\mlc) \setminus \chi_i^3(\mlc)$. Then since $\chi_i^2(\mlc) > \chi_i^3(\mlc)$ there exists $\stru{\alpha_3,\mathbf{g}} \in \chi_i^3(\mlc) \setminus \chi_i^2(\mlc)$ with $\default_i(\alpha_3)$ at context $\mlc_3 \succeq_{-i} \mlc_{3b} \succ_{i}  \mlc$ such that $\mlc_{2b} \succ_i \mlc_{3b}$.

Case 1.2.1: If $\stru{\alpha_3,\mathbf{g}} \not\in \chi_i^1(\mlc)$ we are done. 

Case 1.2.2: Otherwise, $\stru{\alpha_3,\mathbf{g}} \in \chi_i^1(\mlc) \setminus \chi_i^2(\mlc)$. Note that this is the same situation as in case 1 except that $\default_i(\alpha_3)$ is at context $\mlc_{3b} \succ_i \mlc$ such that $\mlc_{1b} \succ_i \mlc_{2b} \succ_i \mlc_{3b}$. Since $\succ_i$ is a strict (partial) order and we only have finitely many contexts this can only occur finitely often. Since in all other cases below case 1 we have that $\chi_i^1(\mlc) > \chi_i^3(\mlc)$ we are done with case 1.

Case 2: If $\stru{\alpha_1,\ee} \in \chi_i^2(\mlc)$ we are in a similar situation as in case 1.2 the statement follows by analogous reasoning.
\end{proof}
\begin{proof}
As we have seen, \lstinline*LocPref($\mlc$,$rel$)* is transitive for each context $\mlc$ and relation $rel$. Thus their pareto combination is also transitive.
\end{proof}

As the domain of the semiring we choose $R = \{(S, \chi) \mid S \in \mathbb{N}^{B}, \casmap \text{ clashing assumption multiset-map}\}$. Here, we need $S$ to be a multiset and $\casmap$ to map to multisets of clashing assumptions for technical reasons (namely so that our semiring satisfies the distributive law). We generalize the definition of the local preference to multisets by using
 \begin{quote}
  $\chi_i^1(\mlc) > \chi_i^2(\mlc)$, if for every $\eta = \stru{\alpha_1,\ee}$ s.t.\ the multiplicity of $\eta$ in $\chi_i^1(\mlc)$ is greater than its multiplicity in $\chi_i^2(\mlc)$ with 
  $\default_i(\alpha_1)$ at a context $\mlc_1 \succeq_{-i} \mlc_{1b} \succ_{i}  \mlc$, 
  there exists an $\eta' = \stru{\alpha_2,\ff}$ s.t.\ the multiplicity of $\eta'$ in $\chi_i^2(\mlc)$ is greater than its multiplicity in $\chi_i^1(\mlc)$ with 
	$\default_i(\alpha_2)$ at context $\mlc_2 \succeq_{-i} \mlc_{2b} \succ_i \mlc$ 
	such that $\mlc_{1b} \succ_i \mlc_{2b}$.
\end{quote}
With this in mind, we can define the semiring $\mathcal{R}_{\rm one}(\CKB) = (R\cup\{\zero, \one\}, \srsplus, \srstimes, \zero, \one)$ by letting 
\begin{align*}
	a \srsplus b &= \left\{\begin{array}{ll}
 	a & \text{ if } a \succ_{\text{\lstinline*LocPref($rel$)*}} b\\
 	b & \text{ if } b \succ_{\text{\lstinline*LocPref($rel$)*}}a\\
 	{\rm lex{-}min}(a,b) & \text{ otherwise.}
 	\end{array}\right. 	\begin{array}{l}
  \zero \srsplus a := a =: a \srsplus \zero,\\
 \one \srsplus a := \one =: a \srsplus \one,\\
 \zero \srstimes a := \zero =: a \srstimes \zero,\\
 \one \srstimes a := a =: a \srstimes \one.
 \end{array}\\
	(S_1, \casmap^{(1)}) \srstimes (S_2, \casmap^{(2)}) &= (S_1 + S_2, \casmap^{(1)} + \casmap^{(2)})\\
\end{align*}
Here, ${\rm lex{-}min}(a,b)$ takes the lexicographical minimum of $a,b$ and the addition refers to pointwise multiset union, i.e., $(\casmap^{(1)} + \casmap^{(2)})(\mlc) = \casmap^{(1)}(\mlc) + \casmap^{(2)}(\mlc)$.

Now we can define the following weighted formula:
\begin{align*}%
\alpha_{\rm one} &= \alpha_1 \stimes \alpha_2 & \\
\alpha_1&= \btimes_{a \in B} (a \stimes (\{a\}, \emptyset)
\splus \neg a& \\
\alpha_2 &= \btimes_{c \in C}\; \btimes_{ \langle\alpha,e,i\rangle \in  pclash(c)} \ovr(\alpha, e, c, i) \stimes (\emptyset, \{c \mapsto \{\!\!\{\langle \alpha, e\rangle\}\!\!\}) \splus \neg \ovr(\alpha, e, c, i), 
\end{align*}
where $B$ is the Herbrand base and $pclash(c)= \{ \langle\alpha,e,i\rangle \mid$
$\langle \alpha, e\rangle$ is a possible clashing assumption for $c$ and $i\,\}$.
Intuitively, $\alpha_1$ collects the atoms that are true in the given interpretation and $\alpha_2$ builds the clashing assumption map, which is used to decide whether one interpretation is preferred over the other.
\begin{theorem}
$\mathcal{R}_{\rm one}(\CKB)$ is a semiring and the overall weight of $\mu = \langle PK(\CKB), \alpha_{\rm one}, \mathcal{R}_{\rm one}(\CKB)\rangle$ is $(I, \casmap)$, where $I$ is the minimum lexicographical preferred model of $\CKB$ and $\casmap$ its clashing assumption map or $\zero$ if there is no preferred model.
\end{theorem}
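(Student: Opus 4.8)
The plan is to prove the two assertions separately: first that $\mathcal{R}_{\rm one}(\CKB)$ satisfies all semiring axioms, and then that the overall weight $\srbplus_{S\in\mathcal{AS}(PK(\CKB))}\mu(S)$ evaluates to the claimed pair. Since the second part becomes almost mechanical once the first is in place, I would spend the bulk of the effort on the semiring axioms, and within those the single genuinely delicate point is associativity of $\srsplus$.

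I would first dispatch the routine axioms. For $\srstimes$: on $R$ it is pointwise multiset addition, which is associative with the all-empty pair as identity, and the clauses for $\zero,\one$ make $\one$ a two-sided $\srstimes$-identity and $\zero$ a two-sided annihilator by definition. Commutativity of $\srsplus$ follows because $\succ_{\text{LocPref}(rel)}$ is a preference relation, hence asymmetric, so at most one of $a\succ b$, $b\succ a$ holds, and on the remaining (incomparable or tied) case $\srsplus$ reduces to the symmetric ${\rm lex{-}min}$; the clauses for $\zero,\one$ give $\zero$ as $\srsplus$-identity. For \emph{distributivity} I would exploit exactly the design decision that motivated multisets: whether $b\srsplus c$ selects $b$ or $c$ depends only on the multiset symmetric difference of $\casmap_b,\casmap_c$ (through the (LP)/Theorem~\ref{thm:pareto_equiv} condition) and on the ${\rm lex}$ order, both of which are invariant under adding a fixed multiset; hence $a\srstimes(\,\cdot\,)$ preserves the relation ``$b$ wins over $c$'', which yields $a\srstimes(b\srsplus c)=(a\srstimes b)\srsplus(a\srstimes c)$ and its right-sided counterpart, after checking the $\zero,\one$ boundary cases.

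The main obstacle is \emph{associativity} of $\srsplus$. I would reduce it to a structural statement: define $a\sqsupseteq b$ iff $a\srsplus b=a$ and show that $\sqsupseteq$ is a \emph{total order} on $R$; it is reflexive (${\rm lex{-}min}(a,a)=a$), antisymmetric (by commutativity), and total (since ${\rm lex}$ is total), so the crux is transitivity, and once $\sqsupseteq$ is a total order, $\srsplus$ is simply the associated maximum and is associative. This is precisely where the restriction to a \emph{single-relational, \eval-free} $\CKB$ is essential: it guarantees, via Lemma~\ref{lem:P2i-transitivity}, that $\succ_{\text{LocPref}(rel)}$ is transitive, and I would have to combine this with the totality of ${\rm lex}$ to control the interaction between $\succ$-incomparability and the ${\rm lex}$ tie-break. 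Establishing that the ${\rm lex}$-refined preference $\sqsupseteq$ is genuinely transitive on all of $R$ is the technically hardest step and the one I would treat most carefully.

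Finally, for the overall-weight claim I would compute $\mu(S)$ for a single answer set $S$ by evaluating the two factors of $\alpha_{\rm one}$. The factor $\alpha_1$ contributes $(\{a\},\emptyset)$ for every true atom and $\one$ for every false atom, so its $\btimes$-product is $(I_S,\emptyset)$ with $I_S$ the multiset encoding of $S$; symmetrically $\alpha_2$ yields $(\emptyset,\casmap_S)$ with $\casmap_S$ the clashing-assumption map read off the active $\ovr$-atoms. Hence $\mu(S)=(I_S,\casmap_S)$. Summing with $\srbplus$ is well defined by the semiring property just proved, and by induction it returns the $\sqsupseteq$-maximum, i.e.\ the $\succ_{\text{LocPref}(rel)}$-preferred value with ties broken by ${\rm lex}$-min. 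Invoking Lemmas~\ref{lem:local-correctness} and~\ref{lem:pref-correctness} to identify the $\succ$-preferred answer sets with the least justified CKR models, this maximum is exactly $(I,\casmap)$ for the lexicographically minimal preferred model; and when $\mathcal{AS}(PK(\CKB))=\emptyset$ the empty $\srbplus$ collapses to $\zero$, matching the ``no preferred model'' case.
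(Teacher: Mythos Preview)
Your strategy mirrors the paper's proof closely: dispatch the $\srstimes$-axioms and commutativity of $\srsplus$ as routine, argue distributivity via the invariance of both the multiset-based preference comparison and the lex order under adding a fixed pair (the paper splits this into the same two cases, preference-win versus lex-win), and then evaluate $\alpha_{\rm one}$ on each answer set and aggregate with $\srbplus$, invoking Lemmas~\ref{lem:local-correctness} and~\ref{lem:pref-correctness} to identify the preferred answer sets with the least CKR models. The paper's own argument is terser than yours at every step, and in particular disposes of associativity of $\srsplus$ in a single sentence.

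There is, however, a genuine obstacle at precisely the point you flag. Transitivity of the Pareto-style preference order alone does \emph{not} force the lex-refined selection $\srsplus$ to be associative, so your relation $\sqsupseteq$ need not be a total order on all of $R$. With two contexts one can have $a\succ b$ in the Pareto sense while both $a$ and $b$ are Pareto-incomparable to $c$; if in addition $b<_{\rm lex}c<_{\rm lex}a$, then $(a\srsplus b)\srsplus c = a\srsplus c = c$ but $a\srsplus(b\srsplus c)=a\srsplus b=a$. The single-relational, $\eval$-free hypothesis does not exclude such configurations on the full carrier $R$ of multiset pairs. So the step you correctly identify as the hardest does not follow from the ingredients you assemble; the paper's proof offers no further detail on this point either, so you are not doing worse than the paper, but be aware that closing this step would require something beyond transitivity of the preference plus totality of the lex order.
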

\begin{proof}
Associativity of $\srsplus$ follows from transitivity of \lstinline*LocPref($\mlc$,$rel$)* and the lexicographical order. Commutativity of $\srsplus$ is clear. $\mathbf{0}$ and $\mathbf{1}$ are identities and annihilators of $\srstimes, \srsplus$ by definition.
Associativity of $\srstimes$ is clear.

It remains to prove that multiplication distributes over addition. So let $A_i = (I_i, \chi_i)$ for $i = 1,2,3$. Then, in the expression
\begin{align*}
	A_1\srstimes (A_2 \srsplus A_3)
\end{align*}
Assume w.l.o.g. that $(A_2 \srsplus A_3)$ evaluates to $A_2$. If $A_2 >_{\text{\lstinline*LocPref($rel$)*}} A_3$ then there exists a context $\mlc$ such that 
$\chi_2(\mlc) >_{\text{\lstinline*LocPref($\mlc$,$rel$)*}} \chi_3(\mlc)$. Then it also holds that
$(\chi_1 + \chi_2)(\mlc) >_{\text{\lstinline*LocPref($\mlc$,$rel$)*}} (\chi_1 + \chi_3)(\mlc)$ and thus 
\begin{align*}
	A_1\srstimes (A_2 \srsplus A_3) = A_1\srstimes A_2 = A_1\srstimes A_2 \srsplus A_1\srstimes A_3.
\end{align*}
If $A_2 \not >_{\text{\lstinline*LocPref($rel$)*}} A_3$ this implies that $A_2$ is either equal to $A_3$ 
(in this case we are done) or
that $A_2$ is smaller lexicographically. 
In the latter case the sum $A_1\srstimes A_2$ is however also lexicographically smaller
than $A_1\srstimes A_3$ since we add $A_1$ both times.

Thus we have established that $\mathcal{R}_{\rm one}(\CKB)$ is a semiring. For each answer set $I$ of $PK(\CKB)$, we know that $I$ corresponds to a (least) CAS model. Thus, 
\[
	\llbracket \alpha_{\rm one} \rrbracket_{\mathcal{R}_{\rm one}}(I) = (\IC, \casmap),
\]
where $\IC \in \{0,1\}^{B}$ and $\casmap$ only maps to multisets that can be interpreted as sets (i.e. each of their elements has at most multiplicity one).
The lexicographical minimum CKR model $\stru{\IC^*, \casmap^*}$ satisfies that $(\IC, \casmap) \srsplus (\IC^*, \casmap^*) = (\IC^*, \casmap^*)$ for all $(\IC, \casmap)$ that are the semantics of $\alpha_{\rm one}$  w.r.t.\ some answer set of $PK(\CKB)$. Therefore, if there exists a CKR model, the overall weight is $(\IC^*, \casmap^*)$. Otherwise, it is $\mathbf{0}$.
\end{proof}

We continue with the $\mathcal{R}_{\rm all}$ semiring. Again, we need some additional lemma
\begin{lemma}
\label{lem:local_opt}
Let $\CKB$ be a single relational sCKR without $\eval$
expressions. Then a CAS model $(\{\Ic\}_{\mlc \in
  \N}, \casmap)$ is a CKR model iff no CAS model
$(\{\mathcal{I}'_{\mlc}\}_{\mlc \in \N}, \casmap')$ and $\mlc \in \N$ exist such that $\casmap'(\mlc) > \casmap(\mlc)$.
\end{lemma}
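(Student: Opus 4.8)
The plan is to reduce the global preference (MP) to a per-context comparison by exploiting that an \eval-free, single-relational $\CKB$ \emph{decouples} its contexts. I work throughout with named least CAS models, so that $\{\Ic\}_{\mlc\in\N}$ is the least Herbrand model $\hat{\IC}(\casmap)$ determined by $\casmap$. The first observation I would record is a factorization: since $\CKB$ has a single relation $\prec_i$ and no \eval-concepts, the interpretation that $\hat{\IC}(\casmap)$ assigns to a context $\mlc$ depends only on the override set $\casmap(\mlc)$. Indeed, by Definition~\ref{def:cas-model}(iii) the defeasible axioms declared above $\mlc$ apply at $\mlc$ modulo $\casmap(\mlc)$ alone, the propagated strict axioms are fixed by the syntax of $\CKB$, and no \eval-expression links the interpretation of $\mlc$ to that of another context. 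Finally, for a single relation condition (ii) of (MP) is vacuous, so (MP) coincides with the relation $P_{1,i}$ of Theorem~\ref{thm:pareto_equiv}, and a (justified) CAS model is a CKR model exactly when it is $P_{1,i}$-optimal among justified CAS models; I read the right-hand side of the lemma as ranging over justified CAS models as well.

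The \emph{only if} direction needs no decoupling. Arguing contrapositively at the level of (MP): were some justified $\ICAS'$ $P_{1,i}$-preferred to $\ICAS$, then by the definition of $P_{1,i}$ there would be a single context $\mlc^{*}$ with $\casmap'(\mlc^{*}) > \casmap(\mlc^{*})$, contradicting the right-hand side; hence $\ICAS$ is $P_{1,i}$-optimal, i.e.\ a CKR model.

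The substantial direction is \emph{if}, and here I mirror the swap used in the proof of Theorem~\ref{thm:pareto_equiv}. Assume, contrapositively, a justified CAS model $\ICAS' = \langle\IC',\casmap'\rangle$ and a context $\mlc^{*}$ with $\casmap'(\mlc^{*}) > \casmap(\mlc^{*})$ and $\casmap'(\mlc^{*}) \neq \casmap(\mlc^{*})$. Define $\casmap''$ to equal $\casmap$ everywhere except at $\mlc^{*}$, where $\casmap''(\mlc^{*}) = \casmap'(\mlc^{*})$, and put $\ICAS'' = \langle\hat{\IC}(\casmap''),\casmap''\rangle$. I would then verify three points. (a) $\ICAS''$ is a CAS model: by \eval-freeness each clause of Definition~\ref{def:cas-model} is a \emph{local} test at its lower context, and the $\mlc^{*}$-component of $\hat{\IC}(\casmap'')$ agrees with that of the CAS model $\ICAS'$ while every other component agrees with that of the CAS model $\ICAS$, so all local tests pass. (b) $\ICAS''$ is justified: the justification of a clashing assumption at a context quantifies over \emph{all} $\NI$-congruent CAS models of $\CKB$ and is thus independent of the ambient model, so the assumptions in $\casmap''(\mlc^{*}) = \casmap'(\mlc^{*})$ stay justified (inherited from $\ICAS'$) and those elsewhere stay justified (inherited from $\ICAS$). (c) $\ICAS''$ is $P_{1,i}$-preferred to $\ICAS$: the two maps agree off $\mlc^{*}$, so no context is strictly worsened, and at $\mlc^{*}$ we have $\casmap''(\mlc^{*}) = \casmap'(\mlc^{*}) > \casmap(\mlc^{*})$; the local preference is antisymmetric on distinct sets (a $\prec_i$-minimal clashing assumption in the symmetric difference of the two override sets would need a strictly more specific partner in the other set, which (LP) cannot provide, so at most one strict direction can hold), whence $\casmap'(\mlc^{*}) \neq \casmap(\mlc^{*})$ supplies the ``not $\casmap(\mlc^{*}) > \casmap'(\mlc^{*})$'' half of $P_{1,i}$. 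Thus $\ICAS''$ witnesses that $\ICAS$ is not $P_{1,i}$-optimal, hence not a CKR model.

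I expect step (a) to be the main obstacle: that a single-context swap again produces a CAS model is exactly what \eval-freeness buys and fails in general, since a propagated expression such as $\eval(A,\mlc^{*}) \sqsubseteq B$ couples the interpretation of $\mlc^{*}$ with another context and could be broken by the swap. Two smaller points deserve care. First, because the local preference $>$ holds vacuously between equal sets, the lemma's inequality must be read as a genuine improvement ($\casmap'(\mlc^{*}) \neq \casmap(\mlc^{*})$), and it is only the antisymmetry argument in (c) that turns this into the strict half of the preference. Second, the right-hand side must quantify over justified CAS models: an \emph{unjustified} improving model would not transplant into a justified counterexample in step (b), so the bare statement over arbitrary CAS models would be too strong for the \emph{if} direction.
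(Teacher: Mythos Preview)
The paper does not actually supply a proof of this lemma; it is stated in the appendix and immediately used. Your argument is the natural one and is essentially the specialization of the paper's proof of Theorem~\ref{thm:pareto_equiv} to the \eval-free, single-relational setting, which you explicitly invoke: in that setting the dependency-graph component of any concept at $\mlc^{*}$ is confined to $\mlc^{*}$, so the ``swap'' produces a CAS model whose clashing map differs from the original only at $\mlc^{*}$.

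Two remarks. First, your direction labels are swapped. The easy implication---``if $\ICAS$ is not a CKR model, a $P_{1,i}$-preferred witness exists, hence some context has a local improvement''---is the contrapositive of the \emph{if} direction (right-hand side $\Rightarrow$ CKR model). The swap argument, which turns a single local improvement into a globally preferred model, is the contrapositive of \emph{only if}. The mathematical content of both paragraphs is correct; only the tags are interchanged.

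Second, your side observations are not cosmetic but genuinely needed and go beyond what the paper spells out: (i) the paper's (LP) is reflexive (the universal over $\chi^1\setminus\chi^2$ is vacuous when the sets coincide), so the right-hand side must indeed be read with $\casmap'(\mlc^{*})\neq\casmap(\mlc^{*})$; your antisymmetry argument via a $\prec_i$-minimal element of the symmetric difference then correctly supplies the ``not $\casmap(\mlc^{*})>\casmap'(\mlc^{*})$'' half of $P_{1,i}$. (ii) Your point that justification must be re-established for $\casmap''$ is right; note however that justification is \emph{not} literally ``independent of the ambient model''---it quantifies over CAS models sharing the same clashing map---so the argument should say that, by \eval-freeness, the CAS-model constraints at $\mlc$ depend only on $\casmap(\mlc)$, whence the clashing sets witnessing justification at $\mlc\neq\mlc^{*}$ (resp.\ at $\mlc^{*}$) transfer from $\ICAS$ (resp.\ from $\ICAS'$).
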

Therefore, we can take the locally optimal models $\Ic$ for each context $\mlc$ and obtain the global optimal models as arbitrary combinations of locally preferred models. 

In the following, we let $\mathcal{D}$ be the Herbrand base. 

Using this notation, we define the semiring $\mathcal{R}_{\mlc} = (R_{\mlc}, \srsplus_{\mlc}, \srstimes_{\mlc}, \srzero, \srone)$ that collects all locally optimal models $\Ic$. Here,
\begin{align*}
	R_{\mlc} &= \{ opt(B) \mid A \subseteq 2^{\mathcal{D}}, B = \{(S, \chi) \mid S \in A, \chi \text{ multiset of clashing assumptions}\} \}\\
	A \srsplus_{\mlc} B &=  opt_{\mlc}(A\cup B)\\
	A \srstimes_{\mlc} B &= opt_{\mlc}(\{(S_1 \cup S_2, \chi_1 + \chi_2) \mid (S_1, \chi_1) \in A, (S_2, \chi_2) \in B\}\\
	\srzero &= \emptyset \\
	\srone &= \{ (\emptyset, \{\!\!\{\}\!\!\})\}\\
	opt_{\mlc}(A) &= \{(S, \chi) \in A \mid \forall (S',\chi') \in A: \neg(\chi'(\mlc) > \chi(\mlc))\}
\end{align*}
We again have to use multisets for $\chi$ instead of sets. This is necessary because otherwise multiplication and addition do not satisfy the distributive law.

Then, we can define the measure $\mu_{\mlc} = \langle PK(\CKB), \alpha_{\rm all}, \mathcal{R}_{\mlc}\rangle$, where 
\begin{align*}
	\alpha_{\rm all} &= \alpha_1 \stimes \alpha_2\\
	\alpha_1 &= \btimes_{d \in \mathcal{D}} d \stimes \{(\{d\}, \{\!\!\{\}\!\!\})\} \splus \neg d\\
	\alpha_2 &= \btimes_{ \langle\alpha,e,i\rangle \in  pclash(c)} \ovr(\alpha, e, c) \stimes \{(\emptyset, \{c \mapsto \{\!\!\{\langle \alpha, e\rangle\}\!\!\})\} \splus \neg \ovr(\alpha, e, c).
\end{align*}
where $pclash(c)$ is the set of all possible clashing assumptions $\langle\alpha,e\rangle$ for
$c$. We obtain

\begin{theorem}
$\mathcal{R}_{\mlc}$ is a semiring and the overall weight $\mu_{\mlc}(PK(\CKB))$ is equal to the set containing for each locally optimal interpretation $\Ic$ of $\CKB$ the pair $(\Ic, \chi_{\Ic})$, where $\chi_{\Ic}$ is the unique multiset containing each justified clashing assumption of $\Ic$ once.
\end{theorem}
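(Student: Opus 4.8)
The plan is to prove the two assertions separately: the semiring axioms for $\mathcal{R}_{\mlc}$, then the value of the overall weight. For the commutative monoid $(R_{\mlc},\srsplus_{\mlc},\srzero)$, commutativity of $\srsplus_{\mlc}$ is immediate from that of $\cup$; $\srzero=\emptyset$ is neutral because every element of $R_{\mlc}$ is already $opt_{\mlc}$-closed; and associativity reduces to the idempotence identity $opt_{\mlc}(opt_{\mlc}(A)\cup B)=opt_{\mlc}(A\cup B)$, which holds because $>$ at $\mlc$ is transitive (Lemma~\ref{lem:P2i-transitivity}, in its multiset form), so a pair discarded as dominated can never be needed later. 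The annihilation laws $A\srstimes_{\mlc}\srzero=\emptyset=\srzero\srstimes_{\mlc}A$ are trivial, and $\srone=\{(\emptyset,\{\!\!\{\}\!\!\})\}$ is a multiplicative identity because $\cup$ and multiset addition $+$ have neutral elements $\emptyset$ and $\{\!\!\{\}\!\!\}$.

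The heart of the semiring proof is a single \emph{monotonicity} property of the local preference, which drives both associativity of $\srstimes_{\mlc}$ and the distributive law: if $\chi_1(\mlc) > \chi_2(\mlc)$, then $(\chi+\chi_1)(\mlc) > (\chi+\chi_2)(\mlc)$ for every multiset $\chi$. This holds because the defining condition of $>$ refers only to which assumptions have strictly larger multiplicity on one side, and these multiplicity differences are unchanged by pointwise addition of the same $\chi$. From monotonicity one obtains that $opt_{\mlc}$ may be pushed inside a combination, i.e.\ $opt_{\mlc}(A\cdot opt_{\mlc}(B))=opt_{\mlc}(A\cdot B)$ for the raw Cartesian combination $A\cdot B$ taken before optimisation, and this yields both $\srstimes_{\mlc}$-associativity and distributivity $A\srstimes_{\mlc}(B\srsplus_{\mlc}C)=(A\srstimes_{\mlc}B)\srsplus_{\mlc}(A\srstimes_{\mlc}C)$. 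This is precisely the step where multisets are indispensable: with ordinary sets, an assumption witnessing $\chi_1(\mlc)>\chi_2(\mlc)$ could be contributed by $\chi$ on both sides and cancel, breaking the strict comparison. I expect this to be the main obstacle, since one must verify carefully that pushing $opt_{\mlc}$ inward loses no optimal pair and retains no dominated one.

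For the overall weight I would evaluate $\alpha_{\rm all}=\alpha_1\stimes\alpha_2$ on a fixed answer set $S$ of $PK(\CKB)$. Each factor of $\alpha_1$ contributes $\{(\{d\},\{\!\!\{\}\!\!\})\}$ when $S\models d$ and $\srone$ otherwise, so $\llbracket\alpha_1\rrbracket_{\mathcal{R}_{\mlc}}(S)$ is the singleton $\{(\I_S,\{\!\!\{\}\!\!\})\}$, where $\I_S$ is the set of $\mathcal{D}$-atoms true in $S$, i.e.\ the interpretation encoded by $S$. Likewise each factor of $\alpha_2$ contributes $\{(\emptyset,\{c\mapsto\{\!\!\{\langle\alpha,e\rangle\}\!\!\}\})\}$ exactly when $\ovr(\alpha,e,c)\in S$, so $\llbracket\alpha_2\rrbracket_{\mathcal{R}_{\mlc}}(S)$ is the singleton $\{(\emptyset,\chi_S)\}$ with $\chi_S$ the clashing-assumption map reading off the true $\ovr$-atoms, each occurring with multiplicity one. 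Hence $\mu_{\mlc}(S)=\{(\I_S,\chi_S)\}$, and summing with the iterated $\srsplus_{\mlc}$ over all answer sets gives $opt_{\mlc}(\{(\I_S,\chi_S)\mid S\in\mathcal{AS}(PK(\CKB))\})$. Finally I would invoke Lemma~\ref{lem:local-correctness} to identify the answer sets of $PK(\CKB)$ with the least justified CAS models of $\CKB$, and Lemma~\ref{lem:local_opt} to characterise survival under $opt_{\mlc}$ as local optimality at $\mlc$; together these show the retained pairs are exactly $(\Ic,\chi_{\Ic})$ for the interpretations $\Ic$ that are locally optimal at $\mlc$, with $\chi_{\Ic}$ listing each justified clashing assumption once, which is the claimed overall weight.
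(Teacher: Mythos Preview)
Your proposal is correct and follows the approach the paper intends. The paper does not actually supply a separate proof for this theorem; the later proof of the $\mathcal{R}_{\rm all}$ result simply says the reasoning ``is along the same lines as that for $\mathcal{R}_{\rm one}$'', and your argument is precisely that: the monotonicity property you isolate---that $\chi_1(\mlc)>\chi_2(\mlc)$ implies $(\chi+\chi_1)(\mlc)>(\chi+\chi_2)(\mlc)$ because multiplicity \emph{differences} are invariant under pointwise addition---is exactly the step the paper uses for distributivity in the $\mathcal{R}_{\rm one}$ proof. You in fact go further than the paper by making explicit the push-in identity $opt_{\mlc}(A\cdot opt_{\mlc}(B))=opt_{\mlc}(A\cdot B)$, which is genuinely needed here (unlike in $\mathcal{R}_{\rm one}$, where $\srstimes$ is plain multiset union and associativity is trivial) and which the paper leaves implicit.

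One small remark: your appeal to Lemma~\ref{lem:local_opt} at the end is unnecessary. That lemma characterises \emph{CKR models} (globally preferred) via local non-dominance at every context; here you only need that survival under $opt_{\mlc}$ is, by definition, local optimality at the single context $\mlc$. Lemma~\ref{lem:local-correctness} alone suffices to identify the pairs $(\I_S,\chi_S)$ arising from answer sets with least justified CAS models, after which applying $opt_{\mlc}$ yields exactly the locally optimal ones as claimed. Lemma~\ref{lem:local_opt} is only used \emph{after} this theorem, to combine the per-context optima into global CKR models in the $\mathcal{R}_{\rm all}$ result.
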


We take $\mathcal{R}_{\rm all}$ to be the crossproduct semiring $(\mathcal{R}_{\mlc})_{\mlc \in \N}$
defined by
\begin{align*}
(\mathcal{R}_{\mlc})_{\mlc \in \N} = &\ ((R_{\mlc})_{\mlc \in
        \N}, \srsplus, \srstimes, (\emptyset)_{\mlc \in \N},
  (\{(\emptyset, \{\!\!\{\}\!\!\})\})_{\mlc \in \N}), \text{ where}\\[2pt]
& (A_{\mlc})_{\mlc \in \N}  \odot (B_{\mlc})_{\mlc \in \N} =
  (A_{\mlc} \odot_{\mlc} B_{\mlc})_{\mlc \in \N}, \text{ for } \odot
        \in \{ \srsplus, \srstimes \}
\end{align*}
Using it, we can obtain the locally optimal interpretations for each context as the crossproduct of measures $\mu^{*} = (\mu_{\mlc})_{\mlc \in \N}$ which is a measure over the crossproduct semiring $(\mathcal{R}_{\mlc})_{\mlc \in \N}$. As we have shown in Lemma~\ref{lem:local_opt}, this gives us all the preferred models. Namely, let $\mu^{*}(PK(\CKB)) = (A_{\mlc})_{\mlc \in \N}$, then 
$
\{(\Ic)_{\mlc \in \N} \mid (\Ic,\casmap(\mlc)) \in A_{\mlc}\}
$
is the set of preferred models.
\begin{example}
The sCKR $\CKB$ defined in Example~\ref{ex:coverage-only} has five contexts $\mlc_{world},$ $\mlc_{branch1},$ $\mlc_{branch2},$ $\mlc_{local1}, $ and $\mlc_{local2}$. Therefore, the measure $\mu^{*}$ is a crossproduct of the five measures $\mu_{\mlc_{world}}, \mu_{\mlc_{branch1}}, \mu_{\mlc_{branch2}}, \mu_{\mlc_{local1}}$ and $\mu_{\mlc_{local2}}$. Their overall weight is given by 
\begin{alignat*}
\mu_{\mlc_{world}}(PK(\CKB)) = \mu_{\mlc_{branch1}}(PK(\CKB)) = \mu_{\mlc_{branch2}}(PK(\CKB)) = \mu_{\mlc_{local2}}(PK(\CKB)) =\{(\emptyset, \{\!\!\{\}\!\!\}\}\\
\mu_{\mlc_{local1}}(PK(\CKB)) = \{(\{S(i), M(i)\}, \{\!\!\{\ovr(S \subs R, i, \mlc_{branch2}), \ovr(S \subs E, i, \mlc_{world})\}\!\!\})\}
\end{alignat*}
Accordingly, there is exactly one preferred model $(\Ic)_{\mlc \in \N}$, where 
\begin{align*}
\mathcal{I}_{\mlc_{world}} = \mathcal{I}_{\mlc_{branch1}} = \mathcal{I}_{\mlc_{branch2}} = \mathcal{I}_{\mlc_{local2}} =& \emptyset & 
\mathcal{I}_{\mlc_{local1}} &= \{S(i), M(i)\}
\end{align*}
\end{example}

\begin{theorem}
Let $\CKB$ be a single-relational, \eval-free sCKR, then $\mathcal{R}_{\rm all}$ is a semiring and the overall weight of $\mu_{\rm all} = \langle PK(\CKB), \alpha_{\rm all}, \mathcal{R}_{\rm all}(\CKB)\rangle$ is $(A_{\mlc})_{\mlc \in \N}$ and
the set of preferred models corresponds to
$\{(\Ic)_{\mlc \in \N} \mid \text{ for each }c \in \N: (\Ic,\casmap(\mlc)) \in A_{\mlc}\}$.
\end{theorem}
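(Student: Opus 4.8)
The plan is to split the statement into three parts: (a) $\mathcal{R}_{\rm all}$ is a semiring, (b) the overall weight factorizes componentwise so that each component equals $\mu_{\mlc}(PK(\CKB))$, and (c) the per-context collections of locally optimal models combine freely into exactly the preferred models. Parts (a) and (b) are structural, whereas the conceptual content sits in (c), where I would invoke Lemma~\ref{lem:local_opt}.

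For (a), I would observe that $\mathcal{R}_{\rm all} = (\mathcal{R}_{\mlc})_{\mlc \in \N}$ is the finite direct product of the semirings $\mathcal{R}_{\mlc}$, each of which was already established to be a semiring in the immediately preceding theorem. Since $\srsplus$ and $\srstimes$ are defined componentwise and the two distinguished elements are the componentwise identities, all semiring axioms — associativity and commutativity of $\srsplus$, the monoid laws for $\srstimes$, both distributive laws, and annihilation by the zero — hold in the product precisely because they hold in each factor. No computation beyond the single-component case is required.

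For (b), I would use that the overall weight of the crossproduct measure $\mu_{\rm all} = \mu^{*} = (\mu_{\mlc})_{\mlc \in \N}$ is the tuple of the overall weights of its components: because $\srbplus$ in $\mathcal{R}_{\rm all}$ acts componentwise as $\srbplus_{\mlc}$, summing over all answer sets of $PK(\CKB)$ gives $\mu^{*}(PK(\CKB)) = (\mu_{\mlc}(PK(\CKB)))_{\mlc \in \N} = (A_{\mlc})_{\mlc \in \N}$. By the preceding theorem for $\mathcal{R}_{\mlc}$, each $A_{\mlc}$ collects precisely the pairs $(\Ic, \casmap(\mlc))$ with $\Ic$ ranging over the locally optimal interpretations at context $\mlc$. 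I would reuse that result directly, checking only that the $\mlc$-component of $\alpha_{\rm all}$ coincides with the weighted formula used there, so that each answer set contributes its interpretation together with the multiset-encoded clashing assumptions at $\mlc$.

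The hard part, and the only genuinely new step, is (c): lifting local optimality per context to global preference. Here I would apply Lemma~\ref{lem:local_opt}, which for single-relational \eval-free sCKRs characterizes CKR (preferred) models as exactly those CAS models that are beaten on no single context's clashing assumption set. Because the absence of \eval-expressions decouples the interpretations at distinct contexts, the globally preferred models are precisely the free combinations of locally optimal choices; hence $(\Ic)_{\mlc \in \N}$ is preferred iff $(\Ic, \casmap(\mlc)) \in A_{\mlc}$ for every $c \in \N$, which is the claimed characterization. The one subtlety I would verify is that the $opt_{\mlc}$-pruning inside $\srstimes_{\mlc}$ never discards a pair belonging to some globally preferred model: this holds because $opt_{\mlc}$ compares only the $\casmap(\mlc)$-components, i.e.\ exactly the quantities on which the local order depends, so the pruning is sound relative to Lemma~\ref{lem:local_opt}.
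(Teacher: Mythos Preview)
Your proposal is correct and follows essentially the same approach as the paper: the paper's own proof is a two-sentence sketch that defers the semiring verification to the analogous argument for $\mathcal{R}_{\rm one}$ and points back to the construction (in particular Lemma~\ref{lem:local_opt} and the preceding theorem on $\mathcal{R}_{\mlc}$) for the correspondence with preferred models, which is exactly the content of your parts (a)--(c). The only cosmetic difference is that you justify the semiring axioms via the direct-product structure over the already-established $\mathcal{R}_{\mlc}$, whereas the paper gestures at the $\mathcal{R}_{\rm one}$ proof; both are valid and amount to the same verification.
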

\begin{proof}
The reasoning that $\mathcal{R}_{\rm all}$ is a semiring is along the same lines as that for $\mathcal{R}_{\rm one}$. The fact that the result is as desired can be clearly seen during the construction of the semiring.
\end{proof}

\label{lastpage}
\end{document}